\DeclareMathOperator*{\argmax}{arg\,max}
\DeclareMathOperator*{\argmin}{arg\,min}
\DeclareMathOperator*{\arginf}{arg\,inf}
\theoremstyle{plain}
\newtheorem{thm}{Theorem}
\newtheorem{lem}{Lemma}
\newtheorem{prop}{Proposition}
\theoremstyle{definition}
\newtheorem{defn}{Definition}
\title{Distributional Reinforcement Learning via Moment Matching}
\author{
Thanh Tang Nguyen, Sunil Gupta, Svetha Venkatesh 
}
\begin{document}

\maketitle

\begin{abstract}
We consider the problem of learning a set of probability distributions from the empirical Bellman dynamics in distributional reinforcement learning (RL), a class of state-of-the-art methods that estimate the distribution, as opposed to only the expectation, of the total return. We formulate a method that learns a finite set of statistics from each return distribution via neural networks, as in \cite{DBLP:conf/icml/BellemareDM17,DBLP:conf/aaai/DabneyRBM18}. Existing distributional RL methods however constrain the learned statistics to \textit{predefined} functional forms of the return distribution which is both restrictive in representation and difficult in maintaining the predefined statistics. Instead, we learn \textit{unrestricted} statistics, i.e., deterministic (pseudo-)samples, of the return distribution by leveraging a technique from hypothesis testing known as maximum mean discrepancy (MMD), which leads to a simpler objective amenable to backpropagation.  Our method can be interpreted as implicitly matching all orders of moments between a return distribution and its Bellman target. We establish sufficient conditions for the contraction of the distributional Bellman operator and provide finite-sample analysis for the deterministic samples in distribution approximation. Experiments on the suite of Atari games show that our method outperforms the distributional RL baselines and sets a new record in the Atari games for non-distributed agents.

\end{abstract}
\section{Introduction}

A fundamental aspect in reinforcement learning (RL) is the value of an action in a state which is formulated as the expected value of the \textit{return}, i.e., the expected value of the discounted sum of rewards when the agent follows a policy starting in that state and executes that action \cite{sutton1998introduction}.  Learning this expected action-value via Bellman's equation \citep{Bellman1957} is central to value-based RL such as temporal-difference (TD) learning \citep{DBLP:journals/ml/Sutton88}, SARSA \citep{Rummery94on-lineq-learning}, and Q-learning \citep{Watkins92q-learning}. Recently, however, approaches known as \textit{distributional} RL that aim at learning the distribution of the return have shown to be highly effective in practice \citep{DBLP:conf/uai/MorimuraSKHT10,DBLP:conf/icml/MorimuraSKHT10,DBLP:conf/icml/BellemareDM17,DBLP:conf/aaai/DabneyRBM18,DBLP:conf/icml/DabneyOSM18,yang2019fully}.

Despite many algorithmic variants with impressive practical performance \citep{DBLP:conf/icml/BellemareDM17,DBLP:conf/aaai/DabneyRBM18,DBLP:conf/icml/DabneyOSM18,yang2019fully}, they all share the same characteristic that they explicitly learn a set of statistics of \textit{predefined} functional forms to approximate a return distribution. Using predefined statistics can limit the learning due to the statistic constraints it imposes and the difficulty to maintain such predefined statistics. In this paper, we propose to address these limitations by instead learning a set of \textit{unrestricted} statistics, i.e., deterministic (pseudo-)samples, of a return distribution that can be evolved into any functional form. We observe that the deterministic samples can be deterministically learned to simulate a return distribution by utilizing an idea from statistical hypothesis testing known as maximum mean discrepancy (MMD). This novel perspective requires a careful design of algorithm and a further understanding of distributional RL associated with MMD. 

Leveraging this perspective, we are able to provide a novel algorithm to eschew the predefined statistic limitations in distributional RL and give theoretical understanding of distributional RL within this perspective. Our approach is also conceptually amenable for natural extension along the lines of recent modelling improvements to distributional RL brought by Implicit Quantile Networks (IQN) \citep{DBLP:conf/icml/DabneyOSM18}, and Fully parameterized Quantile Function (FQF) \citep{yang2019fully}.
Specifically, our key contributions are 

\begin{enumerate}
    \item We provide a novel approach to distributional RL using deterministic samples via MMD that addresses the limitations in the existing distributional RL that use predefined statistics;

    \item We provide theoretical understanding of distributional RL within our framework, specifically the contraction property of the distributional Bellman operator and the non-asymptotic convergence of the approximate distribution from deterministic samples; 
    
    \item We demonstrate the practical effectiveness of our framework in both tabular RL and large-scale experiments where 
    our method outperforms the standard distributional RL methods and even establishes a new record in the Atari games for non-distributed agents. 
\end{enumerate}

\noindent \textbf{Outline}. After carefully reviewing relevant background and related works, and discussing their limitations, we present our novel algorithmic approach to address these issues followed by theoretical analysis. We then present the experiments to confirm the effectiveness of our approach empirically and conclude our work. 


\section{Background and Related Works}

\subsection{Expected RL}
In a standard RL setting, an agent interacts with an environment via a Markov Decision Process $(\mathcal{S}, \mathcal{A}, R, P, \gamma)$ \citep{puterman2014markov} where $\mathcal{S}$ and $\mathcal{A}$ denote state and action spaces, resp., $\mathcal{R}$ the reward measure, $P(\cdot| s,a)$ the transition kernel measure, and $\gamma \in [0,1)$ a discount factor. A policy $\pi(\cdot|s)$ maps a state to a distribution over the action space. 

Given a policy $\pi$, the discounted sum of future rewards following policy $\pi$ is the random variable 
\begin{align}
    Z^{\pi}(s,a) = \sum_{t=0}^{\infty} \gamma^{t} R(s_t, a_t),
    \label{eq:return_rv}
\end{align}
where $s_0 = s, a_0 = a, s_t \sim P(\cdot | s_{t-1}, a_{t-1})$, $a_t \sim \pi(\cdot | s_t)$, and $R(s_t,a_t) \sim \mathcal{R}(\cdot|s_t, a_t)$. The goal in expected RL is to find an optimal policy $\pi^*$ that maximizes the action-value function $Q^{\pi}(s,a) := \mathbb{E}[Z^{\pi}(s,a)]$. A common approach is to find the unique fixed point $Q^{*} = Q^{\pi^{*}}$ of the Bellman optimality operator \citep{Bellman1957} $T: \mathbb{R}^{\mathcal{S} \times \mathcal{A}} \rightarrow \mathbb{R}^{\mathcal{S} \times \mathcal{A}}$ defined by 
\begin{align*}
    T Q(s,a) := \mathbb{E}[R(s,a)] + \gamma \mathbb{E}_{P} [\max_{a'} Q(s', a')], \forall (s,a).
\end{align*}
A standard approach to this end is Q-learning \citep{Watkins92q-learning} which maintains an estimate $Q_{\theta}$ of the optimal action-value function $Q^{*}$ and iteratively improves the estimation via the Bellman backup 
\begin{align*}
    Q_{\theta}(s,a) \leftarrow \mathbb{E}[R(s,a)] + \gamma \mathbb{E}_{P} [\max_{a'} Q_{\theta}(s', a')].
\end{align*}
Deep Q-Network (DQN) \citep{mnih2015human} achieves human-level performance on the Atari benchmark by leveraging a convolutional neural network to represent $Q_{\theta}$ while using a replay buffer and a target network to update $Q_{\theta}$.

\subsection{Additional Notations} 
Let $\mathcal{X} \subseteq \mathbb{R}^d$ be an open set. Let $\mathcal{P}(\mathcal{X})$ be the set of Borel probability measures on $\mathcal{X}$. Let $\mathcal{P}(\mathcal{X})^{\mathcal{S} \times \mathcal{A}}$ be the Cartesian product of $\mathcal{P}(\mathcal{X})$ indexed by $\mathcal{S} \times \mathcal{A}$. For any $\alpha \geq 0$, let $\mathcal{P}_{\alpha}(\mathcal{X}) := \{p \in \mathcal{P}(\mathcal{X}): \int_{\mathcal{X}} \|x\|^{\alpha} p(dx) < \infty  \}$. When $d=1$, let $m_n(p) := \int_{\mathcal{X}} x^n p(dx)$ be the $n$-th order moment of a distribution $p \in \mathcal{P}(\mathcal{X})$, and let  
\begin{align*}
    \mathcal{P}_{*}(\mathcal{X}) = \bigg\{p \in \mathcal{P}(\mathcal{X}):  \limsup_{n \rightarrow \infty} \frac{|m_n(p)|^{1/n}}{n} =0 \bigg\}. 
\end{align*}
Note that if $\mathcal{X}$ is a bounded domain in $\mathbb{R}$, then $\mathcal{P}_{*}(\mathcal{X}) = \mathcal{P}(\mathcal{X})$. Denote by $\delta_z$ the Dirac measure, i.e., the point mass, at $z$. Denote by $\Delta_n$ the $n$-dimensional simplex. 


\subsection{Distributional RL}
Instead of estimating only the expectation $Q^{\pi}$ of $Z^{\pi}$, distributional RL methods \citep{DBLP:conf/icml/BellemareDM17,DBLP:conf/aaai/DabneyRBM18,DBLP:conf/icml/DabneyOSM18,DBLP:conf/aistats/RowlandBDMT18,yang2019fully} explicitly estimate the return distribution $\mu^{\pi}=\text{law}(Z^{\pi})$ as an auxiliary task. Empirically, this auxiliary task has been shown to significantly improve the performance in the Atari benchmark. Theoretically, in the policy evaluation setting, the distributional version of the Bellman operator is a contraction in the $p$-Wasserstein metric \citep{DBLP:conf/icml/BellemareDM17} and Cr\'amer distance \citep{DBLP:conf/aistats/RowlandBDMT18} (but not in total variation distance \citep{Chung1987DiscountedMD}, Kullback-Leibler divergence and Komogorov-Smirnov distance \citep{DBLP:conf/icml/BellemareDM17}). The contraction implies the uniqueness of the fixed point of the distributional Bellman operator. In control settings with tabular function approximations, distributional RL has a well-behaved asymptotic convergence in Cr\'amer distance when the return distributions are parameterized by categorical distributions \citep{DBLP:conf/aistats/RowlandBDMT18}. \citet{bellemare2019distributional} establish the asymptotic convergence of distributional RL in policy evaluation in linear function approximations. \citet{DBLP:conf/aaai/LyleBC19} examine behavioural differences between distributional RL and  expected RL, aligning the success of the former with non-linear function approximations. 

\subsubsection{Categorical distributional RL (CDRL)}
CDRL \citep{DBLP:conf/icml/BellemareDM17} approximates a distribution $\eta$ by a categorical distribution $\hat{\eta} = \sum_{i=1}^N \theta_i \delta_{z_i}$ where $z_1 \leq z_2 \leq ... \leq z_N $ is a set of fixed supports and $\{\theta_i\}_{i=1}^N$ are learnable probabilities. The learnable probabilities $\{\theta_i\}_{i=1}^N$ are found in such way that $\hat{\eta}$ is a projection of $\eta$ onto $\{ \sum_{i=1}^N p_i \delta_{z_i}: \{p_i\}_{i=1}^N \in \Delta_N \}$ w.r.t. the Cr\'amer distance \citep{DBLP:conf/aistats/RowlandBDMT18}. In practice, C51 \citep{DBLP:conf/icml/BellemareDM17}, an instance of CDRL with $N=51$, has shown to perform favorably in Atari games. 

\subsubsection{Quantile regression distributional RL (QRDRL)}
QRDRL \citep{DBLP:conf/aaai/DabneyRBM18} approximates a distribution $\eta$ by a mixture of Diracs $\hat{\eta} = \frac{1}{N} \sum_{i=1}^N \delta_{\theta_i}$ where $\{\theta_i\}_{i=1}^N$ are learnable in such a way that $\hat{\eta}$ is a projection of $\eta$ on $\{  \frac{1}{N} \sum_{i=1}^N \delta_{z_i}: \{z_i\}_{i=1}^N \in \mathbb{R}^N \}$ w.r.t. to the 1-Wasserstein distance. Consequently, $\theta_i = F_{\eta}^{-1}( \frac{2i-1}{2N} )$ where $F_{\eta}^{-1}$ is the inverse cumulative distribution function of $\eta$. Since the quantile values $\{F_{\eta}^{-1}( \frac{2i-1}{2N})\}$ at the fixed quantiles $\{\frac{2i-1}{2N}\}$ is a minimizer of an asymmetric quantile loss from quantile regression literature (thus the name QRDRL) and the quantile loss is compatible with stochastic gradient descent (SGD), the quantile loss is used for QRDRL in practice. QR-DQN-1 \citep{DBLP:conf/aaai/DabneyRBM18}, an instance of QRDRL with Huber loss, performs favorably empirically in Atari games. 

\subsubsection{Other distributional methods} 
Some recent distributional RL methods have made modelling improvements to QRDRL. Two typical improvements are from Implicit Quantile Networks (IQN) \citep{DBLP:conf/icml/DabneyOSM18}, and Fully parameterized Quantile Function (FQF) \citep{yang2019fully}. IQN uses implicit models to represent the quantile values $\{\theta_i\}$ in QRDRL, i.e., instead of being represented by fixed network outputs, $\{\theta_i\}$ are the outputs of a differentiable function (e.g., neural networks) on the samples from a base sampling distribution (e.g., uniform). FQF further improves IQN by optimizing the locations of the base samples for IQN, instead of using random base samples as in IQN, i.e., both quantiles and quantile values are learnable in FQF. 



\subsection{Predefined statistic principle}
Formally, a statistic is any functional $\zeta: \mathcal{P}(\mathcal{X}) \rightarrow \mathbb{R}$ that maps a distribution $p \in \mathcal{P}(\mathcal{X})$ to a scalar $\zeta(p)$, e.g., the expectation $\zeta(p) = \int_{\mathcal{X}} x p(dx)$ is a common statistic in RL. Here, we formally refer to a \textit{predefined statistic} as the one whose functional form is specified before the statistic is learned. In contrast, an \textit{unrestricted statistic} does not subscribe to any specific functional form (e.g., the median of a distribution $\eta$ is a predefined statistic as its functional form is predefined via $F^{-1}_{\eta}(\frac{1}{2})$ while any empirical sample $z \sim \eta$ can be considered an unrestricted statistic of $\eta$). 

Though CDRL and QRDRL are two different variants of distributional RL methodology, they share a unifying characteristic that they both explicitly learn a finite set of predefined statistics, i.e., statistics of \textit{predefined} functional forms \citep{DBLP:conf/icml/RowlandDKMBD19}. We refer to this as predefined statistic principle. This is clear for QRDRL as the statistics to be learned about a distribution $\eta$ are  $\{\zeta_1, ..., \zeta_N\}$ where 
\begin{align*}
    \zeta_i(\eta) := F^{-1}_{\eta}(\frac{2i-1}{N}), \forall i \in \{1,...,N\}.
\end{align*}
It is a bit more subtle for CDRL. It can be shown in \citep{DBLP:conf/icml/RowlandDKMBD19} that CDRL is equivalent to learning the statistics $\{\zeta_1, ..., \zeta_{N-1}\}$ where 
\begin{align*}
    \zeta_i(\eta) := \mathbb{E}_{Z \sim \eta} \left[ 1_{\{Z < z_i\}} + 1_{\{z_i \leq Z < z_{i+1}\}} \frac{z_{i+1}-Z}{z_{i+1} -z_i} \right], \forall i. 
\end{align*}

Learning predefined statistics as in CDRL and QRDRL however can suffer two limitations in (i) statistic representation and (ii) difficulty in maintaining the predefined statistics. Regarding (i), given the same fixed budget of $N$ statistics to approximate a return distribution $\eta$, CDRL restrictively associates the statistic budget to $N$ fixed supports $\{z_i\}_{i=1}^N$ while QRDRL constrains the budget to $N$ quantile values at specific quantiles. Instead, the statistic budget should be freely learned into any form as long as it could simulate the target distribution $\eta$ sensibly.  Regarding (ii), the fixed supports in CDRL require a highly involved projection step to be able to use KL divergence as the Bellman backup changes the distribution supports; QRDRL requires that the statistics must satisfy the constraints for valid quantile values at specific quantiles, e.g., the statistics to be learned are order statistics. In fact, QR-DQN  \citep{DBLP:conf/aaai/DabneyRBM18}, a typical instance of QRDRL, implicitly maintains the order statistics via an asymmetric quantile loss but still does not guarantee the monotonicity of the obtained quantile estimates. A further notice regarding (ii) recognized in \citep{DBLP:conf/icml/RowlandDKMBD19} is that since in practice we do not observe the environment dynamic but only samples of it, a naive update to learn the predefined statistics using such samples can collapse the approximate distribution due to the different natures of samples and statistics (in fact, \citep{DBLP:conf/icml/RowlandDKMBD19} proposes imputation strategies to overcome this problem). Instead, the statistics to be learned should be free of all such difficulties to reduce the learning burden. 

One might say that IQN/FQF (discussed in the previous subsection) can help QRDRL overcome these limitations. While the modeling improvements in IQN/FQF are practically effective, IQN/FQF however still embrace the predefined statistic principle above as they built upon QRDRL with an improved modelling capacity. In this work we propose an alternative approach to distributional RL that directly eschews the predefined statistic principle used in the prior distributional RL methods, i.e., the finite set of statistics in our approach can be evolved into any functional form and thus also reduces the need to maintain any statistic constraints. If we informally view improvements to CDRL/QRDRL into two dimensions: either modelling dimension or statistic dimension, IQN/FQF lie in the modelling dimension while our work belongs to the statistic dimension. We notice that this does not necessarily mean one approach is better than the other, but rather two orthogonal approaches where the modelling improvements in IQN/FQF can naturally apply to our work to further improve the modeling capacity. We leave these modeling extensions to the future work and focus the present work only on unrestricted statistics with the simplest modelling choice as possible. 





\section{Distributional RL via Moment Matching}

\subsection{Maximum mean discrepancy} 
Let $\mathcal{F}$ be a reproducing kernel Hilbert space (RKHS) associated with a continuous kernel $k(\cdot, \cdot)$ on $\mathcal{X}$. Consider $p, q \in \mathcal{P}(\mathcal{X})$, and let $Z$ and $W$ be two random variables with distributions $p$ and $q$, respectively. The maximum mean discrepancy (MMD) \citep{DBLP:journals/jmlr/GrettonBRSS12} between $p$ and $q$ is defined as 
\begin{align*}
    &\text{MMD}(p, q; \mathcal{F}) := \sup_{f \in \mathcal{F}: \|f\|_{\mathcal{F}} \leq 1} \left(\mathbb{E} [f(Z)] - \mathbb{E} [f(W)] \right) \\ 
    &=  \| \psi_{p} - \psi_q \|_{\mathcal{F}} \\ 
    &= \bigg(  \mathbb{E} [k(Z,Z')] + \mathbb{E}[k(W,W')] - 2 \mathbb{E} [k(Z,W)]  \bigg)^{1/2}
\end{align*}
where $\psi_p := \int_{\mathcal{X}} k(x, \cdot) p(dx)$ is the Bochner integral, i.e., the mean embedding of $p$ into $\mathcal{F}$ \citep{DBLP:conf/alt/SmolaGSS07}, and $Z'$ (resp. $W'$) is a random variable with distribution $p$ (resp. $q$) and is independent of $Z$ (resp. $W$). In sequel, we interchangeably refer to MMD by $\text{MMD}(p,q; \mathcal{F}), \text{MMD}(p,q;k)$, or $\text{MMD}(p,q)$ if the context is clear. 




\noindent \textbf{Empirical approximation}. 
Given empirical samples $\{z_i\}_{i=1}^N \sim p$ and $\{w_i\}_{i=1}^M \sim q$, MMD admits a simple empirical estimate as  
\begin{align*}
    &\text{MMD}_{b}^2(\{z_i\}, \{w_i\}; k) = \\
    &\frac{1}{N^2} \sum_{i,j} k(z_i, z_j) + \frac{1}{M^2} \sum_{i,j} k(w_i, w_j) - \frac{2}{N M} \sum_{i,j} k(z_i, w_j). 
\end{align*}
Though there is also a simple unbiased estimate of $\text{MMD}$, the \emph{biased estimate} $\text{MMD}_{b}$ has smaller variance in practice and thus is adopted in our work. 


\subsection{Problem setting}
Consider $d=1$. For any policy $\pi$, let $\mu^{\pi}=\text{law}(Z^{\pi})$ be the law (distribution) of the return r.v. $Z^{\pi}$ as defined in Eq. (\ref{eq:return_rv}). The distributional Bellman operator $\mathcal{T}^{\pi}$ \citep{DBLP:conf/icml/BellemareDM17} specifies the relation of different return distributions across state-action pairs along the Bellman dynamic; that is, for any $\mu \in \mathcal{P}(\mathcal{X})^{\mathcal{S} \times \mathcal{A}}$, and any $(s,a) \in \mathcal{S} \times \mathcal{A}$,
\begin{align*}
    &\mathcal{T}^{\pi} \mu(x,a) := \nonumber\\
    &\int_{\mathcal{S}} \int_{\mathcal{A}} \int_{\mathcal{X}} (f_{\gamma,r})_{\#} \mu(s',a') \mathcal{R}(dr|s,a) \pi(d a'|s') P(ds'|s,a),
\end{align*}
where $f_{\gamma,r}(z) := r + \gamma z, \forall z$ and $(f_{\gamma,r})_{\#} \mu(s',a')$ is the pushforward measure of $\mu(s',a')$ by $f_{\gamma,r}$. Note that $\mu^{\pi}$ is the fixed point of $\mathcal{T}^{\pi}$, i.e., $\mathcal{T}^{\pi} \mu^{\pi} = \mu^{\pi}$. We are interested in the  problem of learning $\mu^{\pi} \in \mathcal{P}(\mathcal{X})^{\mathcal{S} \times \mathcal{A}}$ via the distributional Bellman operator $\mathcal{T}^{\pi}$. 
\subsection{Algorithmic approach}
In practical setting, we must approximate the return distribution via a finite set of statistics as the space of Borel probability measures is infinite-dimensional. Let $Z_{\theta}(s,a) := \{Z_{\theta}(s,a)_i\}_{i=1}^N$ be a set of parameterized statistics of $\mu^{\pi}(s,a)$ where $\theta$ represents the parameters of the model, e.g., neural networks. Instead of restricting $Z_{\theta}(s,a)$ to predefined statistic functionals, we model unrestricted statistics, i.e., deterministic samples where each $Z_{\theta}(s,a)_i$ can be evolved into any form of statistics and we use the Dirac mixture $\hat{\mu}_{\theta}{(s,a)} = \frac{1}{N} \sum_{i=1}^N \delta_{Z_{\theta}(s,a)_i}$ to approximate $\mu^{\pi}(s,a)$. We refer to the deterministic samples $Z_{\theta}(s,a)$ as particles, and our goal is reduced into learning the particles $Z_{\theta}(s,a)$ to approximate $\mu^{\pi}(s,a)$. To this end, the particles $Z_{\theta}(s,a)$ is deterministically evolved to minimize the MMD distance between the approximate distribution and its distributional Bellman target.
Algorithm \ref{alg:mmd-drl} below presents the generic update in our approach, namely MMDRL. 

\begin{algorithm}
\DontPrintSemicolon
\KwRequire{Number of particles $N$, kernel $k$, discount factor $\gamma \in [0,1]$}
\KwInput{Sample transition $(s, a, r, s')$} 

\If{Policy evaluation}
{
$a^* \sim \pi(\cdot|s')$
} 
\ElseIf{Control setting}{
$a^* \leftarrow \argmax_{a' \in \mathcal{A}}  \frac{1}{N}\sum_{i=1}^N Z_{\theta}(s', a')_i$ \; 
}


$\hat{T} Z_i \leftarrow r + \gamma Z_{\theta^{-}}(s', a^*)_i, \forall 1 \leq i \leq N$

\KwOutput{ $\text{MMD}_{b}^2\left(\{Z_{\theta}(s,a)_i\}_{i=1}^N, \{\hat{T} Z_i\}_{i=1}^N; k \right)$
}
\caption{\textbf{Generic MMDRL update}}
\label{alg:mmd-drl}
\end{algorithm}

\noindent \textbf{Intuition}. The MMDRL reduces into the standard TD or Q-learning when $N=1$. For $N > 1$, the objective $\text{MMD}_{b}$ when used with SGD and Gaussian kernels $k(x,y)=\exp(-|x-y|^2/h)$ contributes in two ways: (i) The term  $\frac{1}{N^2} \sum_{i,j} k(Z_{\theta}(s,a)_i, Z_{\theta}(s,a)_j)$ serves as a repulsive force that pushes the particles $\{Z_{\theta}(s,a)_i\}$ away from each other, preventing them from collapsing into a single mode, with force proportional to $\frac{2}{h}e^{-(Z_{\theta}(s,a)_i - Z_{\theta}(s,a)_j)^2/h}|Z_{\theta}(s,a)_i - Z_{\theta}(s,a)_j|$; (ii) the term $-\frac{2}{N^2} \sum_{i,j} k(Z_{\theta}(s,a)_i,\hat{T}Z_j)$ acts as an attractive force which pulls the particles $\{Z_{\theta}(s,a)_i\}$ closer to their target particles $\{\hat{T} Z_i\}$. This can also be intuitively viewed as a two-sample counterpart to Stein point variational inference \citep{DBLP:conf/nips/LiuW16,DBLP:conf/icml/ChenMGBO18}. 

\noindent \textbf{Particle representation}.
We can easily extend MMDRL to DQN-like architecture to create a novel deep RL, namely MMDQN (Algorithm 3 in Appendix B). In this work, we explicitly represent the particles $\{Z_{\theta}(s,a)_i\}_{i}^N$ in MMDQN via fixed $N$ network outputs as in QR-DQN \citep{DBLP:conf/aaai/DabneyRBM18} for simplicity (details in Appendix B). We emphasize that modeling improvements from IQN \citep{DBLP:conf/icml/DabneyOSM18} and FQF \citep{yang2019fully} can be naturally applied to MMDQN: we can implicitly generate $\{Z_{\theta}(s,a)_i\}_{i}^N$ via applying a neural network function to $N$ samples of a base sampling distribution (e.g., normal or uniform distribution) as in IQN, or we can use the proposal network in FQF to learn the weights of each Dirac components in MMDQN instead of using equal weights $1/N$.

\subsection{Theoretical Analysis}

Here we provide theoretical understanding of MMDRL. Before that, we define the notion of supremum MMD, a MMD counterpart to the supremum Wasserstein in \citep{DBLP:conf/icml/BellemareDM17}, to work on $\mathcal{P}(\mathcal{X})^{\mathcal{S} \times \mathcal{A}}$. 
\begin{defn}
Supremum MMD is a functional $\mathcal{P}(\mathcal{X})^{\mathcal{S} \times \mathcal{A}} \times \mathcal{P}(\mathcal{X})^{\mathcal{S} \times \mathcal{A}} \rightarrow \mathbb{R}$ defined by 
\begin{align*}
    \text{MMD}_{\infty}(\mu, \nu; k) := \sup_{(x,a) \in \mathcal{S} \times \mathcal{A}} \text{MMD}(\mu(x,a), \nu(x,a); k)
\end{align*}
for any $\mu, \nu \in \mathcal{P}(\mathcal{X})^{\mathcal{S} \times \mathcal{A}}$. 
\end{defn}

We are concerned with the following questions:
\begin{enumerate}
    \item \textbf{Metric property}: When does $\text{MMD}_{\infty}$ induce a metric on $\mathcal{P}(\mathcal{X})^{\mathcal{S} \times \mathcal{A}}$?
    \item \textbf{Contraction property}: When is $\mathcal{T}^{\pi}$ a contraction in $\text{MMD}_{\infty}$? 
    
    \item \textbf{Convergence property}: How fast do the particles returned by minimizing MMD approach the target distribution it approximates?
\end{enumerate}
The metric property in the first question ensures that $\text{MMD}_{\infty}$ is a meaningful test to distinguish two return distributions on $\mathcal{P}(\mathcal{X})^{\mathcal{S} \times \mathcal{A}}$.
The contraction property in the second question guarantees that following from Banach's fixed point theorem \citep{BanachSURLO}, $\mathcal{T}^{\pi}$ has a unique fixed point which is $\mu^{\pi}$. In addition, starting with an arbitrary point $\mu_0 \in \mathcal{P}(\mathcal{X})^{\mathcal{X} \times \mathcal{A}}$, $\mathcal{T}^{\pi} \circ \mathcal{T}^{\pi} \circ ... \circ \mathcal{T}^{\pi} \mu_{0}$ converges at an exponential rate to $\mu^{\pi}$ in $\text{MMD}_{\infty}$. We provide sufficient conditions to answer the first two questions and derive the convergence rate of the optimal particles in approximating a target distribution for the third question. A short answer is that the first two properties highly depend on the underlying kernel $k$, and the particles returned by minimizing MMD enjoy a rate $O(1/\sqrt{n})$ regardless of the dimension $d$ of the underlying space $\mathcal{X}$. 

\subsubsection{Metric property}
\begin{prop}
Let $\tilde{\mathcal{P}}(\mathcal{S}) \subseteq \mathcal{P}(\mathcal{S})$ be some (Borel) subset of the space of the Borel probability measures. If $\text{MMD}$ is a metric on $\tilde{\mathcal{P}}(\mathcal{X})$, then $\text{MMD}_{\infty}$ is also a metric on $\tilde{\mathcal{P}}(\mathcal{X})^{\mathcal{S} \times \mathcal{A}}$. 
\label{prop:metric_over_state_action_space}
\end{prop}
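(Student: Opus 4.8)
The plan is to verify directly that $\text{MMD}_{\infty}$ satisfies the four defining axioms of a metric on $\tilde{\mathcal{P}}(\mathcal{X})^{\mathcal{S} \times \mathcal{A}}$---nonnegativity, symmetry, the identity of indiscernibles, and the triangle inequality---by lifting each corresponding property of $\text{MMD}$ through the supremum over $(x,a) \in \mathcal{S} \times \mathcal{A}$. Throughout I would treat an element $\mu \in \tilde{\mathcal{P}}(\mathcal{X})^{\mathcal{S} \times \mathcal{A}}$ as the family of its coordinates $\{\mu(x,a)\}_{(x,a)}$, so that equality $\mu = \nu$ in the product space means $\mu(x,a) = \nu(x,a)$ for every $(x,a)$.

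First I would dispatch the two easy axioms. Nonnegativity is immediate: each $\text{MMD}(\mu(x,a), \nu(x,a)) \geq 0$, so their supremum is nonnegative. Symmetry follows because $\text{MMD}(\mu(x,a),\nu(x,a)) = \text{MMD}(\nu(x,a),\mu(x,a))$ holds coordinatewise, so the two suprema defining $\text{MMD}_{\infty}(\mu,\nu)$ and $\text{MMD}_{\infty}(\nu,\mu)$ agree term by term. If one wants $\text{MMD}_{\infty}$ to be real-valued rather than extended, I would also note that finiteness holds whenever the kernel is bounded; e.g. the Gaussian kernel used in the paper satisfies $k(x,x)=1$, giving $\text{MMD} \leq 2$ and hence $\text{MMD}_{\infty} \leq 2$.

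The substantive step is the identity of indiscernibles, and this is the only place where the hypothesis is genuinely used. If $\mu = \nu$ then every coordinate MMD vanishes and so does the supremum. Conversely, suppose $\text{MMD}_{\infty}(\mu,\nu) = 0$. Since the supremum of the nonnegative quantities $\text{MMD}(\mu(x,a),\nu(x,a))$ is zero, each one must be zero; because $\text{MMD}$ is assumed to be a genuine metric on $\tilde{\mathcal{P}}(\mathcal{X})$ (not merely a pseudometric), vanishing of $\text{MMD}(\mu(x,a),\nu(x,a))$ forces $\mu(x,a) = \nu(x,a)$ for every $(x,a)$, whence $\mu = \nu$. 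I regard this as the conceptual crux: for an arbitrary kernel $\text{MMD}$ is only a pseudometric, and the hypothesis is precisely what rules out distinct coordinate measures having zero discrepancy.

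Finally, the triangle inequality lifts by a standard two-step bound. For a fixed $(x,a)$, the coordinatewise triangle inequality for $\text{MMD}$ gives
\begin{align*}
\text{MMD}(\mu(x,a),\rho(x,a)) &\leq \text{MMD}(\mu(x,a),\nu(x,a)) + \text{MMD}(\nu(x,a),\rho(x,a)) \\
&\leq \text{MMD}_{\infty}(\mu,\nu) + \text{MMD}_{\infty}(\nu,\rho),
\end{align*}
where the second line replaces each term by its supremum over all coordinates. Since the right-hand side no longer depends on $(x,a)$, taking the supremum over $(x,a)$ on the left yields $\text{MMD}_{\infty}(\mu,\rho) \leq \text{MMD}_{\infty}(\mu,\nu) + \text{MMD}_{\infty}(\nu,\rho)$, completing the verification. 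I do not expect any real obstacle beyond correctly invoking the metric (rather than pseudometric) hypothesis at the identity-of-indiscernibles step.
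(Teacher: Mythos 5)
Your proposal is correct and follows essentially the same route as the paper: lift each metric axiom of $\text{MMD}$ coordinatewise through the supremum, with the triangle inequality handled by bounding each coordinatewise term by its supremum (your fix-a-coordinate-then-take-sup phrasing is equivalent to the paper's use of $\sup(A+B) \leq \sup A + \sup B$). You are in fact slightly more thorough than the paper, which dismisses nonnegativity and the identity of indiscernibles as obvious, whereas you correctly identify the latter as the one place the metric (rather than pseudometric) hypothesis is genuinely needed.
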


\begin{proof}
See Appendix A.1. 
\end{proof}

Theorem \ref{theorem:metric_property} below provides \emph{sufficient conditions} for $\text{MMD}$ to induce a metric on $\tilde{\mathcal{P}}(\mathcal{X})$.



\begin{thm}
We have 
\begin{enumerate}
    \item 
    If the underlying kernel $k$ is \textbf{characteristic} (i.e., the induced Bochner integral $\psi_p$ is injective), e.g., Gaussian kernels, then MMD is a metric on $\mathcal{P}(\mathcal{X})$ \citep{DBLP:conf/nips/FukumizuGSS07,DBLP:journals/jmlr/GrettonBRSS12}. 

    \item 
    Define \textbf{unrectified} kernels $k_{\alpha}(x,y) := -\|x - y \|^{\alpha}, \forall \alpha \in \mathbb{R}, \forall x,y \in \mathcal{X}$. Then $\text{MMD}(\cdot, \cdot; k_{\alpha})$ is a metric on $\mathcal{P}_{\alpha}(\mathcal{X})$ for all $\alpha \in (0,2)$ but not a metric for $\alpha=2$ \citep{szekely2003statistics}. 
    
     \item MMD associated with the so-called \textbf{exp-prod} kernel $k(x,y) = \exp( \frac{xy}{\sigma^2})$ for any $\sigma > 0$ is a metric on $\mathcal{P}_*(\mathcal{X})$.  
    
\end{enumerate}
\label{theorem:metric_property}
\end{thm}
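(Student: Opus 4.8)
Parts (1) and (2) are quoted directly from the cited results of \citet{DBLP:journals/jmlr/GrettonBRSS12} and \citet{szekely2003statistics}, so the plan concerns only part (3), the exp-prod kernel. Throughout, recall that $\text{MMD}(p,q;k) = \|\psi_p - \psi_q\|_{\mathcal{F}}$ is, wherever it is finite, automatically a pseudometric: nonnegativity, symmetry and the triangle inequality are inherited from the RKHS norm. Hence the whole content of the claim is the \emph{identity of indiscernibles}, i.e.\ that $k$ is \textbf{characteristic} on $\mathcal{P}_*(\mathcal{X})$: $\psi_p = \psi_q \Rightarrow p = q$. So the plan is to show the mean embedding $p \mapsto \psi_p$ is injective on $\mathcal{P}_*(\mathcal{X})$.

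First I would expand the kernel as a power series, $k(x,y) = \exp(xy/\sigma^2) = \sum_{n\ge 0} \frac{x^n y^n}{n!\,\sigma^{2n}}$, exhibiting the feature map $\phi(x) = (x^n/(\sqrt{n!}\,\sigma^n))_{n\ge0}$ so that $k(x,y)=\langle\phi(x),\phi(y)\rangle$. Taking expectations coordinatewise identifies the mean embedding with the (normalized) moment sequence, $\psi_p \leftrightarrow (m_n(p)/(\sqrt{n!}\,\sigma^n))_{n\ge0}$. Equivalently, by the reproducing property $\psi_p(y)=\int e^{xy/\sigma^2}p(dx)=M_p(y/\sigma^2)$ is the (scaled) moment generating function. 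Either description shows $\psi_p=\psi_q$ forces $m_n(p)=m_n(q)$ for every $n$: in the feature picture the coordinates coincide directly, and in the MGF picture the growth condition defining $\mathcal{P}_*(\mathcal{X})$ makes $M_p$ entire (I would verify $\int e^{|tx|}p(dx)<\infty$ for all $t$, bounding odd absolute moments by even ones through log-convexity), so two embeddings agreeing on the open set $\mathcal{X}$ agree everywhere and thus have identical Taylor coefficients, namely the moments.

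It then remains to show that matching all moments forces $p=q$ on $\mathcal{P}_*(\mathcal{X})$, i.e.\ that every member of $\mathcal{P}_*(\mathcal{X})$ is moment-\emph{determinate}. This is the crux, and I would obtain it from \textbf{Carleman's condition}: the defining bound $\limsup_n |m_n(p)|^{1/n}/n=0$, applied along even indices, gives $m_{2n}(p)^{1/(2n)}\le \varepsilon n$ eventually for every $\varepsilon>0$, hence $\sum_n m_{2n}(p)^{-1/(2n)}\ge \varepsilon^{-1}\sum_n n^{-1}=\infty$, so the classical Carleman criterion yields determinacy. Combining the two steps, $\text{MMD}(p,q)=0\Rightarrow\psi_p=\psi_q\Rightarrow$ equal moments $\Rightarrow p=q$, which is exactly injectivity; with the pseudometric properties this proves $\text{MMD}(\cdot,\cdot;k)$ is a metric on $\mathcal{P}_*(\mathcal{X})$.

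I expect the main obstacle to be not the determinacy argument itself but the \emph{finiteness/well-definedness} of the embedding on all of $\mathcal{P}_*(\mathcal{X})$: the single integral $\psi_p(y)=M_p(y/\sigma^2)$ is always finite and entire, but the squared RKHS norm $\|\psi_p\|_{\mathcal{F}}^2=\sum_n m_n(p)^2/(n!\,\sigma^{2n})=\iint e^{xy/\sigma^2}p(dx)p(dy)$ can diverge for heavy-tailed members of $\mathcal{P}_*(\mathcal{X})$ (already for a Gaussian whose variance exceeds $\sigma^2$). I would therefore either read $\text{MMD}$ as an extended metric and argue injectivity at the level of the entire functions $\psi_p$ via the MGF route above (which never needs $\|\psi_p\|_{\mathcal{F}}<\infty$), or restrict attention to the subclass on which the double integral converges; in both cases the separation of measures is delivered by the same determinacy step, so the substance of the proof is unaffected.
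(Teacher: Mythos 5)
Your proposal is correct, and its skeleton matches the paper's: both expand the exp-prod kernel as a power series to exhibit the feature map $\phi(x) = \left(x^n/(\sqrt{n!}\,\sigma^n)\right)_{n\ge 0}$, deduce $\text{MMD}^2(\mu,\nu;k)=\sum_{n\ge 0} \left(m_n(\mu)-m_n(\nu)\right)^2/(\sigma^{2n}n!)$ so that vanishing MMD forces equality of all moments, and then reduce the metric property to moment determinacy of members of $\mathcal{P}_*(\mathcal{X})$. Where you genuinely differ is the determinacy step: the paper argues via the characteristic function, using Stirling to show that the $\mathcal{P}_*$ growth condition gives $\limsup_n |m_n/n!|^{1/n}=0$, hence the Taylor series of $g_\mu(t)=\mathbb{E}[e^{itX}]$ has infinite radius of convergence and the moments determine $g_\mu$ and therefore $\mu$; you instead invoke Carleman's criterion, checking $\sum_n m_{2n}^{-1/(2n)}=\infty$ directly from the same growth condition. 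Both are standard and valid; the paper's route is self-contained modulo the analyticity of the characteristic function, while yours delegates the analysis to a classical theorem and is, if anything, more robust, since Carleman's condition is strictly weaker than entireness of the moment generating function. A further point in your favor: you flag that the embedding $\psi_p$ need not lie in the RKHS for every $p\in\mathcal{P}_*(\mathcal{X})$ when $\mathcal{X}$ is unbounded (e.g., a centered Gaussian with variance exceeding $\sigma^2$ has $\iint e^{xy/\sigma^2}p(dx)\,p(dy)=\infty$), so the moment series for $\text{MMD}^2$ can diverge; the paper's proof silently assumes this finiteness, and your proposed remedies (treat MMD as an extended metric, or run the injectivity argument at the level of moment sequences, which never requires $\|\psi_p\|_{\mathcal{F}}<\infty$) are exactly what is needed to make the statement airtight on unbounded domains.
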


\begin{proof}
See Appendix A.2. 
\end{proof}

\subsubsection{Contraction property}
We analyze the contraction of $\mathcal{T}^{\pi}$ for several important classes of kernels. One such class is shift invariant and scale sensitive kernels. A kernel $k(\cdot, \cdot)$ is said to be \textit{shift invariant} if $k(x+c, y + c) = k(x,y), \forall x,y,c \in \mathcal{X}$; it is said to be 
 \textit{scale sensitive} with order 
$\alpha >0$ if $k(c x, c y) = |c|^{\alpha} k(x,y), \forall x,y \in \mathcal{X}$ and $c \in \mathbb{R}$. For example, the unrectified kernel $k_{\alpha}$ considered in Theorem \ref{theorem:metric_property} is both shift invariant and scale sensitive with order $\alpha$ while Gaussian kernels are only shift invariant. 

\begin{thm} We have 
\begin{enumerate}
    \item If the underlying kernel is $k = \sum_{i \in I} c_i k_i$ where each component kernel $k_i$ is both shift invariant and scale sensitive with order $\alpha_i > 0$, $c_i \geq 0$, and $I$ is a (possibly infinite) index set, then $\mathcal{T}^{\pi}$ is a ${\gamma}^{ \alpha_* /2 }$-contraction in $\text{MMD}_{\infty}$ where $\alpha_* := \min_{i \in I} \alpha_i$.  
    
    \item $\mathcal{T}^{\pi}$ is \textbf{not} a contraction in $\text{MMD}_{\infty}$ associated with either Gaussian kernels or exp-prod kernels $k(x,y) = \exp( \frac{xy}{\sigma^2})$. 
    
    
\end{enumerate}
\label{theorem:contraction_property}
\end{thm}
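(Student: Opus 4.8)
The plan is to prove Part 1 by mirroring the $p$-Wasserstein contraction argument of \citet{DBLP:conf/icml/BellemareDM17}: first I would isolate how $\text{MMD}$ transforms under the affine pushforward $f_{\gamma,r}(z)=r+\gamma z$, and then control the averaging over $(r,s',a')$ that is built into $\mathcal{T}^{\pi}$ by a convexity argument in the RKHS. For Part 2 I would exhibit explicit single-state counterexamples on which the contraction (indeed non-expansiveness, for exp-prod) fails.

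For Part 1, the first step is a pushforward identity for a \emph{single} component kernel $k_i$. Writing $\text{MMD}^2$ through the kernel-expectation formula and substituting $Z\mapsto r+\gamma Z$, shift invariance cancels the additive reward in every pairwise term since $k_i(r+\gamma Z,r+\gamma Z')=k_i(\gamma Z,\gamma Z')$, while scale sensitivity of order $\alpha_i$ extracts a factor $|\gamma|^{\alpha_i}=\gamma^{\alpha_i}$ (as $\gamma\in[0,1)$); this gives $\text{MMD}\big((f_{\gamma,r})_{\#}p,(f_{\gamma,r})_{\#}q;k_i\big)=\gamma^{\alpha_i/2}\,\text{MMD}(p,q;k_i)$. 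The second step uses that $\text{MMD}^2(\cdot,\cdot;k)$ is linear in the kernel, so $\text{MMD}^2(\cdot,\cdot;k)=\sum_{i\in I}c_i\,\text{MMD}^2(\cdot,\cdot;k_i)$; applying the pushforward identity termwise and bounding $\gamma^{\alpha_i}\le\gamma^{\alpha_*}$ (valid since $\gamma\le1$ and each $\text{MMD}^2(p,q;k_i)\ge0$) yields $\text{MMD}\big((f_{\gamma,r})_{\#}p,(f_{\gamma,r})_{\#}q;k\big)\le\gamma^{\alpha_*/2}\,\text{MMD}(p,q;k)$. The final step handles the averaging: since $\mu\mapsto\psi_\mu$ is linear, $\psi_{\mathcal{T}^{\pi}\mu(s,a)}$ is the Bochner integral of $\psi_{(f_{\gamma,r})_{\#}\mu(s',a')}$ against the mixing measure $\mathcal{R}\,\pi\,P$, so Jensen's inequality for $\|\cdot\|_{\mathcal{F}}$ followed by the termwise bound gives $\text{MMD}(\mathcal{T}^{\pi}\mu(s,a),\mathcal{T}^{\pi}\nu(s,a))\le\gamma^{\alpha_*/2}\,\text{MMD}_{\infty}(\mu,\nu)$; taking the supremum over $(s,a)$ closes the contraction.

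For Part 2, I would build a minimal MDP with one state, one action, a deterministic self-loop, and a deterministic reward $r$, so that $\text{MMD}_{\infty}$ reduces to $\text{MMD}$ at that state-action and $\mathcal{T}^{\pi}$ acts as the single pushforward $(f_{\gamma,r})_{\#}$. Testing point masses $\mu=\delta_a,\nu=\delta_b$, for the Gaussian kernel with $r=0$ a direct computation gives the ratio $\text{MMD}^2(\delta_{\gamma a},\delta_{\gamma b})/\text{MMD}^2(\delta_a,\delta_b)=(1-e^{-\gamma^2 t})/(1-e^{-t})$ with $t=|a-b|^2/h$, which tends to $1$ as $t\to\infty$; hence no uniform constant $\beta<1$ can work and $\mathcal{T}^{\pi}$ is not a contraction. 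For the exp-prod kernel I would instead exploit the failure of shift invariance: with $b=0$ and a large deterministic reward $r$, the target $\text{MMD}^2(\delta_{r+\gamma a},\delta_r)$ factors as $e^{r^2/\sigma^2}\big(e^{(2r\gamma a+\gamma^2 a^2)/\sigma^2}-2e^{r\gamma a/\sigma^2}+1\big)$ and diverges as $r\to\infty$, while the source $\text{MMD}^2(\delta_a,\delta_0)=e^{a^2/\sigma^2}-1$ is fixed, so the ratio is unbounded and $\mathcal{T}^{\pi}$ is not even non-expansive.

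I expect two main obstacles. In Part 1 the delicate point is the termwise bound $\gamma^{\alpha_i}\le\gamma^{\alpha_*}$: it requires each $\text{MMD}^2(p,q;k_i)\ge0$, which for the unrectified kernels $k_{\alpha_i}(x,y)=-\|x-y\|^{\alpha_i}$ is not ordinary positive-definiteness but the conditional (negative-type) positive-definiteness underlying the energy distance, so I must invoke that these kernels still induce a genuine RKHS and a non-negative $\text{MMD}^2$ (as already used in Theorem \ref{theorem:metric_property}). In Part 2 the Gaussian case is the more subtle one, because each individual ratio stays strictly below $1$; the argument must establish that the \emph{supremum} of ratios over distribution pairs equals $1$, so that the definition of contraction—a single $\beta<1$ valid for all pairs—fails even though no single pair strictly expands.
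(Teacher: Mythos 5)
Your Part 1 follows essentially the same route as the paper: the affine-pushforward identity for a shift-invariant, scale-sensitive kernel, linearity of $\text{MMD}^2$ in the kernel, the termwise bound $\gamma^{\alpha_i}\le\gamma^{\alpha_*}$, and a convexity step for the mixing over $(r,s',a')$ are exactly the paper's Lemmas \ref{lemma:mixture_of_measures_scale_down_mmd}, \ref{lemma:take_out_shift_invariant_and_scale_sensitive} and \ref{lemma:take_out_mixture_of_kernels} (the paper proves the convexity step for $\text{MMD}^2$ via Cauchy--Schwarz rather than Jensen on the RKHS norm, but the content is the same). Your remark that the termwise bound needs $\text{MMD}^2(\cdot,\cdot;k_i)\ge 0$, which for the unrectified kernels rests on conditional positive definiteness rather than ordinary positive definiteness, is a point the paper passes over silently; flagging it is a genuine improvement in rigor.

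Part 2 is where you genuinely diverge from the paper, and your route is correct and considerably simpler. The paper argues by contradiction against a modulus $\gamma^{\alpha}$ using a two-state MDP with an $n$-atom \emph{stochastic} reward: it tunes the reward probabilities so that $\sum_i p_{r_i}^2=\gamma^{2\alpha}$, drops cross terms to get a strict inequality, uses shift invariance to collapse the diagonal terms to $\text{MMD}^2(\eta_0,\kappa_0)$, and then must verify $\text{MMD}^2(\eta_0,\kappa_0)\ge\text{MMD}^2(p,q)$ with an explicit table of numerical parameter choices (Table \ref{tab:realization_of_parameters_for_counterexample}). Your Gaussian counterexample instead uses a single state, a deterministic zero reward, and Dirac pairs, for which everything is closed form: the squared ratio $(1-e^{-\gamma^2 t})/(1-e^{-t})\to 1$ as $t\to\infty$, so no single $\beta<1$ can serve as a modulus. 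This is fully analytic (no numerics), isolates the actual mechanism (saturation, i.e., scale insensitivity, of the Gaussian kernel at large separations), and yields a sharper structural insight the paper's construction does not: with Gaussian kernels the operator can be non-expansive on every pair yet still fail to be a contraction, because the supremum of the ratios equals $1$ without being attained. Your exp-prod argument (exploiting the failure of shift invariance with a large deterministic reward to produce genuine expansion) is also sound; just state the quantifiers carefully: fix the pair $(\delta_a,\delta_0)$, then fix $r$ large enough \emph{as part of the MDP} so that the single-pair ratio exceeds $1$; non-expansiveness then fails inside that fixed MDP, which rules out every $\beta<1$. What the paper's heavier construction buys is evidence that the failure persists with bounded, stochastic rewards and distributions of full support; what yours buys is transparency and, in the Gaussian case, a strictly cleaner diagnosis of why contraction fails.
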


\begin{proof}
See Appendix A.3. 
\end{proof}

\noindent \textbf{Practical consideration}. 
Theorem \ref{theorem:contraction_property} provides a negative result for the commonly used Gaussian kernel. In practice, however, we found that Gaussian kernels can promote to match the moments between two distributions and  have better empirical performance as compared to the other kernels analyzed in this section. In fact, MMD associated with Gaussian kernels $k(x,y) = \exp( -(x-y)^2 /(2 \sigma^2))$ can be decomposed into 
\begin{align*}
    \text{MMD}^2(\mu, \nu; k) = \sum_{n=0}^{\infty} \frac{1}{\sigma^{2n} n!} \left( \tilde{m}_n(\mu) - \tilde{m}_n(\nu) \right)^2
\end{align*}
where $\tilde{m}_n(\mu) =  \mathbb{E}_{x \sim \mu} \left[ e^{-x^2 /(2 \sigma^2 )} x^n \right]$, and similarly for $\tilde{m}_n(\nu)$. This indicates that MMD associated with Gaussian kernels approximately performs moment matching (scaled with a factor $e^{-x^2 /(2 \sigma^2 )}$ for each moment term).

\subsubsection{Convergence rate of distribution approximation} 
We justify the goodness of the particles obtained via minimizing MMD in terms of approximating a target distribution. 

\begin{thm}
Let $\mathcal{X} \subseteq \mathbb{R}^d$ and $P \in \mathcal{P}(\mathcal{X})$. For any $n \in \mathbb{N}$, let $\{x_i\}_{i=1}^n \subset \mathcal{X}$ be a set of $n$ deterministic points such that $\{x_i\}_{i=1}^n \in \arginf_{\{\tilde{x}_i\}_{i=1}^n} \text{MMD}(\frac{1}{n} \sum_{i=1}^n \delta_{\tilde{x}_i}, P; \mathcal{F})$. Then, $P_n := \frac{1}{n} \sum_{i=1}^n \delta_{x_i}$ converges to $P$ at a rate of $O(1/\sqrt{n})$ in the sense that for any function $h$ in the unit ball of $\mathcal{F}$, we have 
\begin{align*}
    \bigg|\int_{\mathcal{X}} h(x) dP_n(x) - \int_{\mathcal{X}} h(x) dP(x)\bigg| = O(1/\sqrt{n}).
\end{align*}
\label{theorem:convergence_rate}
\end{thm}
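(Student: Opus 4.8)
The plan is to reduce the entire statement to a convergence-rate bound on $\text{MMD}(P_n, P; \mathcal{F})$ itself, and then to control the optimal (infimum) MMD by comparison with an i.i.d.\ empirical measure. The first step is to note that the left-hand quantity is dominated by the MMD directly. By the variational definition of MMD and the symmetry of the unit ball $\{f : \|f\|_{\mathcal{F}} \leq 1\}$ (it contains $f$ iff it contains $-f$), for any fixed $h$ in the unit ball we have
\[
\bigg|\int_{\mathcal{X}} h\, dP_n - \int_{\mathcal{X}} h\, dP\bigg| \leq \sup_{\|f\|_{\mathcal{F}} \leq 1}\bigg(\int_{\mathcal{X}} f\, dP_n - \int_{\mathcal{X}} f\, dP\bigg) = \text{MMD}(P_n, P; \mathcal{F}).
\]
Hence it suffices to prove $\text{MMD}(P_n, P; \mathcal{F}) = O(1/\sqrt{n})$, after which the claim follows uniformly over the unit ball, in particular for each individual $h$.

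The second step is to bound the deterministic optimum by a random competitor. Let $\{X_i\}_{i=1}^n$ be i.i.d.\ draws from $P$ and set $\hat{P}_n := \frac{1}{n}\sum_{i=1}^n \delta_{X_i}$. Since $P_n$ minimizes $\text{MMD}(\frac{1}{n}\sum_i \delta_{\tilde{x}_i}, P)$ over all $n$-point configurations, for \emph{every} realization of the sample we have the pointwise inequality $\text{MMD}^2(P_n, P) \leq \text{MMD}^2(\hat{P}_n, P)$. The left-hand side is deterministic, so taking expectation over the sample yields $\text{MMD}^2(P_n, P) \leq \mathbb{E}\big[\text{MMD}^2(\hat{P}_n, P)\big]$.

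The third step is to compute the expected squared MMD of the i.i.d.\ empirical measure via the mean-embedding representation $\text{MMD}(\hat{P}_n, P) = \|\psi_{\hat{P}_n} - \psi_P\|_{\mathcal{F}}$, where $\psi_{\hat{P}_n} = \frac{1}{n}\sum_i k(X_i, \cdot)$. Because the summands $k(X_i, \cdot) - \psi_P$ are i.i.d.\ and centered in $\mathcal{F}$, the cross terms vanish in expectation, giving
\[
\mathbb{E}\big[\text{MMD}^2(\hat{P}_n, P)\big] = \frac{1}{n}\,\mathbb{E}\big\|k(X_1, \cdot) - \psi_P\big\|_{\mathcal{F}}^2 = \frac{1}{n}\Big(\mathbb{E}[k(X, X)] - \mathbb{E}[k(X, X')]\Big),
\]
with $X, X'$ independent copies from $P$. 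This is $O(1/n)$ whenever the kernel has bounded diagonal (or $\mathbb{E}[k(X,X)] < \infty$), so combining with the previous step gives $\text{MMD}(P_n, P) = \sqrt{\text{MMD}^2(P_n, P)} = O(1/\sqrt{n})$, which closes the argument.

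I expect the crux to be conceptual rather than computational: the rate is obtained not by analysing the optimizer $P_n$ directly, which seems intractable, but by exhibiting a random competitor whose \emph{expected} MMD already achieves $O(1/n)$ and which upper-bounds the deterministic optimum. The only genuine technical requirement is finiteness of $\mathbb{E}[k(X,X)]$ (equivalently, existence of $\psi_P \in \mathcal{F}$ and a bounded-diagonal kernel), which holds for all kernels of practical interest such as the Gaussian. It is worth emphasizing that the resulting rate is independent of the ambient dimension $d$, since the RKHS variance computation never references $d$; this dimension-free behaviour is the key qualitative distinction from Wasserstein-type approximation rates and is precisely the point highlighted in the theorem.
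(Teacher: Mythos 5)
Your proof is correct, and it shares the paper's high-level strategy --- bound the moment difference by $\text{MMD}(P_n,P)$ via Cauchy--Schwarz (equivalently, the variational definition), then dominate the deterministic optimum by the random i.i.d.\ empirical measure $\hat P_n$ --- but it executes the second step by a genuinely different and more elementary mechanism. The paper proves that $\text{MMD}(\hat P_n, P) = O_p(1/\sqrt{n})$ via a concentration argument (McDiarmid's inequality plus a Rademacher-complexity bound, which requires $\sup_{x,y} k(x,y) \le B$), and then needs a dedicated lemma stating that a deterministic sequence dominated almost surely by an $O_p(f(n))$ sequence is itself $O(f(n))$, in order to transfer the in-probability rate to the deterministic optimizer. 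You sidestep both pieces of machinery at once: because $\text{MMD}^2(P_n,P)$ is a constant dominated pointwise by $\text{MMD}^2(\hat P_n,P)$, taking expectations gives
\begin{align*}
\text{MMD}^2(P_n,P) \;\le\; \mathbb{E}\big[\text{MMD}^2(\hat P_n,P)\big] \;=\; \frac{1}{n}\Big(\mathbb{E}[k(X,X)] - \mathbb{E}[k(X,X')]\Big),
\end{align*}
where the equality is the exact bias--variance computation in the RKHS (cross terms vanish by independence and centering). This yields the rate with an explicit constant, under the weaker hypothesis $\mathbb{E}[k(X,X)] < \infty$ rather than a uniformly bounded kernel; note that both your assumption and the paper's boundedness assumption are implicit additions to the theorem statement, yours being the milder one. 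The trade-off is that the paper's concentration route additionally delivers a high-probability tail bound for the i.i.d.\ empirical MMD (a result of independent interest, stated as a separate proposition there), whereas your expectation argument produces only what the theorem needs --- which, for this statement, is exactly the right economy.
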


\noindent \textbf{Remark}. MMD enjoys a convergence rate of $O(n^{-1/2})$ \footnote{In fact, the convergence rate can be further improved to $O(1/n)$ using kernel herding technique \citep{DBLP:conf/uai/ChenWS10}.} regardless of the underlying dimension $d$ while $1$-Wasserstein distance has a convergence rate of $O(n^{-1/d})$ (if $d > 2$) \citep{fournier2015rate}, which is slower for large $d$.  



\begin{proof}

We first present two relevant results below (whose proofs are deferred to Appendix A.4) from which the theorem can follow. 

\begin{prop}
Let $(X_i)_{i=1}^n$ be $n$ i.i.d. samples of some distribution $P$. We have 
\begin{align*}
    \text{MMD}\left(\frac{1}{n} \sum_{i=1}^n \delta_{X_i}, P; \mathcal{F} \right)
    = O_p(1/\sqrt{n}),
\end{align*}
where $O_p$ denotes big-O in probability. 
\label{mmd_convergence_rate}
\end{prop}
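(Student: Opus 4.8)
The plan is to exploit the mean-embedding form of MMD, thereby reducing the claim to the classical rate of convergence of an empirical average of i.i.d.\ elements in a Hilbert space. Writing $\phi(x) := k(x,\cdot) \in \mathcal{F}$ for the canonical feature map, the reproducing property gives $\psi_{\delta_{X_i}} = \phi(X_i)$, and by linearity of the Bochner integral the mean embedding of the empirical measure is $\psi_{\frac1n\sum_i \delta_{X_i}} = \frac1n \sum_{i=1}^n \phi(X_i)$. Using the identity $\text{MMD}(p,q;\mathcal{F}) = \|\psi_p - \psi_q\|_{\mathcal{F}}$ from the definition of MMD, the quantity to bound is exactly the Hilbert-space deviation of the empirical mean embedding from its population counterpart $\psi_P = \mathbb{E}_{X\sim P}[\phi(X)]$.

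First I would compute the second moment. Setting $\xi_i := \phi(X_i) - \psi_P$, these are i.i.d.\ $\mathcal{F}$-valued random elements with $\mathbb{E}[\xi_i] = 0$. Expanding the squared norm and using independence to kill the cross terms,
\[
\mathbb{E}\left[\text{MMD}^2\left(\tfrac1n\textstyle\sum_i \delta_{X_i}, P\right)\right] = \mathbb{E}\Big\| \tfrac1n \textstyle\sum_i \xi_i \Big\|_{\mathcal{F}}^2 = \tfrac1n \,\mathbb{E}\|\xi_1\|_{\mathcal{F}}^2 = \tfrac1n\big( \mathbb{E}[k(X,X)] - \mathbb{E}[k(X,X')] \big),
\]
where $X,X'$ are independent draws from $P$, and I have used $\|\phi(X)\|_{\mathcal{F}}^2 = k(X,X)$ and $\|\psi_P\|_{\mathcal{F}}^2 = \mathbb{E}[k(X,X')]$. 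Under the standing assumption that the kernel is bounded (or, more generally, that $\mathbb{E}[k(X,X)] < \infty$), the right-hand side equals $C/n$ for a finite constant $C$, so $\mathbb{E}[\text{MMD}^2] = O(1/n)$.

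From here the in-probability rate follows by two elementary inequalities. Jensen's inequality (concavity of the square root) gives $\mathbb{E}[\text{MMD}] \le (\mathbb{E}[\text{MMD}^2])^{1/2} = O(1/\sqrt n)$, and Markov's inequality then yields, for every $a > 0$,
\[
\mathbb{P}\left( \text{MMD}\left(\tfrac1n\textstyle\sum_i \delta_{X_i}, P\right) > \tfrac{a}{\sqrt n} \right) \le \frac{\sqrt n \,\mathbb{E}[\text{MMD}]}{a} \le \frac{\sqrt C}{a},
\]
which can be made arbitrarily small uniformly in $n$ by taking $a$ large. This is precisely the assertion $\text{MMD}(\frac1n\sum_i\delta_{X_i}, P; \mathcal{F}) = O_p(1/\sqrt n)$, and I would note that this is the estimate the main proof feeds into its derandomization step, since the optimal particles can only achieve a smaller MMD to $P$ than a random i.i.d.\ draw.

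I expect the only delicate points to be measure-theoretic bookkeeping rather than any genuine difficulty: one must verify that $\phi(X_i)$ is Bochner-integrable and that $\psi_P$ is well-defined in $\mathcal{F}$ (both guaranteed by $\mathbb{E}[k(X,X)]<\infty$, since $\|\phi(X)\|_{\mathcal{F}} = \sqrt{k(X,X)}$), and that the interchange of expectation with the inner product used to eliminate the cross terms is legitimate, which is standard for square-Bochner-integrable elements. For unbounded or conditionally positive-definite kernels such as the unrectified kernels $k_\alpha$ of Theorem \ref{theorem:metric_property}, the same computation carries through once the appropriate moment condition $P \in \mathcal{P}_\alpha(\mathcal{X})$ is imposed so that $\mathbb{E}\|\xi_1\|_{\mathcal{F}}^2$ remains finite.
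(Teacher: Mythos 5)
Your proof is correct, but it takes a genuinely different route from the paper's. The paper proves Proposition \ref{mmd_convergence_rate} via an empirical-process argument (its Proposition \ref{prop:mmd_final_bound}): it writes $\xi(X_1,\dots,X_n)$ as a supremum over the unit ball $\mathcal{H}$ of $\mathcal{F}$, controls the deviation of $\xi$ from its mean by McDiarmid's bounded-differences inequality, and bounds $\mathbb{E}[\xi]$ by symmetrization through the Rademacher complexity of $\mathcal{H}\circ\{X_1,\dots,X_n\}$, obtaining the sub-Gaussian tail $P\big(\xi > \epsilon + 2\sqrt{B/n}\big) \leq \exp\big(-\epsilon^2 n/(2B)\big)$ under the assumption $\sup_{x,y}k(x,y)\leq B$. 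You instead compute the second moment of the MMD directly in the RKHS: since $\text{MMD}^2 = \big\|\frac{1}{n}\sum_i \xi_i\big\|_{\mathcal{F}}^2$ with $\xi_i = \phi(X_i)-\psi_P$ i.i.d.\ and centered, independence kills the cross terms and gives $\mathbb{E}[\text{MMD}^2] = \frac{1}{n}\big(\mathbb{E}[k(X,X)]-\mathbb{E}[k(X,X')]\big)$, after which Jensen and Markov yield the claim. Your argument is shorter, avoids both McDiarmid and Rademacher complexities, and holds under the weaker moment condition $\mathbb{E}[k(X,X)]<\infty$ rather than a uniformly bounded kernel (a point of some relevance here, since the paper also considers unbounded unrectified kernels $k_\alpha$). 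What it gives up is the exponential concentration: Markov delivers only a polynomial tail $\sqrt{C}/a$, whereas the paper's bound decays like $e^{-c\epsilon^2 n}$. For the purpose at hand this loss is immaterial, because only the $O_p(1/\sqrt{n})$ rate is fed into the derandomization step (Lemma \ref{probability_bound_equals_standard_bound} combined with the optimality of the particles), exactly as you observe at the end of your argument.
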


\begin{lem}
Let $(a_n)_{n \in \mathbb{N}} \subset \mathbb{R}$ and $(X_n)_{n \in \mathbb{N}} \subset \mathbb{R}$ be sequences of deterministic variables and of random variables, respectively, such that for all $n$, $|a_n| \leq |X_n|$ almost surely (a.s.). Then, if $X_n = O_p(f(n))$ for some function $f(n) >0$, we have $a_n = O(f(n))$. 
\label{probability_bound_equals_standard_bound}
\end{lem}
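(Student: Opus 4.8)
The plan is to argue by contraposition, exploiting the fact that $a_n$ is deterministic: because $a_n$ carries no randomness, any violation of the desired deterministic bound propagates, via $|a_n| \le |X_n|$, into a probability-one violation for $X_n$, which is incompatible with boundedness in probability. First I would unwind the two asymptotic notions. By definition, $X_n = O_p(f(n))$ means that $X_n / f(n)$ is bounded in probability: for every $\epsilon > 0$ there exist a finite constant $M_\epsilon > 0$ and an index $N_\epsilon$ such that $\mathbb{P}(|X_n| > M_\epsilon f(n)) \le \epsilon$ for all $n \ge N_\epsilon$. On the other side, $a_n = O(f(n))$ is the purely deterministic statement that there are $C > 0$ and $N$ with $|a_n| \le C f(n)$ for all $n \ge N$. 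The whole task is to convert the probabilistic control on $X_n$ into this deterministic control on $a_n$.

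The key step is the following observation. Fix a threshold $M > 0$ and an index $n$. Since $a_n$ is a non-random real number, the statement $|a_n| > M f(n)$ is either true or false outright, independently of the sample point. Suppose it is true. Then, using the hypothesis $|a_n| \le |X_n|$ almost surely, we obtain $|X_n| \ge |a_n| > M f(n)$ almost surely, and hence $\mathbb{P}(|X_n| > M f(n)) = 1$. In words: whenever the deterministic quantity $a_n$ overshoots the level $M f(n)$, the random variable $X_n$ must overshoot the same level with probability one.

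It then remains to combine this with boundedness in probability. I would instantiate the definition of $O_p$ at $\epsilon = 1/2$ (any fixed $\epsilon \in (0,1)$ works), obtaining $M_0 > 0$ and $N_0$ with $\mathbb{P}(|X_n| > M_0 f(n)) \le 1/2$ for all $n \ge N_0$. For any such $n$, the previous step rules out $|a_n| > M_0 f(n)$, because that would force $\mathbb{P}(|X_n| > M_0 f(n)) = 1$, contradicting the bound $1/2 < 1$. Consequently $|a_n| \le M_0 f(n)$ for all $n \ge N_0$, which is exactly $a_n = O(f(n))$ with constant $C = M_0$.

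I do not anticipate a substantive obstacle, as the lemma is essentially definitional; the only point genuinely requiring care is the handling of the almost-sure inequality. The set on which $|a_n| \le |X_n|$ fails has probability zero, so in passing from the deterministic fact $|a_n| > M f(n)$ to $|X_n| > M f(n)$ almost surely, and then to $\mathbb{P}(|X_n| > M f(n)) = 1$, one must discard this null set correctly so that it does not affect the probability. I would also emphasize choosing $\epsilon$ strictly below $1$ so that the contradiction $1 \le \epsilon$ is genuine, and note that if the adopted definition of $O_p$ quantifies uniformly over all $n$ rather than for $n \ge N_\epsilon$, the same argument yields $|a_n| \le M_0 f(n)$ for every $n$, strengthening the conclusion accordingly.
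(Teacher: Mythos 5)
Your proposal is correct and follows essentially the same argument as the paper: both exploit that the event $\{|a_n| \le M f(n)\}$ is deterministic (probability $0$ or $1$) and, via the a.s.\ inequality $|a_n| \le |X_n|$, conclude from $\mathbb{P}(|X_n| \le M_\epsilon f(n)) \ge 1-\epsilon > 0$ that $|a_n| \le M_\epsilon f(n)$ for all large $n$. The paper phrases this directly through the event inclusion $\{|X_n|/f(n) \le M_\epsilon\} \subseteq \{|a_n|/f(n) \le M_\epsilon\}$ and the indicator-equals-probability identity, whereas you argue by contradiction with a fixed $\epsilon = 1/2$; the two are logically interchangeable.
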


It follows from the Cauchy-Schwartz inequality in $\mathcal{F}$ that for any function $h$ in the unit ball of $\mathcal{F}$, we have
\begin{align}
    &\bigg|\int_{\mathcal{X}} h(x) dP_n(x) - \int_{\mathcal{X}} h(x) dP(x)\bigg| \nonumber \\
    &\leq \| h\|_{\mathcal{F}} \times \bigg \|\int_{\mathcal{X}} k(x, \cdot) dP_n(x) - \int_{\mathcal{X}} k(x, \cdot) dP(x)\bigg \|_{\mathcal{F}} \nonumber \\
    &\leq \text{MMD}\left(\frac{1}{n} \sum_{i=1}^n \delta_{x_i}, P; \mathcal{F} \right). 
    \label{eq:moment_bounded_by_mmd}
\end{align}

Now by letting $X_n = \text{MMD}(\frac{1}{n} \sum_{i=1}^n \delta_{\tilde{x}_i}, P; \mathcal{F})$, $a_n = \text{MMD}(\frac{1}{n} \sum_{i=1}^n \delta_{x_i}, P; \mathcal{F})$, and $f(n) = 1/\sqrt{n}$, and noting that $a_n \leq X_n, \forall n$, Proposition \ref{mmd_convergence_rate}, Lemma \ref{probability_bound_equals_standard_bound} and Eq. (\ref{eq:moment_bounded_by_mmd}) immediately imply Theorem \ref{theorem:convergence_rate}. 
\end{proof}

\begin{figure*}[htb]
\centering
\includegraphics[scale=0.6]{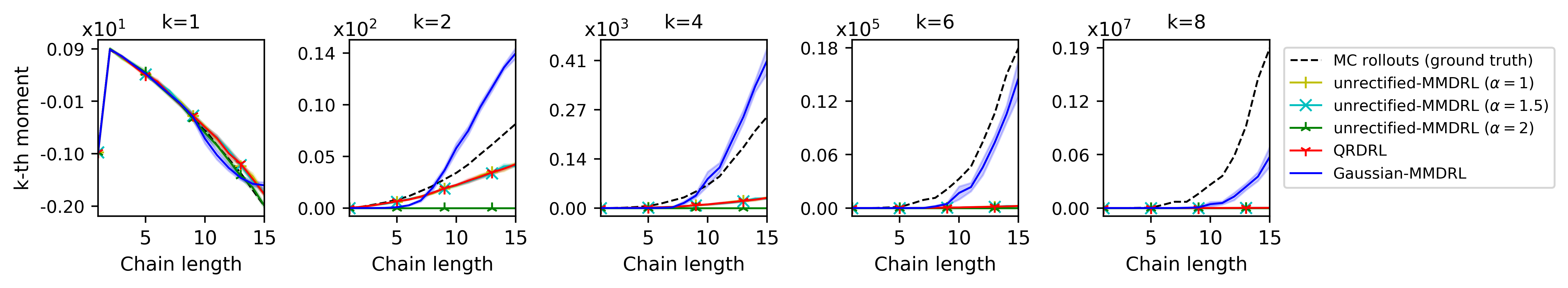}
\caption{Performance of different methods in approximating the optimal policy's return distribution at the initial state in the chain environment of various chain lengths $K = \{1,2,...,15\}$. The distribution approximation is evaluated in terms of how well a method can approximate the $k$-th central moment (except $k=1$ means the expectation) of the target distribution. $95\%$ C.I. with $30$ seeds. A variant (Gaussian-MMDRL) of our proposed MMDRL matches the MC rollouts (representing ground truth) much better than QRDRL.}
\label{fig:chain_experiment_result}
\end{figure*}


\section{Experimental Results}

We first present results with a tabular version of MMDRL to illustrate its behaviour in distribution approximation task. We then combine the MMDRL update to the DQN-style architecture to create a novel deep RL algorithm namely MMDQN, and evaluate it on the Atari-57 games. We give full details of the architectures and hyperparameters used in the experiments in the Appendix B. \footnote{The code for the experiments is available at \url{https://github.com/thanhnguyentang/mmdrl}.}

\subsection{Tabular policy evaluation}

We empirically evaluate that MMDRL with Gaussian kernels (\textit{Gaussian-MMDRL}) can approximately learn the moments of a policy's return distribution as compared to the MMDRL with unrectified kernels (\textit{unrectified-MMDRL}) and the baseline QRDRL. 

We use a variant of the classic chain environment \citep{DBLP:conf/icml/RowlandDKMBD19} . The chain environment of length $K$ is a chain of $K$ states $s_0, ..., s_{K-1}$ where $s_0$ is the initial state and $s_{K-1}$ is the terminal state (see Figure \ref{fig:chain_mdp}). In each state, there are only two possible actions: (i) forward, which moves the agent one step to the right with probability $0.9$ and to $s_0$ with probability $0.1$, or (ii) backward, which transitions the agent to $s_0$ with probability $0.9$ and one step to the right with probability $0.1$. The agent receives reward $-1$ when transitioning to the initial state $s_0$, reward $1$ when reaching the terminal state $s_{K-1}$, and $0$ otherwise. The discount factor is $\gamma = 0.9$. We estimate $\mu^*_0$ the return distribution at the initial state of the optimal policy $\pi^*$ which selects forward action in every state. The longer the chain length $K$, the more stochastic the optimal policy's return distribution at $s_0$. We use $10,000$ Monte Carlo rollouts under policy $\pi^*$ to compute the central moments of  $\mu^*_0$ as ground truth values. Each method uses only $N=30$ samples to approximate the target distribution $\mu^*_0$ (more algorithm details are presented in Algorithm 2 in Appendix B).


The result is presented in Figure \ref{fig:chain_experiment_result}. While all the methods approximate the expectation of the target distribution well, their approximation qualities differentiate greatly when it comes to higher order moments. Gaussian-MMDRL, though with only $N=30$ particles, can approximate higher order moments more reasonably in this example whereas the rest highly suffer from underestimation.  We also experimented with the kernel considered in Theorem \ref{theorem:metric_property}.3 in this tabular experiment and the Atari game experiment (next part) but found that it is highly inferior to the other kernel choices (even though it has an exact moment matching form as compared to Gaussian kernels, see Appendix A.2) thus we did not include it (we speculate that the shift invariance of Gaussian kernels seems effective when interacting with transition samples from the Bellman dynamics). 

\begin{figure}[h!]
    \centering
    \includegraphics[scale=0.55]{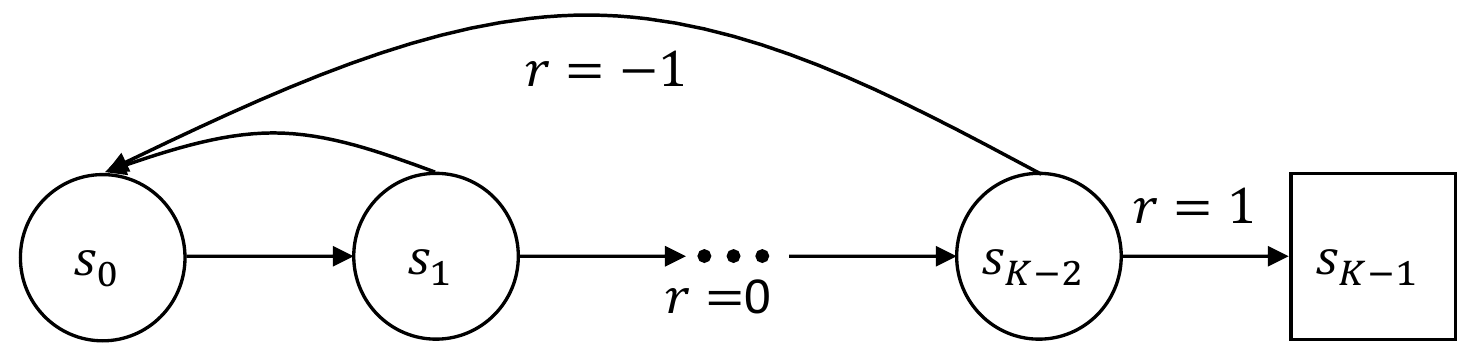}
    \caption{An illustration of a variant of the classic chain MDP with the chain length $K$.}
    \label{fig:chain_mdp}
\end{figure}

\subsection{Atari games}
\begin{figure*}[h]
    \centering
    \begin{minipage}[t]{0.49\textwidth}
        \centering
        \includegraphics[scale=0.56]{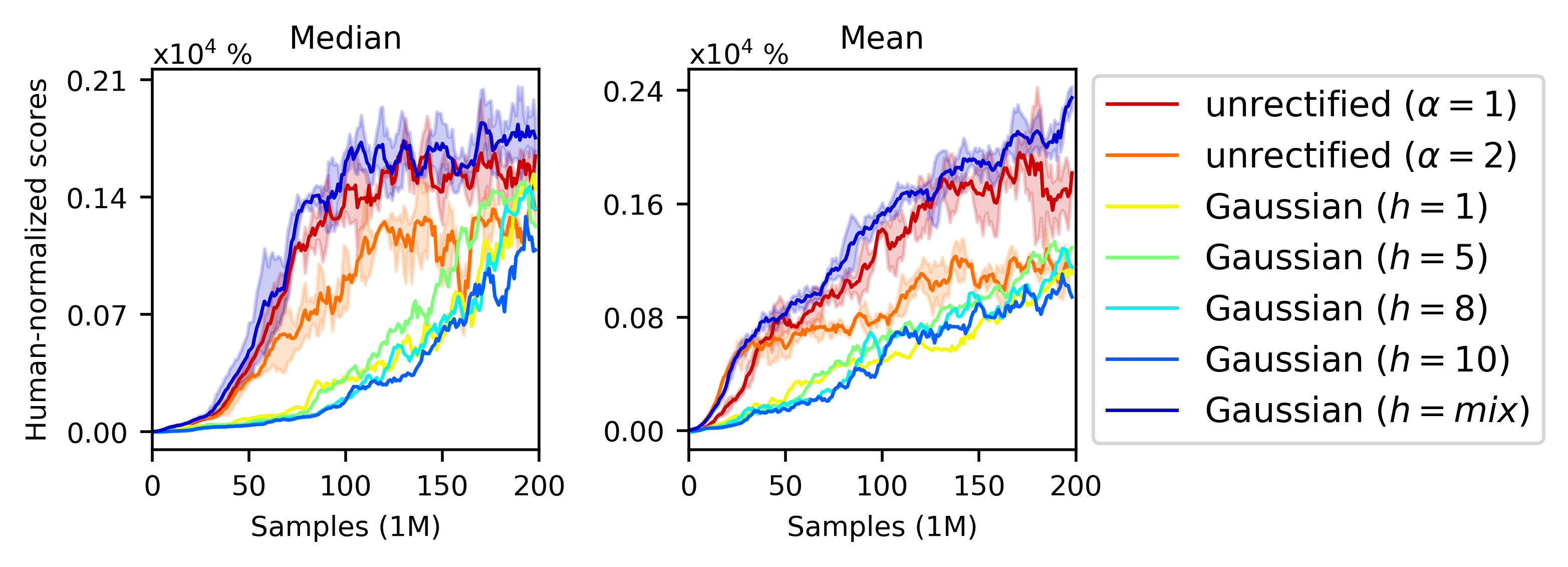} 
        \subcaption{Different kernel bandwidths (at fixed $N=200$).}
    \end{minipage}\hfill
    \begin{minipage}[t]{0.49\textwidth}
        \centering
        \includegraphics[scale=0.56]{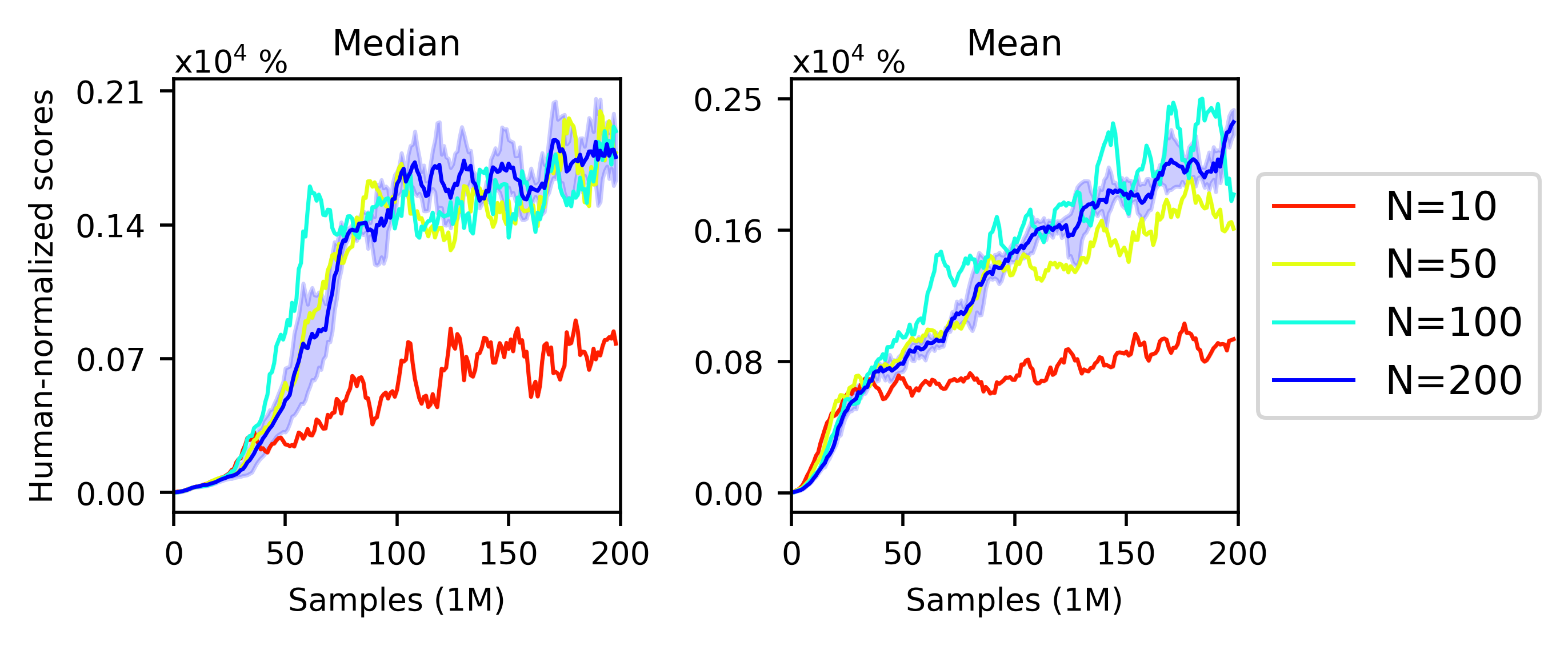} 
        \subcaption{Different values of $N$ (at fixed $h=mix$).}
    \end{minipage}
    \caption{The sensitivity of (the human-normalized scores of) MMDQN in the 6 tuning games with respect to: (a) the kernel choices (Gaussian kernels with different bandwidths $h$ and unrectified kernels), and (b) the number of particles $N$. Here $h=mix$ indicates the mixture of bandwidth values in $\{1,2,...,10\}$. All curves are smoothed over $5$ consecutive iterations. $95\%$ C.I. for the $h=mix$, $N=200$, and unrectified kernel curves ($3$ seeds) and $1$ seed for the other curves.}
    
    \label{fig:hyperparameter_tunning}
\end{figure*}

To demonstrate the effectiveness of MMDRL at scale, we combine the MMDRL in Algorithm 1 with DQN-like architecture to obtain a deep RL agent namely MMDQN (Algorithm 3 in Appendix B). Specifically in this work, we used the same architecture as QR-DQN \citep{DBLP:conf/aaai/DabneyRBM18} for simplicity but more advanced modeling improvements from IQN \citep{DBLP:conf/icml/DabneyOSM18} and FQF \citep{yang2019fully} can naturally be used in combination to our framework. 

\noindent \textbf{Evaluation Protocol}. We evaluated our algorithm on 55 \footnote{We failed to include Defender and Surround games using OpenAI and Dopamine framework.} Atari 2600 games \citep{Bellemare_2013} following the standard training and evaluation procedures \citep{mnih2015human,DBLP:conf/aaai/HasseltGS16} (the full details are in appendix B). We computed human normalized scores for each agent per game. From the human normalized scores for an agent across all games, we extracted three statistics for the agent's performance: the median, the mean and the number of games where the agent's performance is above the human expert's performance. 

\noindent \textbf{Baselines}. 
We categorize the baselines into two groups. The first group contains \textit{comparable} methods: DQN, PRIOR., C51, and QR-DQN-1, where DQN \citep{mnih2015human} and PRIOR. (prioritized experience replay \citep{DBLP:journals/corr/SchaulQAS15}) are classic baselines. The second group includes \textit{reference} methods: RAINBOW \citep{DBLP:conf/aaai/HesselMHSODHPAS18}, IQN, and FQF, which contain algorithmic/modeling improvements orthogonal to MMDQN: RAINBOW combines C51 with prioritized replay and $n$-step update while IQN and FQF contain modeling improvements as described in the related work section. Since in this work we used the same architecture as QR-DQN and C51 for MMDQN, we directly compare MMDQN with the first group while including the second group for reference. 




\noindent \textbf{Hyperparameter setting}. 
For fair comparison with QR-DQN, we used the same hyperparameters: $N=200$, Adam optimizer \citep{DBLP:journals/corr/KingmaB14} with $lr=0.00005, \epsilon_{ADAM} = 0.01/32$. We used $\epsilon$-greedy policy with $\epsilon$ being decayed at the same rate as in DQN but to a lower value $\epsilon=0.01$ as commonly used by the distributional RL methods. We used a target network to compute the distributional Bellman target as with DQN. Our implementation is based on OpenAI Gym \cite{brockman2016openai} and the Dopamine framework \citep{DBLP:journals/corr/abs-1812-06110}.

\noindent \textbf{Kernel selection}. 
We used Gaussian kernels $k_h(x,y) = \exp \left( - (x-y)^2 /h \right)$ where $h>0$. The kernel bandwidth $h$ is crucial to the statistical quality of MMD: overestimated bandwidth results in a flat kernel while underestimated one makes the decision boundary highly irregular. 
We utilize the kernel mixture trick in \citep{DBLP:conf/icml/LiSZ15} which is a mixture of $K$ kernels covering a range of bandwidths $k(x,y) = \sum_{i=1}^K k_{h_i}(x,y)$. The Gaussian kernel with a bandwidth mixture yields much a better performance than that with individual bandwidth and unrectified kernels in 6 tuning games: Breakout, Assault, Asterix, MsPacman, Qbert, and BeamRider (see Figure \ref{fig:hyperparameter_tunning} (a)). Figure \ref{fig:hyperparameter_tunning} (b) shows the sensitivity of MMDQN in terms of the number of particles $N$ in the 6 tuning games where too small $N$ adversely affects the performance.



The main empirical result is provided in Table \ref{tab:main_table} where we compute the mean and median of \textit{best} human normalized scores across 55 Atari games in the 30 no-op evaluation setting (the full raw scores for each game are provided in Appendix C). The table shows that MMDQN significantly outperforms the comparable methods in the first group (DQN, PRIOR., C51 and QR-DQN-1) in all metrics though it shares the same network architecture with C51 and QR-DQN-1. Although we did not include any orthogonal algorithmic/modelling improvements from the reference methods in the second group to MMDQN, MMDQN still performs comparably with these methods and even achieve a state-of-the-art mean human-normalized score.  In Figure \ref{fig:percentage_improvement} we also provide the percentage improvement per-game of MMDQN over QR-DQN-1 where MMDQN offers significant gains over QR-DQN-1 in a large array of games. 






\begin{table}[h]
    \centering
    \setlength{\extrarowheight}{1.5pt}
    \begin{tabular}{l|l|l|l|l}
         & \textbf{Mean} & \textbf{Median} & $>$\textbf{Human} & $>$\textbf{DQN}  \\
    \hline 
    DQN & 221\% & 79\% & 24 & 0 \\ 
    PRIOR. & 580\% & 124\% & 39 & 48\\ 
    C51 & 701\% & 178\% & 40 & 50 \\ 
    QR-DQN-1 & 902\% & 193\% & 41 & 54 \\ 
    \hline 
    RAINBOW & 1213\% & 227\% & 42 & 52 \\ 
    IQN & 1112\% & 218\% & 39 & 54 \\
    FQF & 1426\% & 272\% & 44 & 54  \\
    \hline 
    \textbf{MMDQN} & \textbf{1969}\% & 213\% & 41 & \textbf{55}
    \end{tabular}
    \caption{Mean and median of \textit{best} human-normalized scores across 55 Atari 2600 games. The results for MMDQN are averaged over 3 seeds and the reference results are from \citep{yang2019fully}.}
    \label{tab:main_table}
\end{table}

\begin{figure}[h!]
    \centering
    \includegraphics[scale=0.58]{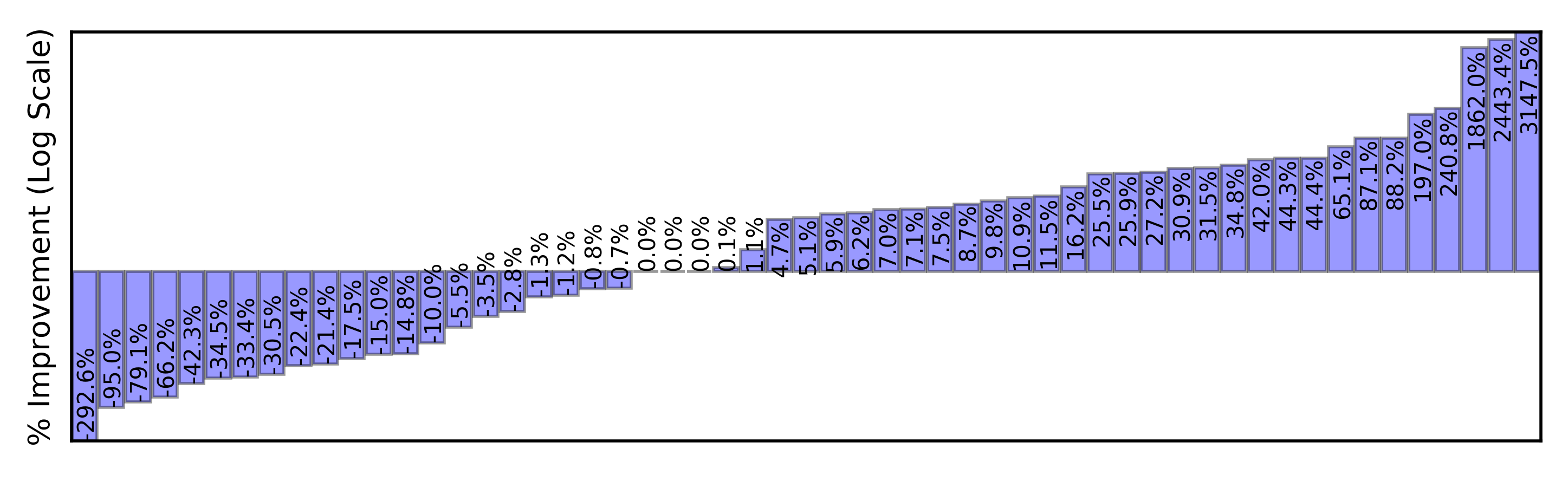}
    \caption{Percentage improvement per-game of MMDQN over QR-DQN-1.}
    \label{fig:percentage_improvement}
\end{figure}

\section{Conclusion and Discussion}
We have introduced a novel approach for distributional RL that eschews the predefined statistic principle used in the prior distributional RL. Our method deterministically evolves the (pseudo-)samples of a return distribution to approximately match moments of the resulting approximate distribution with those of the return distribution. We have also provided theoretical understanding of distributional RL within this framework. Our experimental results show that MMDQN, a combination of our approach with DQN-like architecture, achieves significant improvement in the Atari benchmark.

In what follow, we further discuss our results and the future work. \\ 
\noindent \textbf{Deterministic sampling in Theorem \ref{theorem:convergence_rate}}. Theorem \ref{theorem:convergence_rate} is concerned with the convergence of the optimal deterministic particles uniquely arisen in our problem setting where we deterministically evolve a set of particles to approximate a distribution in MMD. In other words, Theorem \ref{theorem:convergence_rate} does not fully analyze Algorithm \ref{alg:mmd-drl} but addresses one relevant yet important aspect: if a set of deterministic particles are evolved to simulate a distribution in MMD, how good is the approximation. This is different from the conventional setting in MMD which are often concerned with the convergence of empirical MMD derived from i.i.d. samples of each component distribution \citep{DBLP:journals/jmlr/GrettonBRSS12} (though we leverage similar proof techniques). We also remark that in Theorem \ref{theorem:convergence_rate}, we assume the attainability of the infimum but do not specify a practical algorithm to solve this infimum. While in practice, we use neural networks to represent the deterministic particles and use SGD to solve this optimization problem (as in MMDQN), the related literature of kernel herding can in fact provide a different approach with an improved analysis. Herding \citep{DBLP:conf/icml/Welling09} is a method that generates pseudo-samples (i.e., deterministic samples) from a distribution such that nonlinear moments of the sample set closely match those of the target distribution. A greedy selection of pseudo-samples can achieve a convergence rate of $O(1/n)$ \citep{DBLP:conf/uai/ChenWS10}. Different from greedy herding, MMDQN collectively find the set of pseudo-samples using SGD at each learning step. This collective herding by SGD is more effective in the distributional RL context than greedy herding as in distributional RL we need to perform herding for multiple distributions which themselves also evolve over learning steps. \\

\noindent \textbf{Predefined statistics vs. unrestricted statistics}. We further clarify that a specified functional form in Section ``Predefined statistic principle" section means that the parametric form $\zeta$, which possibly depends on some real parameters, is fully specified, e.g., quantile values $F^{-1}_{\eta}(\tau)$ (which depends on quantile level $\tau$) are predefined statistics. In this sense, FQF/IQN still have predefined statistics as they use quantile values to approximate a distribution. We emphasize that the flexibility of $\tau$ in FQF/IQN does not solve the problem of predefined statistics but rather only gives a finer-grained approximation by using more $\tau$ which in turn increases the budget of $N$ statistics. We remark that a flexible increase of $N$ (which can give a finer approximation in both predefined and unrestricted statistics) is not our focus in this paper; thus, we find it useful to keep the same fixed budget of $N$ statistics when comparing predefined and unrestricted statistics. We also remark that a general empirical sample can be considered an unrestricted statistic as it does not subscribe to any predefined parametric functional form. As a concrete example, let us consider the task of approximating a distribution with only one (learnable) statistic: One approach targets the median, and the other uses an unrestricted statistic. Eventually, the former approach should converge its statistic to the median as it subscribes to this predefined functional form while the latter approach can evolve its statistic into any sample equally (not necessarily the median) as long as the sample can simulate the distribution in a certain sense (in our case, to match the moments of the empirical distribution with those of the target distribution). \\

\noindent \textbf{Automatic kernel selection for MMDQN}. The kernel used in MMDQN plays a crucial role in achieving a good empirical performance and selecting the same kernel to perform well in all the games is a highly non-trivial task. Our current work uses a relatively simple but effective heuristics which uses a mixture of Gaussian kernels with different bandwidths. We speculate that a systematic way of selecting a kernel can even boost the empirical performance of MMDQN further. A promising direction is that instead of relying on a predefined kernel, we can train an adversarial kernel \citep{DBLP:conf/nips/SriperumbudurFGLS09,DBLP:conf/nips/LiCCYP17} to provide a stronger signal about a discrepancy between two underlying distributions; that is, $\min_{\theta \in \Theta} \max_{k \in \mathcal{K}} \text{MMD}( Z_{\theta}(x,a), [\mathcal{T}  Z_{\theta}](x,a); k), \forall (x,a)$ where $\mathcal{K}$ is a set of kernels. \\

\noindent \textbf{Modeling improvements for MMDQN}.  We focus our current work only on the statistical aspect of distributional RL and deliberately keep all the other design choices similar to the basic QR-DQN (e.g, we did not employ any modeling improvements and uncertainty-based exploration).  As our framework does not require the likelihood but only (pseudo-)samples from the
return distribution, it is natural to build an implicit generative model (as in IQN) for the return
distribution in MMDQN where we transform via a deterministic parametric function the samples
from a base distribution, e.g., a simple Gaussian distribution, to the samples of the return distribution. The weights of the empirical distribution can be made learnable by a proposal network as in FQF. \\ 

\noindent \textbf{An open question about the sufficient condition for contraction}. In this work, we prove that the distributional Bellman operator is not a contraction in MMD with Gaussian kernels using the scale-insensitivity of Gaussian kernels. On the other hands, we show that the distributional Bellman operator is a contraction in MMD with shift-invariant and scale-sensitive kernels.  This suggests a question of whether the scale sensitivity is a necessary condition for the contraction under MMD. Another direction is an understanding of a precise notion and the role of approximate contraction in practical setting as here the distributional Bellman operator is not a contraction in MMD with Gaussian kernels but the Gaussian kernels still give a favorable empirical performance in the Atari games as compared to the other kernels. \\ 

\noindent \textbf{Robust off-policy estimation in distributional RL}. Another potential direction from the current work is to estimate the return distributions merely from offline data generated by some behaviour policies. Since the estimation is constructed from finite offline data, robustness is a key to avoid a spurious estimation \citep{pmlr-v108-nguyen20a}. 
\section{Acknowledgement}
This research was partially funded by the Australian Government through the Australian Research Council (ARC). Prof. Venkatesh is the recipient of an ARC Australian Laureate Fellowship (FL170100006). We would also like to thank our anonymous reviewers (from NeurIPS'20 and AAAI'21) for the constructive feedback, Will Dabney (DeepMind) for the raw result data of QR-DQN-1 and the useful comments for our initial draft, and Mengyan Zhang (Australian National University) for the interesting discussion about order statistics. 
\bibliography{main}

\newpage
\onecolumn
\section{Appendix A. Proofs}

\subsection{A.1. Proof of Proposition 1}
\begin{proof}
If MMD is a metric in $\tilde{\mathcal{P}}(\mathcal{X})$. Then, it is obvious to see that $\text{MMD}_{\infty}(\mu,\nu) \geq 0, \forall \mu, \nu$ and that $\text{MMD}_{\infty}(\mu,\nu) = 0$ implies $\mu = \nu$. We now prove that $\texttt{MMD}_{\infty}$ satisfies the triangle inequality. Indeed, for any $\mu,\nu,\eta \in \tilde{\mathcal{P}}(\mathcal{X})^{\mathcal{S} \times \mathcal{A}}$, we have 
\begin{align*}
    \text{MMD}_{\infty}(\mu, \nu) &=  \sup_{(s,a) \in \mathcal{S} \times \mathcal{A}} \text{MMD}(\mu(s,a), \nu(s,a)) \\
    &\overset{(a)}{\leq} \sup_{(s,a) \in \mathcal{S} \times \mathcal{A}} \bigg\{ \text{MMD}(\mu(s,a), \eta(s,a)) + \text{MMD}(\eta(s,a), \nu(s,a)) \bigg\} \\ 
    &\overset{(b)}{\leq}  \sup_{(s,a) \in \mathcal{S} \times \mathcal{A}} \text{MMD}(\mu(s,a), \eta(s,a)) + \sup_{(s,a) \in \mathcal{S} \times \mathcal{A}} \text{MMD}(\eta(s,a), \nu(s,a)) \\
    &=  \text{MMD}_{\infty}(\mu, \eta) +  \text{MMD}_{\infty}(\eta, \nu),
\end{align*}
where $(a)$ follows from the triangle inequality for MMD and $(b)$ follows from that $ \sup (A + B) \leq \sup A + \sup B$ for any two sets $A$ and $B$ where $A+B := \{a + b: a \in A, b \in B\}$. 

\end{proof}

\subsection{A.2. Proof of Theorem 1}
We first present a relevant result for the proof. 
\begin{lem}
Let $\phi$ be the feature vector of $k$, i.e., $k(x,y) = \phi(x)^T \phi(y)$. Then, for any $\mu, \nu \in \mathcal{P}(\mathcal{X})$, we have 
\begin{align*}
    \text{MMD}(\mu; \nu; k) = \| u-v \|_2
\end{align*}
where $u = \mathbb{E}_{x \sim \mu} \phi(x)$ and $v = \mathbb{E}_{y \sim \nu} \phi(y)$.
\label{lemma:mmd_as_feature_mean_difference}
\end{lem}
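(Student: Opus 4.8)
The plan is to start from the kernel-expectation form of MMD recorded in the definition, namely
$\text{MMD}^2(\mu,\nu;k) = \mathbb{E}[k(Z,Z')] + \mathbb{E}[k(W,W')] - 2\,\mathbb{E}[k(Z,W)]$ with $Z,Z' \sim \mu$ and $W,W' \sim \nu$, and to substitute the feature decomposition $k(x,y) = \phi(x)^T \phi(y)$ into each of the three expectation terms. The key structural fact I would exploit is that $Z$ and $Z'$ are \emph{independent} copies from $\mu$ (and likewise $W, W'$ from $\nu$, with $Z$ independent of $W$), which lets me factor each expectation of an inner product into a product of expectations.

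First I would write $\mathbb{E}[k(Z,Z')] = \mathbb{E}[\phi(Z)^T \phi(Z')]$; by linearity of expectation and the independence of $Z$ and $Z'$, pulling the expectation inside the inner product gives $\mathbb{E}[\phi(Z)]^T \mathbb{E}[\phi(Z')] = u^T u = \|u\|_2^2$. The identical argument applied to the second term yields $\mathbb{E}[k(W,W')] = \|v\|_2^2$, and the cross term gives $\mathbb{E}[k(Z,W)] = \mathbb{E}[\phi(Z)]^T \mathbb{E}[\phi(W)] = u^T v$, again using independence. Combining the three pieces gives $\text{MMD}^2(\mu,\nu;k) = \|u\|_2^2 + \|v\|_2^2 - 2\,u^T v = \|u-v\|_2^2$, and taking the nonnegative square root establishes the claim.

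There is also a cleaner one-line route that I would mention as an alternative: the vector $u = \mathbb{E}_{x\sim\mu}\phi(x)$ is precisely the mean embedding $\psi_\mu$ expressed in feature coordinates, since $\psi_\mu = \int \phi(x)^T \phi(\cdot)\,\mu(dx)$ corresponds to $u$ under the identification of $\mathcal{F}$ with the closure of the span of the feature map. The $\|\psi_p - \psi_q\|_{\mathcal{F}}$ form of MMD from the definition then reads off directly as $\|u - v\|_2$, giving the result without expanding the quadratic.

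The only point requiring care — which I regard as a technicality rather than a genuine obstacle — is justifying the interchange of expectation and inner product, i.e. that $\mathbb{E}[\phi(Z)^T \phi(Z')] = \mathbb{E}[\phi(Z)]^T \mathbb{E}[\phi(Z')]$. This is exactly where independence of the copies enters, and in the (possibly infinite-dimensional) feature space one appeals to Bochner integrability of $\phi$, which holds under the continuity/integrability assumptions on $k$ already in force, so that $u = \mathbb{E}[\phi(Z)]$ is a well-defined Hilbert-space element and Fubini applies to swap the double integral. Since all three terms are finite and the argument is a coordinatewise summation, this interchange is routine to verify.
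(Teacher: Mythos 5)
Your proposal is correct and follows essentially the same route as the paper's proof: expand $\text{MMD}^2$ via the three kernel expectations, substitute $k(x,y)=\phi(x)^T\phi(y)$, factor each expectation using the independence of the copies to get $u^Tu + v^Tv - 2u^Tv = \|u-v\|_2^2$, and take the square root. Your explicit attention to the Fubini/Bochner-integrability justification for pulling expectations inside the inner product is a point the paper glosses over, but it does not change the argument.
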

\begin{proof}
Let $X, X' \overset{\text{i.i.d.}}{\sim} \mu$, $Y, Y' \overset{\text{i.i.d.}}{\sim} \nu$, and $X,X',Y,Y'$ are mutually independent. We have 
\begin{align*}
    \text{MMD}^2(\mu; \nu; k) &= \mathbb{E} \left[ k(X,X') \right] + \mathbb{E} \left[ k(Y,Y') \right] - 2 \left[ \mathbb{E} k(X,Y) \right] \\ 
    &= \mathbb{E} \left[ \phi(X)^T \phi(X') \right] + \mathbb{E} \left[ \phi(Y)^T \phi(Y') \right] - 2 \mathbb{E} \left[ \phi(X)^T \phi(Y) \right] \\ 
    &= u^T u + v^T v - 2 u^T v \\ 
    &= \|u - v \|^2. 
\end{align*}
\end{proof}

\begin{proof}[Proof of Theorem 1]
We prove only Theorem 1.3, as the proofs of Theorem 1.1 and 1.2 can be preferred to in  \citep{DBLP:conf/nips/FukumizuGSS07,DBLP:journals/jmlr/GrettonBRSS12} and \citep[c.f. Proposition 2]{szekely2003statistics}, respectively. For some $\sigma >0$, let $k(x,y) = \exp(xy/\sigma^2)$. Let $\phi(x) = [a_0(x), a_1(x), ..., a_n(x), ...]$ where $a_n(x) = \frac{1}{\sqrt{n!}} \frac{x^n}{\sigma^n}$. The Taylor expansion of $k$ yields $k(x,y) = \phi(x)^T \phi(y)$. It follows from Lemma \ref{lemma:mmd_as_feature_mean_difference} that 
\begin{align*}
    \text{MMD}^2(\mu, \nu; k) &= \| \mathbb{E} \phi(X) - \mathbb{E} \phi(Y) \|^2 \\ 
    &= \sum_{n=0}^{\infty} \frac{1}{\sigma^{2n} n!} \left( \mathbb{E}[X^n] - \mathbb{E}[Y^n] \right)^2, 
\end{align*}
for any $\mu, \nu \in \mathcal{P}(\mathcal{X})$ where $X \sim \mu$ and $Y \sim \nu$. It is easy to see that $\text{MMD}(\mu, \nu; k) \geq 0$ and it satisfies the triangle inequality. We only need to prove that for any $\mu, \nu \in \mathcal{P}_*(\mathcal{X})$, if $\text{MMD}(\mu, \nu; k) = 0$, then $\mu = \nu$. Indeed, assume $\text{MMD}(\mu, \nu; k) = 0$, then $\mu$ and $\nu$ have equal moments of all orders. Note that a distribution $\mu$ is uniquely determined by its characteristic function $g_{\mu}(t) = \mathbb{E} \left[ \exp(itX) \right], \forall t$. Let $m_n(\mu) = \mathbb{E}[X^n]$ be the $n$-th moment of $\mu$. Taylor expansion of $g_{\mu}$ yields 
\begin{align*}
    g_{\mu}(t) = \sum_{n=0}^{\infty} \frac{ i^n t^n m_n(\mu) }{n!},
\end{align*}
a power series which is valid only within its radius of convergence. The radius of convergence of this power series is 
\begin{align*}
    r = \frac{1}{\limsup_{n \rightarrow \infty} \bigg|\frac{ m_n(\mu) }{ n!} \bigg|^{1/n}}. 
\end{align*}
Since $\mu \in \mathcal{P}_*(\mathcal{X})$, we have 
\begin{align*}
    \limsup_{n \rightarrow \infty} \frac{ |m_n(\mu)|^{1/n} }{ n} = 0. 
\end{align*}
Using Stirling's formular, this indicates that $\limsup_{n \rightarrow \infty} \bigg|\frac{ m_n(\mu) }{ n!} \bigg|^{1/n} = 0$, or $r = \infty$. Hence, the set of all moments of a distribution on $\mathcal{P}_*(\mathcal{X})$ uniquely determines the distribution. This concludes our proof. 
\end{proof}

\subsection{A.3. Proof of Theorem 2}

\subsubsection{Proof of Theorem 2.1}
\begin{lem}
Let $(\mu_i)_{i \in I}$ and $(\nu_i)_{i \in I}$ be two sets of Borel probability measures in $\mathcal{X}$ over some indices $I$. Let $p$ be any distribution induced over $I$, then we have
\begin{align*}
    \text{MMD}^2 \left( \sum_i p_i \mu_i, \sum_i p_i \nu_i \right) \leq \sum_{i} p_i \text{MMD}^2(\mu_i, \nu_i)
\end{align*}
\label{lemma:mixture_of_measures_scale_down_mmd}
\end{lem}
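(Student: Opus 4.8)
The plan is to reduce the statement to a convexity (Jensen) inequality for the squared norm in the RKHS $\mathcal{F}$, exploiting the linearity of the mean embedding map $p \mapsto \psi_p$. First I would recall from the definition of MMD given earlier that $\text{MMD}(p,q) = \|\psi_p - \psi_q\|_{\mathcal{F}}$, where $\psi_p = \int_{\mathcal{X}} k(x,\cdot)\, p(dx)$ is the Bochner integral, i.e.\ the mean embedding. The key structural fact is that the mean embedding is \emph{linear} in its argument: since integration against a mixture $\sum_i p_i \mu_i$ distributes over the weighted sum, one has $\psi_{\sum_i p_i \mu_i} = \sum_i p_i \psi_{\mu_i}$, and likewise for the $\nu_i$.

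Next I would set $v_i := \psi_{\mu_i} - \psi_{\nu_i} \in \mathcal{F}$, so that the left-hand side becomes $\text{MMD}^2(\sum_i p_i \mu_i, \sum_i p_i \nu_i) = \bigl\|\sum_i p_i v_i\bigr\|_{\mathcal{F}}^2$, while the right-hand side is $\sum_i p_i \|v_i\|_{\mathcal{F}}^2$. The lemma is therefore exactly the Hilbert-space convexity inequality $\bigl\|\sum_i p_i v_i\bigr\|^2 \le \sum_i p_i \|v_i\|^2$ for a probability vector $(p_i)$.

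To prove this inequality cleanly I would compute the gap directly by expanding both sides into inner products and using $\sum_j p_j = 1$ to homogenize the single sum into a double sum. Symmetrizing over the index pair $(i,j)$ then yields the identity
\begin{align*}
    \sum_i p_i \|v_i\|^2 - \Bigl\|\sum_i p_i v_i\Bigr\|^2 = \tfrac{1}{2}\sum_{i,j} p_i p_j \|v_i - v_j\|^2 \ge 0,
\end{align*}
which is manifestly nonnegative; this is the entire content of the lemma. Equivalently one may invoke Jensen's inequality for the convex function $u \mapsto \|u\|^2$ on $\mathcal{F}$.

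The main obstacle is not the inequality itself (elementary once framed in the RKHS) but the bookkeeping when the index set $I$ is infinite: I would need to ensure the embeddings $\psi_{\mu_i}$ exist and that the interchange $\psi_{\sum_i p_i \mu_i} = \sum_i p_i \psi_{\mu_i}$ is valid, together with convergence of the possibly infinite double sum $\sum_{i,j} p_i p_j \|v_i - v_j\|^2$. These are handled by the standard integrability conditions guaranteeing finiteness of the mean embeddings and by dominated convergence / Fubini for the Bochner integral, so they do not affect the core argument.
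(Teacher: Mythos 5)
Your proposal is correct and follows essentially the same route as the paper: both exploit linearity of the Bochner mean embedding to rewrite the left-hand side as $\bigl\|\sum_i p_i(\psi_{\mu_i}-\psi_{\nu_i})\bigr\|_{\mathcal{F}}^2$ and then reduce the lemma to the Hilbert-space convexity inequality $\bigl\|\sum_i p_i v_i\bigr\|_{\mathcal{F}}^2 \le \sum_i p_i \|v_i\|_{\mathcal{F}}^2$. The only difference is in how that elementary inequality is established --- the paper expands the square and applies Cauchy--Schwarz twice, while you use the symmetrization identity (equivalently Jensen's inequality), which is an equally valid and arguably cleaner derivation of the same fact.
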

\begin{proof}
Denoting  $g_i = \psi_{\mu_i} - \psi_{\nu_i}, \forall i$, we have
\begin{align*}
    \text{MMD}^2 \left( \sum_i p_i \mu_i, \sum_i p_i \nu_i \right) &= \left\| \psi_{\sum_i p_i \mu_i} -  \psi_{\sum_i p_i \nu_i} \right \|^2_{\mathcal{F}} \\
    &\overset{(a)}{=} \left \|  \sum_i p_i (\psi_{\mu_i} - \psi_{\nu_i}) \right \|^2_{\mathcal{F}} \\
    &= \sum_i \langle p_i g_i, p_i g_i \rangle_{\mathcal{F}} + 2 \sum_{i \neq j} \langle p_i g_i, p_j g_j \rangle_{\mathcal{F}} \\
     &= \sum_i p_i^2 \langle g_i, g_i \rangle_{\mathcal{F}} + 2 \sum_{i \neq j} p_i p_j \langle  g_i, g_j \rangle_{\mathcal{F}} \\
     &\overset{(b)}{\leq} \sum_i p_i^2 \|g_i\|^2_{\mathcal{F}} + 2 \sum_{i \neq j} p_i p_j \|g_i \|_{\mathcal{F}} \|g_j\|_{\mathcal{F}} \\ 
     &=  \left( \sum_i \sqrt{p_i} \sqrt{p_i} \|g_i\|_{\mathcal{F}} \right)^2 \\
     &\overset{(c)}{\leq} (\sum_i p_i) \sum_{i} p_i  \left \|g_i \right \|^2_{\mathcal{F}} \\
     &= \sum_{i} p_i  \left \| \psi_{\mu_i} - \psi_{\nu_i} \right \|^2_{\mathcal{F}} = \sum_{i} p_i \text{MMD}^2(\mu_i, \nu_i),
\end{align*}
where (a) follows from that the Bochner integral is linear (w.r.t. the probability measure argument), i.e., $\psi_{\sum_i p_i \mu_i} = \sum_i p_i \psi_{\mu_i}$, and both inequalities (b) and (c) follow from Cauchy-Schwartz inequality. 
\end{proof}

\begin{lem}
If $k$ is shift invariant and  scale sensitive with order $\alpha >0$. Then for any $\mu, \nu \in \mathcal{P}(\mathcal{X})$ and any $r, \gamma \in \mathbb{R}$, we have 
\begin{align*}
    \text{MMD}^2 \bigg((f_{r,\gamma})_{\#} \mu, (f_{r,\gamma})_{\#} \nu; k \bigg) = |\gamma|^{\alpha} \text{MMD}^2(\mu,\nu; k),
\end{align*}
where $f_{r,\gamma}(z) := r + \gamma z$ and $\#$ denotes pushforward operator. 
\label{lemma:take_out_shift_invariant_and_scale_sensitive}
\end{lem}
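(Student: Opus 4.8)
The plan is to work directly from the expectation form of squared MMD and push the affine map $f_{r,\gamma}$ through the three kernel evaluations. First I would let $Z, Z' \sim \mu$ be i.i.d. and $W, W' \sim \nu$ be i.i.d. (with all four mutually independent), and recall that the pushforward is exactly the law of the transformed variable: $f_{r,\gamma}(Z) = r + \gamma Z \sim (f_{r,\gamma})_{\#}\mu$ and likewise $f_{r,\gamma}(W) \sim (f_{r,\gamma})_{\#}\nu$. Substituting these into the expectation form of $\text{MMD}^2$ gives
\begin{align*}
\text{MMD}^2\big((f_{r,\gamma})_{\#}\mu, (f_{r,\gamma})_{\#}\nu; k\big) &= \mathbb{E}\big[k(r + \gamma Z, r + \gamma Z')\big] + \mathbb{E}\big[k(r + \gamma W, r + \gamma W')\big] \\
&\quad - 2\,\mathbb{E}\big[k(r + \gamma Z, r + \gamma W)\big],
\end{align*}
so the whole problem reduces to understanding $k(r + \gamma x, r + \gamma y)$.

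The key step is then a pointwise identity on the kernel. Applying shift invariance with shift $c = r$ removes the additive term, $k(r + \gamma x, r + \gamma y) = k(\gamma x, \gamma y)$, and then applying scale sensitivity of order $\alpha$ with scalar $c = \gamma$ pulls the factor out, $k(\gamma x, \gamma y) = |\gamma|^{\alpha} k(x, y)$. Hence $k(r + \gamma x, r + \gamma y) = |\gamma|^{\alpha} k(x, y)$ for all $x, y$. Feeding this identity into each of the three expectation terms above and factoring out the common constant $|\gamma|^{\alpha}$ leaves exactly $\mathbb{E}[k(Z,Z')] + \mathbb{E}[k(W,W')] - 2\,\mathbb{E}[k(Z,W)] = \text{MMD}^2(\mu,\nu;k)$, which is the claimed equality.

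There is no substantial analytic obstacle here; the argument is a change of variables combined with the two defining kernel properties. The only points requiring care are bookkeeping ones: applying the two kernel properties in the correct order (shift first to clear $r$, then scale to extract $|\gamma|^{\alpha}$) and confirming that the invariance identities, stated for arguments in $\mathcal{X}$, may be invoked at the shifted and scaled points arising from $f_{r,\gamma}$ — which is precisely the setting in which $(f_{r,\gamma})_{\#}\mu$ is supported. No integrability issues arise because the same constant $|\gamma|^{\alpha}$ multiplies all three terms, so the identity holds whenever the original MMD is finite.
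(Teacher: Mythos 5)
Your proof is correct and follows essentially the same route as the paper's: both rewrite $\text{MMD}^2$ of the pushforward measures via change of variables (you with expectations over transformed random variables, the paper with double integrals against the measures), and then extract $|\gamma|^{\alpha}$ pointwise using shift invariance followed by scale sensitivity. The two presentations are notational variants of the same argument, and your remark about applying shift invariance before scale sensitivity makes explicit a step the paper performs implicitly.
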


\begin{proof}
It follows from the closed-form expression of MMD distance that we have 
\begin{align*}
    \text{MMD}^2 \bigg((f_{r,\gamma})_{\#} \mu, (f_{r,\gamma})_{\#} \nu; k\bigg) 
    &= \int \int k(z,z') (f_{r,\gamma})_{\#} \mu(dz) (f_{r,\gamma})_{\#} \mu(dz') + \int \int k(w,w') (f_{r,\gamma})_{\#} \nu(dw) (f_{r,\gamma})_{\#} \nu(dw') \\
    &-2 \int \int k(z,w) (f_{r,\gamma})_{\#} \mu(dz) (f_{r,\gamma})_{\#} \nu(dw) \\
    &= \int \int k(r+ \gamma z,r + \gamma z')  \mu(dz) \mu(dz') + \int \int k( w,r + \gamma w') \nu(dw) \nu(dw') \\
    &-2 \int \int k(r + \gamma z,r + \gamma w)  \mu(dz) \nu(dw) \\
    &= |\gamma|^{\alpha}  \int \int k(z,z')  \mu(dz) \mu(dz') + |\gamma|^{\alpha} \int \int k(w,w') \nu(dw) \nu(dw') \\
    &-2 |\gamma|^{\alpha} \int \int k(z,w)  \mu(dz) \nu(dw) \\ 
    &= |\gamma|^{\alpha} \text{MMD}^2(\mu,\nu; k). 
\end{align*}
\end{proof}

\begin{lem}
For any $\mu, \nu \in \mathcal{P}(\mathcal{X})$, $(\beta_i)_{i \in I} \subset \mathbb{R}$ for some indices $I$, we have 
\begin{align*}
    \text{MMD}^2(\mu, \nu; \sum_{i \in I} \beta_i k_i) = \sum_{i \in I} \beta_i \text{MMD}^2(\mu, \nu; k_i)
\end{align*}
\label{lemma:take_out_mixture_of_kernels}
\end{lem}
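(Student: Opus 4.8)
The plan is to reduce the claim to linearity of integration by invoking the closed-form expression of squared MMD stated earlier in the MMD subsection. Recall that for random variables $Z, Z' \overset{\text{i.i.d.}}{\sim} \mu$ and $W, W' \overset{\text{i.i.d.}}{\sim} \nu$, with $Z$ independent of $Z'$ and $W$ independent of $W'$, the squared MMD with kernel $k$ admits the representation
\begin{align*}
    \text{MMD}^2(\mu, \nu; k) = \mathbb{E}[k(Z,Z')] + \mathbb{E}[k(W,W')] - 2\,\mathbb{E}[k(Z,W)].
\end{align*}
First I would substitute $k = \sum_{i \in I} \beta_i k_i$ into each of these three expectation terms. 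This avoids passing through the feature-map form of Lemma \ref{lemma:mmd_as_feature_mean_difference}, which would be awkward here since a linear combination of kernels lives in a different RKHS than the individual $k_i$; the expectation representation is insensitive to this and is the cleaner starting point.

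Next I would apply linearity of expectation (equivalently, linearity of the Lebesgue integral) to interchange the sum with each expectation, obtaining $\mathbb{E}\!\left[\sum_{i} \beta_i k_i(Z,Z')\right] = \sum_{i} \beta_i\, \mathbb{E}[k_i(Z,Z')]$ and the analogous identities for the cross term and the $\nu$-term. Collecting the three resulting sums and regrouping by the index $i$ yields
\begin{align*}
    \text{MMD}^2(\mu,\nu; k) = \sum_{i \in I} \beta_i \Big( \mathbb{E}[k_i(Z,Z')] + \mathbb{E}[k_i(W,W')] - 2\,\mathbb{E}[k_i(Z,W)] \Big),
\end{align*}
and recognizing each parenthesized quantity as $\text{MMD}^2(\mu,\nu; k_i)$ completes the argument.

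The only genuinely nontrivial point arises when $I$ is infinite: pulling the infinite series out of the integral requires a Fubini--Tonelli or dominated/monotone convergence justification rather than mere finite linearity. I would therefore treat as an implicit hypothesis an absolute-summability condition such as $\sum_{i \in I} |\beta_i|\, \mathbb{E}|k_i(Z,Z')| < \infty$, together with the analogous bounds for the other two terms; these simultaneously license the interchange and ensure each $\text{MMD}^2(\mu,\nu;k_i)$ is finite so that the right-hand series is well defined. For finite $I$ the statement is pure linearity of expectation and no convergence concern appears, so the integrability hypothesis is the main (and essentially the only) obstacle to handle carefully.
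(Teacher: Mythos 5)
Your proof is correct and takes essentially the same route as the paper's: both start from the closed-form expectation/integral representation of squared MMD, substitute the mixture kernel $\sum_{i \in I} \beta_i k_i$, and pull the sum out of the three integrals by linearity before regrouping by $i$. If anything, you are more careful than the paper, which performs the sum--integral interchange silently even though $I$ may be infinite; your explicit absolute-summability hypothesis is precisely the justification that step requires.
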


\begin{proof}
We have 
\begin{align*}
    \text{MMD}^2(\mu, \nu; \sum_{i \in I} \beta_i k_i) 
    &= \int \int \sum_{i \in I} \beta_i k_i(z,z') \mu(dz) \mu(dz') + \int \int \sum_{i \in I} \beta_i k_i(w,w') \nu(dw) \nu(dw') -2 \int \int \sum_{i \in I} \beta_i k_i(z,w) \mu(dz) \nu(dw) \\
    &= \sum_{i \in I} \beta_i \bigg ( 
    \int \int   k_i(z,z') \mu(dz) \mu(dz') + \int \int  k_i(w,w') \nu(dw) \nu(dw') -2 \int \int  k_i(z,w) \mu(dz) \nu(dw) \bigg ) \\
    &=  \sum_{i \in I} \beta_i \text{MMD}^2(\mu, \nu; k_i).
\end{align*}
\end{proof}

\begin{lem}
If $k$ is shift invariant and  scale sensitive with order $\alpha >0$, then 
\begin{align*}
    \text{MMD}_{\infty}(\mathcal{T}^{\pi} \mu, \mathcal{T}^{\pi} \nu; k) \leq \gamma^{\alpha/2} \text{MMD}_{\infty}(\mu, \nu; k), 
\end{align*}
for any $\mu,\nu \in \mathcal{P}(\mathcal{X})$ and any (stationary) policy $\pi$.
\label{lemma:shift_invariant_and_scale_sensitive_kernels_are_contractive}
\end{lem}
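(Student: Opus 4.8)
The plan is to reduce the statement to a pointwise bound at each state--action pair and then stitch together the two auxiliary lemmas already established. Fix an arbitrary $(s,a) \in \mathcal{S} \times \mathcal{A}$. The crucial observation is that the distributional Bellman operator expresses both $\mathcal{T}^{\pi}\mu(s,a)$ and $\mathcal{T}^{\pi}\nu(s,a)$ as mixtures, against the \emph{same} mixing measure $\lambda_{s,a}(dr,ds',da') := \mathcal{R}(dr|s,a)\,\pi(da'|s')\,P(ds'|s,a)$, of the pushforward measures $(f_{\gamma,r})_{\#}\mu(s',a')$ and $(f_{\gamma,r})_{\#}\nu(s',a')$, respectively. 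Since $\lambda_{s,a}$ is a probability measure independent of $\mu$ and $\nu$, I can treat the two targets as mixtures over a common index $(r,s',a')$.

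First I would apply the mixture bound (Lemma~\ref{lemma:mixture_of_measures_scale_down_mmd}) to obtain
\[
\text{MMD}^2\big(\mathcal{T}^{\pi}\mu(s,a), \mathcal{T}^{\pi}\nu(s,a)\big) \le \int \text{MMD}^2\big((f_{\gamma,r})_{\#}\mu(s',a'), (f_{\gamma,r})_{\#}\nu(s',a')\big)\, \lambda_{s,a}(dr,ds',da').
\]
Next I would invoke Lemma~\ref{lemma:take_out_shift_invariant_and_scale_sensitive}, which for a shift-invariant and scale-sensitive kernel of order $\alpha$ pulls the affine map $f_{\gamma,r}$ out of each integrand as a factor $|\gamma|^{\alpha}$, turning the right-hand side into $|\gamma|^{\alpha}\int \text{MMD}^2(\mu(s',a'),\nu(s',a'))\, \lambda_{s,a}(dr,ds',da')$. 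Bounding each integrand by the supremum $\text{MMD}_{\infty}^2(\mu,\nu)$ and using that $\lambda_{s,a}$ integrates to one yields $\text{MMD}^2(\mathcal{T}^{\pi}\mu(s,a), \mathcal{T}^{\pi}\nu(s,a)) \le \gamma^{\alpha}\,\text{MMD}_{\infty}^2(\mu,\nu)$, where $|\gamma|=\gamma$ because $\gamma \ge 0$. Taking square roots and then the supremum over $(s,a)$ delivers the claim with contraction modulus $\gamma^{\alpha/2}$.

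The main obstacle is that Lemma~\ref{lemma:mixture_of_measures_scale_down_mmd} is phrased for a countable mixture $\sum_i p_i$, whereas the Bellman operator mixes over the continuous measure $\lambda_{s,a}$. I would resolve this by re-deriving the mixture inequality directly in integral form: by linearity of the Bochner integral, $\psi_{\mathcal{T}^{\pi}\mu(s,a)} = \int \psi_{(f_{\gamma,r})_{\#}\mu(s',a')}\, \lambda_{s,a}(dr,ds',da')$, so the squared MMD equals the squared $\mathcal{F}$-norm of the integral of the difference $g(r,s',a') := \psi_{(f_{\gamma,r})_{\#}\mu(s',a')} - \psi_{(f_{\gamma,r})_{\#}\nu(s',a')}$. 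Jensen's inequality for the convex map $\|\cdot\|_{\mathcal{F}}^2$ against the probability measure $\lambda_{s,a}$ then gives $\big\|\int g\, d\lambda_{s,a}\big\|_{\mathcal{F}}^2 \le \int \|g\|_{\mathcal{F}}^2\, d\lambda_{s,a}$, which is precisely the continuous analogue used above. This Hilbert-space Jensen (equivalently Cauchy--Schwarz) step is the only genuine content; everything else is bookkeeping that mirrors the finite-mixture proof verbatim.
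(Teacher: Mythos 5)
Your proof is correct and follows essentially the same route as the paper's: apply the mixture bound (Lemma \ref{lemma:mixture_of_measures_scale_down_mmd}) pointwise at each $(s,a)$, then Lemma \ref{lemma:take_out_shift_invariant_and_scale_sensitive} to extract $\gamma^{\alpha}$, then bound by $\text{MMD}^2_{\infty}(\mu,\nu)$ and take square roots and the supremum. You are in fact slightly more careful than the paper, which invokes Lemma \ref{lemma:mixture_of_measures_scale_down_mmd} (stated only for countable mixtures) directly on the continuous Bellman mixture; your re-derivation via linearity of the Bochner integral and Jensen's inequality for $\|\cdot\|_{\mathcal{F}}^2$ against $\lambda_{s,a}$ is precisely the patch needed to make that step rigorous.
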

\begin{proof}
We have 
\begin{align*}
    &\text{MMD}^2 \bigg(\mathcal{T}^{\pi} \mu(s,a), \mathcal{T}^{\pi} \nu(s,a); k \bigg)\\ 
    &=\text{MMD}^2 \bigg(
    \int (f_{r,\gamma})_{\#} \mu(s',a') \pi(da'|s') P(ds'|s,a) \mathcal{R}(dr|s,a), \int (f_{r,\gamma})_{\#} \nu(s',a') \pi(da'|s') P(ds'|s,a) \mathcal{R}(dr|s,a); k
    \bigg )\\
    &\overset{(a)}{\leq} \int \text{MMD}^2 \bigg( (f_{r,\gamma})_{\#} \mu(s',a'), (f_{r,\gamma})_{\#} \nu(s',a'); k \bigg) \pi(da'|s') P(ds'|s,a) \mathcal{R}(dr|s,a) \\ 
    &\overset{(b)}{=} \gamma^{\alpha} \int \text{MMD}^2 (  \mu(s',a'),  \nu(s',a') ) \pi(da'|s') P(ds'|s,a) \mathcal{R}(dr|s,a) \\ 
    &\leq \gamma^{\alpha} \text{MMD}^2_{\infty}(\mu, \nu). 
\end{align*}
Here $(a)$ follows from Lemma \ref{lemma:mixture_of_measures_scale_down_mmd}, and $(b)$ follows from Lemma \ref{lemma:mixture_of_measures_scale_down_mmd}. Finally, the last inequality concludes the lemma by the definition of supremum MMD. 
\end{proof}

\begin{proof}[Proof of Theorem 2.1]
We have 
\begin{align*}
     \text{MMD}^2 \bigg(\mathcal{T}^{\pi} \mu(s,a), \mathcal{T}^{\pi} \nu(s,a); \sum_{i \in I} c_i k_i \bigg) 
     &\overset{(a)}{=} \sum_{i \in I} c_i \text{MMD}^2 \bigg(\mathcal{T}^{\pi} \mu(s,a), \mathcal{T}^{\pi} \nu(s,a); k_i \bigg) \\
     &\overset{(b)}{\leq} \sum_{i \in I} c_i \gamma^{\alpha_i} \text{MMD}^2 \bigg(\mu(s,a), \nu(s,a); k_i \bigg) \\ 
     &\overset{(c)}{\leq} \gamma^{\alpha_*} \sum_{i \in I} c_i  \text{MMD}^2 \bigg(\mu(s,a), \nu(s,a); k_i \bigg) \\  
     &\overset{(d)}{=} \gamma^{\alpha_*} \text{MMD}^2 \bigg(\mu(s,a), \nu(s,a); \sum_{i \in I} c_i k_i \bigg) \\
     &\leq \gamma^{\alpha_*} \text{MMD}^2_{\infty} \bigg(\mu, \nu; \sum_{i \in I} c_i k_i \bigg).
\end{align*}
Here, $(a)$ and $(d)$ follow from Lemma \ref{lemma:take_out_mixture_of_kernels}, $(b)$ follows from Lemma \ref{lemma:shift_invariant_and_scale_sensitive_kernels_are_contractive}, and $(c)$ follows from that $\gamma \leq 1$ and $c_i \geq 0$. Finally, the last inequality above concludes the proof by the definition of supremum MMD.
\end{proof}

\subsubsection{Proof of Theorem 2.2}
\begin{proof}[Proof of Theorem 2.2]

\begin{figure}
    \centering
    \includegraphics[scale=0.5]{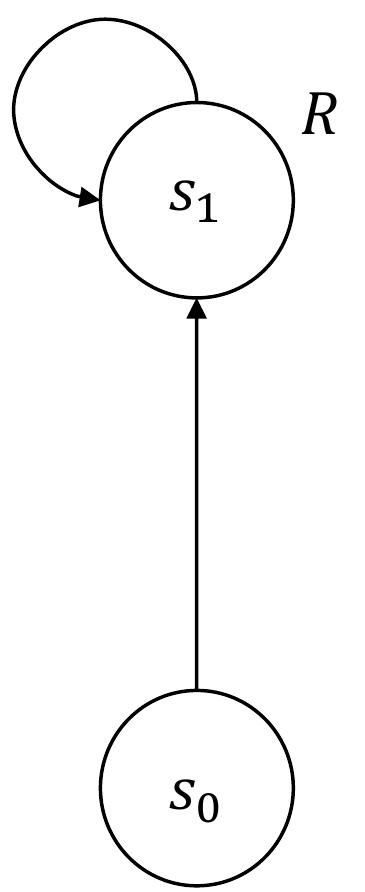}
    \caption{A simple MDP with 2 states: initial state $s_0$ and absorbing state $s_1$. Any agent receives a reward $r \sim R$ whenever it reaches state $s_1$.}
    \label{fig:simple_MDP}
\end{figure}

\textbf{Case 1}: Gaussian kernels $k(x,y) = \exp( -(x-y)^2 /(2 \sigma^2))$.  
\newline

We will prove that $\text{MMD}_{\infty}$ associated with Gaussian kernels $k(x,y) = \exp( -(x-y)^2 /(2 \sigma^2))$ for some  $\sigma >0$ is not a contraction by contradiction and counterexamples. Assume by contradiction that there exists some $\alpha > 0$ such that 
\begin{align}
    \text{MMD}_{\infty}(\mathcal{T}^{\pi} \mu, \mathcal{T}^{\pi} \nu; k) \leq \gamma^{\alpha} \text{MMD}_{\infty}(\mu, \nu;k), 
    \label{eq:contraction_contradiction_assumption}
\end{align}
for all $\mu, \nu \in \mathcal{P}(\mathcal{X})$. If $\alpha \geq 1$, $\gamma^{\alpha} \leq \gamma^{\alpha'}, \forall \alpha' \in (0,1]$, thus without loss of generality, assume that $\alpha \in (0,1]$. 

We provide a counterexample that contradicts Inequality (\ref{eq:contraction_contradiction_assumption}). Consider a simple MDP with only 2 states: initial state $s_0$ and absorbing state $s_1$ where an agent receives a reward $r \sim R$ whenever it reaches $s_1$ (see Figure \ref{fig:simple_MDP}). Assume that the reward distribution $R$ has $dom(R) = \{r_1, ..., r_n\}$ with respective probabilities $\{p_{r_1}=\epsilon, p_{r_2}=\epsilon, ..., p_{r_{n-1}}=\epsilon, p_{r_n} = 1 - (n-1) \epsilon\}$ for some $\epsilon \in (0, \frac{1}{n-1})$ to be chosen later. Let $\mu, \nu \in \mathcal{P}(dom(R))^{\mathcal{S}}$ where $\mathcal{S} = \{s_0, s_1\}$ such that $\mu(s_0) = \mu(s_1) = p := \sum_{i=1}^n p_i \delta_{r_i}$ (for $\{p_i\}$ to be chosen later) and $\nu(s_0) = \nu(s_1) = q: = \sum_{i=1}^n q_i \delta_{r_i}$ (for $\{q_i\}$ to be chosen later). It is easy to verify that for any policy $\pi$, we have 
\begin{align*}
    \mathcal{T}^{\pi} \mu(s_0) &= \mathcal{T}^{\pi} \mu(s_1) = \mathbb{E}_{r \sim R} \left[ (f_{r,\gamma})_{\#} p \right] \\
    \mathcal{T}^{\pi} \nu(s_0) &= \mathcal{T}^{\pi} \nu(s_1) = \mathbb{E}_{r \sim R} \left[ (f_{r,\gamma})_{\#} q  \right],
\end{align*}
where $f_{r,\gamma}(z) := r + \gamma z, \forall z$ and $\#$ denotes pushforward operation. Note that $(f_{r,\gamma})_{\#} \mu(s_1)$ assigns probabilities $\{p_1, ..., p_n\}$ respectively to $r + \gamma dom(R) := \{r + \gamma r_1, ..., r + \gamma r_n\}$, and similarly for  $(f_{r,\gamma})_{\#} \nu(s_1)$. We have 
\begin{align*}
    \sum_{i=1}^n p_{r_i}^2 = (n-1) \epsilon^2 + (1 - (n-1) \epsilon)^2 \in [\frac{1}{n}, 1). 
\end{align*}
Note that since $\alpha \in (0,1]$, we have $\gamma^{2 \alpha} \in [\gamma^2, 1)$. Now, choose $\gamma \in (0,1)$ such that $\gamma^2 \geq \frac{1}{n}$, then there exists $\epsilon \in (0, \frac{1}{n-1})$ such that $ \sum_{i=1}^n p_{r_i}^2 = \gamma^{2 \alpha}$. Define \begin{align*}
    \eta_r &:= (f_{r,\gamma})_{\#} p = \sum_{i=1}^n p_i \delta_{r + \gamma r_i}, \\ 
    \kappa_r &:= (f_{r,\gamma})_{\#} q = \sum_{i=1}^n q_i \delta_{r + \gamma r_i}.
\end{align*}
It follows from the closed form of MMD that 
\begin{align}
    \text{MMD}^2(\mathcal{T}^{\pi} \mu(s_0), \mathcal{T}^{\pi} \nu(s_0); k) 
    &= \text{MMD}^2(\mathbb{E}_r[\eta_r] , \mathbb{E}_r[\kappa_r] ; k) \nonumber \\
    &= \sum_{r,r' \in dom(R)} p_r p_{r'} \text{MMD}^2(\eta_r, \kappa_{r'}; k)  \nonumber \\ 
    \label{eq:true_up_to_here_for_contradiction_examples}
    &> \sum_{i=1}^n p_{r_i}^2  \text{MMD}^2(\eta_{r_i}, \kappa_{r_i}; k) \\
    &\overset{(a)}{=} (\sum_{i=1}^n p_{r_i}^2) \text{MMD}^2(\eta_0, \kappa_0; k) \nonumber \\
    &\overset{(b)}{=} \gamma^{2 \alpha} \text{MMD}^2(\eta_0, \kappa_0; k) \nonumber.
\end{align}
Here $(a)$ follows from that Gaussian kernels are shift invariant and $(b)$ follows from the particular choice of $\epsilon$ such that $ \sum_{i=1}^n p_{r_i}^2 = \gamma^{2 \alpha}$. Similarly, we have 
\begin{align*}
    \text{MMD}^2(\mathcal{T}^{\pi} \mu(s_1), \mathcal{T}^{\pi} \nu(s_1); k) > \gamma^{2 \alpha} \text{MMD}^2(\eta_0, \kappa_0; k).
\end{align*}
It remains to choose particular values of $\sigma, n, p, q, \{r_i\}, \gamma$ such that $\gamma^2 \geq \frac{1}{n}$ and $\text{MMD}^2(\eta_0, \kappa_0; k) \geq \text{MMD}^2(p, q; k)$. It is indeed possible by choosing the values as in Table \ref{tab:realization_of_parameters_for_counterexample}. 

\begin{table}[]
    \centering
    \begin{tabular}{l|l}
        \textbf{Parameters} & \textbf{Values} \\
        \hline 
        $n$ & $5$ \\
        $\gamma$ & $0.8$ (or $0.9$) \\ 
        $\sigma$ & $0.1$ \\ 
        $\{p_i\}$ & $[0.4, 0.3, 0.2, 0.1, 0]$ \\ 
        $\{q_i \}$ & $[0, 0.1, 0.2, 0.3, 0.4]$ \\ 
        $\{r_i\}$ & $[0, 0.25, 0.5, 0.75, 1.]$
    \end{tabular}
    \caption{A realization of the parameters in the counterexample for proving Theorem 2.2}
    \label{tab:realization_of_parameters_for_counterexample}
\end{table}

So far, we have constructed a particular instance such that 
\begin{align*}
    \text{MMD}(\mathcal{T}^{\pi} \mu(s_0), \mathcal{T}^{\pi} \nu(s_0); k) &> \gamma^{\alpha} \text{MMD}(\mu(s_0), \nu(s_0); k) \\ 
    \text{MMD}(\mathcal{T}^{\pi} \mu(s_1), \mathcal{T}^{\pi} \nu(s_1); k) &> \gamma^{\alpha} \text{MMD}(\mu(s_1), \nu(s_1); k). 
\end{align*}
Thus, we have 
\begin{align*}
    \text{MMD}_{\infty}(\mathcal{T}^{\pi} \mu, \mathcal{T}^{\pi} \nu; k) > \gamma^{\alpha} \text{MMD}_{\infty}(\mu, \nu;k), 
\end{align*}
which contradicts Inequality (\ref{eq:contraction_contradiction_assumption}). 

\noindent \textbf{Case 2}: For exp-prod kernels $k(x,y) = \exp( xy / \sigma^2)$. 

We follows the same procedure as in Case 1 but only up to Eq. (\ref{eq:true_up_to_here_for_contradiction_examples}) as the exp-prodkernel is not shift invariant. Instead, define 
\begin{align*}
    r^* = \argmin_{r \in \{r_1, ..., r_n\}} \text{MMD}(\eta_{r}, \kappa_{r}; k).
\end{align*}
Then, we have 
\begin{align*}
    \text{MMD}^2(\mathcal{T}^{\pi} \mu(s_0), \mathcal{T}^{\pi} \nu(s_0); k)
    &> \sum_{i=1}^n p_{r_i}^2  \text{MMD}^2(\eta_{r_i}, \kappa_{r_i}; k) \\
    &\geq (\sum_{i=1}^n p_{r_i}^2) \text{MMD}^2(\eta_{r^*}, \kappa_{r^*}; k) \\
    &= \gamma^{2 \alpha} \text{MMD}^2(\eta_{r^*}, \kappa_{r^*}; k).
\end{align*}

Now it remains to choose particular values of $\sigma, n, p, q, \{r_i\}, \gamma$ such that $\gamma^2 \geq \frac{1}{n}$ and $\text{MMD}^2(\eta_{r^*}, \kappa_{r^*}; k) \geq \text{MMD}^2(p, q; k)$. In fact, the values chosen for Case 1 as in Table \ref{tab:realization_of_parameters_for_counterexample} already yields the previous inequalities for Case 2.

\end{proof}

\subsection{A.4. Proofs of Lemma \ref{probability_bound_equals_standard_bound} and Proposition \ref{mmd_convergence_rate}}

\subsubsection{Proof of Lemma \ref{probability_bound_equals_standard_bound}}
\begin{proof}
For all $n$, we have 
\begin{align}
    \frac{|a_n|}{f(n)} \leq \frac{|X_n|}{f(n)} \text{ a.s. }
    \label{eq:an_leq_Xn}
\end{align}
Denote by $P$ the probability measure of the underlying measurable space defining the random variables $X_n$. Since $X_n = O_p(f(n))$, for any $\epsilon > 0$, there exists $M_{\epsilon} > 0, N_{\epsilon} > 0$ such that 
\begin{align*}
    P\left( \frac{|X_n|}{f(n)} > M_{\epsilon} \right) \leq \epsilon, \forall n > N_{\epsilon}.
\end{align*}
It follows from Eq. (\ref{eq:an_leq_Xn}) that $\forall n, \left\{ \frac{|a_n|}{f(n)} \leq M_{\epsilon} \right\} \supseteq \left\{ \frac{|X_n|}{f(n)} \leq M_{\epsilon} \right\}$ which implies that for all $n > N_{\epsilon}$, we have
\begin{align*}
    1_{\left\{ \frac{|a_n|}{f(n)} \leq M_{\epsilon} \right\}} &= P\left( \frac{|a_n|}{f(n)} \leq M_{\epsilon} \right) \\
    &\geq P\left( \frac{|X_n|}{f(n)} \leq M_{\epsilon} \right) \\
    &\geq 1 - \epsilon, 
\end{align*}
where the equality is due to that $a_n$ are deterministic. Picking any $\epsilon \in (0,1)$, we have 
\begin{align*}
    \frac{|a_n|}{f(n)} \leq M_{\epsilon}, \forall n > M_{\epsilon},
\end{align*}
which implies $a_n = O(f(n))$. 
\end{proof}

\subsubsection{Proof of Proposition \ref{mmd_convergence_rate}}
First, we state the following proposition. 
\begin{prop}
Assume that $\sup_{x,y} k(x,y) \leq B$. Let $X_1, ..., X_n$ be $n$ i.i.d. samples of $P$ and denote $\xi(X_1, ..., X_n) := \text{MMD}\left(\frac{1}{n} \sum_{i=1}^n \delta_{X_i}, P; \mathcal{H} \right)$. For any $\epsilon > 0$, we have 
\begin{align*}
    P\left(\xi(X_1, ..., X_n) > \epsilon + 2 \sqrt{ \frac{B}{n} } \right) \leq \exp \left(  -\frac{\epsilon^2 n}{2B} \right).
\end{align*}
\label{prop:mmd_final_bound}
\end{prop}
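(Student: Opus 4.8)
The plan is to recognize $\xi$ as the RKHS distance between the empirical and true mean embeddings and then bound it by combining a bounded-differences (McDiarmid) concentration inequality with a control on its expectation. Writing $\hat{\psi} := \frac{1}{n}\sum_{i=1}^n k(X_i, \cdot)$ for the empirical mean embedding and $\psi_P := \int_{\mathcal{X}} k(x,\cdot)\, P(dx)$ for the mean embedding of $P$, the identity $\text{MMD}(p,q) = \|\psi_p - \psi_q\|$ from the MMD definition gives $\xi(X_1,\dots,X_n) = \|\hat{\psi} - \psi_P\|_{\mathcal{H}}$, where $\mathcal{H}$ is the RKHS ($\mathcal{F}$ in the MMD section). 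The exponent $\epsilon^2 n/(2B)$ in the statement is precisely what McDiarmid's inequality produces, so I would assemble two ingredients: (i) concentration of $\xi$ about $\mathbb{E}[\xi]$, and (ii) the bound $\mathbb{E}[\xi] \le 2\sqrt{B/n}$.

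For (i), I would view $\xi$ as a function of the i.i.d. inputs $X_1,\dots,X_n$ and verify its bounded-differences property. Replacing a single coordinate $X_i$ by an arbitrary $X_i'$ changes $\hat\psi$ by $\frac{1}{n}\big(k(X_i',\cdot) - k(X_i,\cdot)\big)$, so by the triangle inequality for $\|\cdot\|_{\mathcal{H}}$ together with the reproducing property $\|k(x,\cdot)\|_{\mathcal{H}} = \sqrt{k(x,x)} \le \sqrt{B}$ (valid for \emph{all} $x$ since $\sup_{x,y} k(x,y) \le B$), the value of $\xi$ changes by at most $c := \frac{2\sqrt{B}}{n}$. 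McDiarmid's inequality with $\sum_{i=1}^n c_i^2 = n\cdot \frac{4B}{n^2} = \frac{4B}{n}$ then yields, for any $t>0$,
\begin{align*}
    P\big(\xi - \mathbb{E}[\xi] > t\big) \le \exp\!\left(-\frac{2t^2}{4B/n}\right) = \exp\!\left(-\frac{t^2 n}{2B}\right).
\end{align*}

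For (ii), I would control the expectation through the second moment and Jensen's inequality. Expanding $\mathbb{E}[\xi^2] = \mathbb{E}\big\|\frac{1}{n}\sum_i (k(X_i,\cdot) - \psi_P)\big\|_{\mathcal{H}}^2$, the cross terms vanish because the $X_i$ are independent and $\mathbb{E}[k(X_i,\cdot)] = \psi_P$, leaving $\frac{1}{n^2}\sum_i \mathbb{E}\|k(X_i,\cdot) - \psi_P\|_{\mathcal{H}}^2 \le \frac{1}{n^2}\sum_i \mathbb{E}[k(X_i,X_i)] \le \frac{B}{n}$; Jensen's inequality then gives $\mathbb{E}[\xi] \le \sqrt{B/n} \le 2\sqrt{B/n}$. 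Taking $t = \epsilon$ and using $\epsilon + 2\sqrt{B/n} \ge \epsilon + \mathbb{E}[\xi]$ inside the bound from (i) delivers the claim. The only delicate points are pinning down the bounded-differences constant — the factor $2$ there is exactly what produces the clean exponent — and invoking independence to annihilate the cross terms in the expectation; both become routine once $\xi$ is expressed as an RKHS norm, so I expect the main conceptual step to be simply identifying this reformulation rather than any hard estimate.
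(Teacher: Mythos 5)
Your proposal is correct, and it diverges from the paper's proof in one substantive way. The concentration half is identical: the paper also views $\xi$ as a function of the i.i.d. inputs, bounds the coordinate-wise oscillation by $2\sqrt{B}/n$ using the reproducing property, and applies McDiarmid's inequality; your computation of the resulting exponent $\exp\left(-\epsilon^2 n/(2B)\right)$ is the right one (the paper's intermediate display actually records $\exp\left(-2n\epsilon^2/B\right)$, which is inconsistent with its own bounded-difference constant and with the final statement — your version is what the final claim needs). Where you genuinely differ is the expectation bound. The paper proves $\mathbb{E}[\xi] \leq 2\,\mathbb{E}\,Rad(\mathcal{H} \circ \{X_1,\dots,X_n\}) \leq 2\sqrt{B/n}$ via a symmetrization argument with Rademacher variables followed by a Cauchy--Schwartz/Jensen bound on the Rademacher complexity of the RKHS unit ball. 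You instead compute $\mathbb{E}[\xi^2]$ directly: the cross terms $\mathbb{E}\langle k(X_i,\cdot)-\psi_P,\, k(X_j,\cdot)-\psi_P\rangle_{\mathcal{H}}$ vanish for $i \neq j$ by independence (interchanging expectation with the inner product is legitimate here since the kernel is bounded, so the Bochner integrals exist), each diagonal term is at most $\mathbb{E}[k(X_i,X_i)] \leq B$, and Jensen gives $\mathbb{E}[\xi] \leq \sqrt{B/n}$. This is both more elementary — no symmetrization, no Rademacher complexity machinery — and tighter by a factor of $2$, so the stated threshold $\epsilon + 2\sqrt{B/n}$ is met with room to spare. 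What the paper's route buys in exchange is generality: the symmetrization argument extends verbatim to uniform deviations over function classes that are not unit balls of an RKHS, whereas your variance computation exploits the Hilbert-space structure specifically. For this proposition, your argument is the cleaner one.
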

Note that by setting $t = \exp \left(  -\frac{\epsilon^2 n}{2B} \right)$ in Proposition \ref{prop:mmd_final_bound}, we have 
\begin{align*}
     P\left(\xi(X_1, ..., X_n) /(1/\sqrt{n}) > M_t \right) \leq t,
\end{align*}
where $M_t := \sqrt{2B \log(1/t)} + 2 \sqrt{B}$. Thus, $\xi(X_1,...,X_n) = O_p(1/\sqrt{n})$, which concludes Proposition \ref{mmd_convergence_rate}.

Now, we only need to prove Proposition \ref{prop:mmd_final_bound}. The proof follows a standard procedure to bound an empirical process $\xi(X_1, ..., X_n) $ where we first bound it in probability w.r.t. its expectation using concentration inequalities and then we bound its expectation via a complexity notation of the witness function class $\mathcal{H}$.  

\noindent \textbf{Preliminaries}. Before proving Proposition \ref{prop:mmd_final_bound}, we present some relevant notations and preliminary results from which we combine to derive a proof for Proposition \ref{prop:mmd_final_bound}. For any function $g: \mathcal{X}^n \rightarrow \mathbb{R}$, denote 
\begin{align*}
    \delta_i g(x_1, ...,x_n) &:= \sup_{z} g(x_1, ...,x_{i-1}, z, x_{i+1}, ...,x_n) - \inf_{z} g(x_1, ...,x_{i-1}, z, x_{i+1}, ...,x_n), \\
    \|\delta_i g\|_{\infty} &:= \sup_{x_1,...,x_n} |\delta_i g(x_1, ...,x_n)|. 
\end{align*}
Denote by $\mathcal{H} := \{f \in \mathcal{F}: \|f\|_{\mathcal{F}} \leq 1\}$ the unit ball of the RKHS $\mathcal{F}$. For any $\{x_i\}_{i=1}^n \in \mathcal{X}^n$, we denote $\mathcal{H} \circ \{x_1, ..., x_n\} := \{ (f(x_1), ..., f(x_n)) \in \mathbb{R}^n: f \in \mathcal{H} \}$. 

\begin{lem}[\textbf{McDiarmid's inequality} \citep{afol}]
Let $X_1, ..., X_n$ be independent random variables in $\mathcal{X}$. Let $g: \mathcal{X}^n \rightarrow \mathbb{R}$ be any function such that $\|\delta_i g\|_{\infty} < \infty, \forall i$. For any $\epsilon > 0$, we have 
\begin{align*}
    P\left( g(X_1, ...,X_n) - \mathbb{E} g(X_1,...,X_n) \geq \epsilon \right) \leq \exp\left(  -\frac{2 \epsilon^2}{ \sum_{i=1}^n \|\delta_i g\|_{\infty}^2 } \right). 
\end{align*}
\end{lem}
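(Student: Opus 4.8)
The plan is to prove this bounded-differences inequality by the classical \emph{Doob martingale} construction combined with Hoeffding's lemma and a Chernoff bound. Writing $g := g(X_1,\dots,X_n)$ and letting $\mathcal{F}_i := \sigma(X_1,\dots,X_i)$ be the filtration generated by the first $i$ variables (with $\mathcal{F}_0$ trivial), I would define the conditional expectations
\begin{align*}
    V_i := \mathbb{E}[g \mid \mathcal{F}_i], \quad i = 0, 1, \dots, n,
\end{align*}
so that $V_0 = \mathbb{E}[g]$ and $V_n = g$ almost surely. The sequence $(V_i)$ is a martingale, its increments $D_i := V_i - V_{i-1}$ form a martingale difference sequence, and $\sum_{i=1}^n D_i = g - \mathbb{E}[g]$. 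The whole argument then reduces to controlling the conditional moment generating function of each $D_i$.

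The key steps, in order, are as follows. (1) Show that, conditional on $\mathcal{F}_{i-1}$, the increment $D_i$ is mean-zero and confined to an interval of length at most $\|\delta_i g\|_\infty$. This is where independence of the $X_j$ enters: using independence one writes $V_i = h(X_i)$ and $V_{i-1} = \mathbb{E}_{X_i}[h(X_i)]$, where $h(x_i) := \int g(X_1,\dots,X_{i-1},x_i,x_{i+1},\dots,x_n)\,dP_{X_{i+1}}\!\cdots dP_{X_n}$ is $\mathcal{F}_{i-1}$-measurable in its suppressed arguments, so that $D_i = h(X_i) - \mathbb{E}_{X_i}[h(X_i)]$ has conditional mean zero and range $\sup h - \inf h \le \int \delta_i g\,dP_{X_{i+1}}\!\cdots dP_{X_n} \le \|\delta_i g\|_\infty$. (2) Apply Hoeffding's lemma to the conditional law: a mean-zero variable confined to an interval of length $c$ has MGF at most $\exp(\lambda^2 c^2/8)$, giving $\mathbb{E}[e^{\lambda D_i}\mid \mathcal{F}_{i-1}] \le \exp(\lambda^2 \|\delta_i g\|_\infty^2/8)$. (3) Peel the martingale via the tower property, writing $\mathbb{E}[e^{\lambda\sum_i D_i}] = \mathbb{E}\big[e^{\lambda\sum_{i<n}D_i}\,\mathbb{E}[e^{\lambda D_n}\mid\mathcal{F}_{n-1}]\big]$ and iterating to obtain
\begin{align*}
    \mathbb{E}\big[ e^{\lambda (g - \mathbb{E}[g])} \big] \le \exp\Big( \tfrac{\lambda^2}{8} \sum_{i=1}^n \|\delta_i g\|_\infty^2 \Big).
\end{align*}
(4) Finish with the Chernoff bound $P(g - \mathbb{E}[g] \ge \epsilon) \le e^{-\lambda\epsilon}\,\mathbb{E}[e^{\lambda(g-\mathbb{E}[g])}]$ and optimize over $\lambda > 0$; the choice $\lambda = 4\epsilon/\sum_i \|\delta_i g\|_\infty^2$ yields exactly $\exp\big( -2\epsilon^2/\sum_i \|\delta_i g\|_\infty^2 \big)$.

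The main obstacle is step (1): rigorously establishing that the conditional range of $D_i$ is bounded by $\|\delta_i g\|_\infty$. The subtlety is that $V_{i-1}$ already averages over $X_i,\dots,X_n$ whereas $V_i$ averages only over $X_{i+1},\dots,X_n$, so one must collapse $D_i$ into the single clean form $h(X_i) - \mathbb{E}_{X_i}[h(X_i)]$ and bound the oscillation of $h$ in the $i$-th slot by $\delta_i g$; independence is precisely what allows this collapse and what makes $\mathbb{E}[D_i\mid\mathcal{F}_{i-1}]=0$. Hoeffding's lemma in step (2) may be invoked as a standard fact or proved in a line from convexity of $x \mapsto e^{\lambda x}$ on the bounding interval, and the remaining tower-property peeling and scalar optimization over $\lambda$ are routine.
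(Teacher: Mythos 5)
Your proof is correct, and there is nothing in the paper to compare it against: the paper states McDiarmid's inequality without proof, citing it as a classical result, and your Doob-martingale argument (bounded increments via the independence-enabled collapse $D_i = h(X_i) - \mathbb{E}_{X_i}[h(X_i)]$ with oscillation at most $\|\delta_i g\|_{\infty}$, conditional Hoeffding lemma, tower-property peeling, and Chernoff optimization at $\lambda = 4\epsilon/\sum_i \|\delta_i g\|_{\infty}^2$) is precisely the standard proof found in the cited literature. All constants check out, including the final exponent $-2\epsilon^2/\sum_{i=1}^n \|\delta_i g\|_{\infty}^2$.
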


\begin{defn}
The Rademacher complexity of a set $\mathcal{T} \subseteq \mathbb{R}^n$ is defined as 
\begin{align*}
    Rad(\mathcal{T}): = \mathbb{E} \sup_{t \in \mathcal{T}} \frac{1}{n} \sum_{i=1}^n \Omega_i t_i,
\end{align*}
where $\Omega_1, ..., \Omega_n \in \{-1,1\}$ are independent Rademacher random variables, i.e., $P(\Omega_i = 1) = P(\Omega_i = -1) = 1/2, \forall i$.
\end{defn}

\begin{lem}
We have 
\begin{align*}
    \mathbb{E} \left[\xi(X_1, ..., X_n) \right] \leq 2 \mathbb{E} Rad( \mathcal{H} \circ \{X_1, ..., X_n\} ).
\end{align*}
In addition, we have 
\begin{align*}
    \mathbb{E} Rad( \mathcal{H} \circ \{X_1, ..., X_n\} ) \leq \sqrt{ \frac{B}{n} }.
\end{align*}
\label{lemma:bound_in_expectation}
\end{lem}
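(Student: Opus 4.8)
The plan is to recognize that, since the unit ball $\mathcal{H}$ is symmetric ($f \in \mathcal{H} \iff -f \in \mathcal{H}$), the quantity $\xi$ is exactly a one-sided supremum of a centered empirical process,
\begin{align*}
    \xi(X_1,\dots,X_n) = \sup_{f \in \mathcal{H}} \left( \frac{1}{n}\sum_{i=1}^n f(X_i) - \mathbb{E}_{X \sim P}[f(X)] \right),
\end{align*}
so the two claims are precisely the classical symmetrization inequality followed by an RKHS-specific Rademacher estimate. I would treat the two inequalities separately, as in the statement of the lemma.

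For the first inequality I would introduce an independent ghost sample $X_1',\dots,X_n' \overset{\text{i.i.d.}}{\sim} P$, rewrite the population mean as $\mathbb{E}_{X \sim P}[f(X)] = \mathbb{E}_{X'}\big[\frac{1}{n}\sum_i f(X_i')\big]$, and pull this inner expectation outside the supremum by Jensen's inequality to obtain
\begin{align*}
    \mathbb{E}[\xi] \leq \mathbb{E}_{X,X'} \sup_{f \in \mathcal{H}} \frac{1}{n}\sum_{i=1}^n \big( f(X_i) - f(X_i') \big).
\end{align*}
Because each matched pair $(X_i, X_i')$ is exchangeable, I would insert i.i.d. Rademacher signs $\Omega_i$ without changing the joint law, then upper-bound the supremum of the difference by the sum of two separate suprema; by symmetry of the Rademacher distribution the two resulting terms have the same expectation, which produces the factor $2\,\mathbb{E}\,Rad(\mathcal{H} \circ \{X_1,\dots,X_n\})$.

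For the second inequality I would exploit the reproducing property $f(X_i) = \langle f, k(X_i,\cdot) \rangle_{\mathcal{F}}$, so that $\sum_i \Omega_i f(X_i) = \langle f, \sum_i \Omega_i k(X_i,\cdot) \rangle_{\mathcal{F}}$; taking the supremum over $\|f\|_{\mathcal{F}} \leq 1$ via Cauchy--Schwarz gives the closed form
\begin{align*}
    Rad(\mathcal{H} \circ \{X_1,\dots,X_n\}) = \frac{1}{n}\,\mathbb{E}_{\Omega} \Big\| \sum_{i=1}^n \Omega_i k(X_i,\cdot) \Big\|_{\mathcal{F}}.
\end{align*}
Applying Jensen to move the expectation inside a square root and expanding the squared norm, the cross terms vanish since $\mathbb{E}[\Omega_i \Omega_j] = \delta_{ij}$, leaving $\sum_i k(X_i,X_i) \leq nB$ by the boundedness assumption; this yields $\frac{1}{n}\sqrt{nB} = \sqrt{B/n}$, and taking the outer expectation over the $X_i$ preserves the bound because the right-hand side is deterministic. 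The only delicate point is the symmetrization step, where one must justify that the ghost-sample substitution and the insertion of Rademacher signs genuinely leave the joint distribution unchanged and that splitting the supremum of a difference into two suprema is a valid upper bound; the RKHS half is essentially forced, since Cauchy--Schwarz is tight on the unit ball and $\sup_{x,y} k(x,y) \leq B$ directly controls the diagonal terms $k(X_i,X_i)$.
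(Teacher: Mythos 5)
Your proposal is correct and follows essentially the same route as the paper's proof: ghost-sample symmetrization with Rademacher signs and Jensen's inequality for the first bound, then the reproducing property, Cauchy--Schwarz, and Jensen (with the vanishing cross terms $\mathbb{E}[\Omega_i\Omega_j]=0$ for $i\neq j$) for the second. The only cosmetic difference is that you write the empirical process as empirical-minus-population rather than population-minus-empirical, which you correctly justify via the symmetry of the unit ball $\mathcal{H}$.
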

\begin{proof}
Let $\tilde{X}_1, ..., \tilde{X}_n$ be new independent samples from $P$ and independent of $S := \{X_1, ..., X_n\}$.
We have 
\begin{align*}
    \mathbb{E} \left[\xi(X_1, ..., X_n) \right] = \mathbb{E} \left[ \text{MMD}\left(\frac{1}{n} \sum_{i=1}^n \delta_{X_i}, P; \mathcal{F} \right) \right] 
    &= \mathbb{E} \sup_{f \in \mathcal{H}} \left( \mathbb{E} f(X) - \frac{1}{n} \sum_{i=1}^n f(X_i) \right) \\
    &= \mathbb{E} \sup_{f \in \mathcal{H}} \frac{1}{n} \sum_{i=1}^n \mathbb{E} \left[ f(\tilde{X}_i) - f(X_i) \bigg | S \right] \\ 
    &\overset{(a)}{\leq} \mathbb{E} \mathbb{E}  \sup_{f \in \mathcal{H}} \frac{1}{n} \sum_{i=1}^n  \left( f(\tilde{X}_i) - f(X_i) \right) \\ 
    &= \mathbb{E}  \sup_{f \in \mathcal{H}} \frac{1}{n} \sum_{i=1}^n \left( f(\tilde{X}_i) - f(X_i) \right) \\ 
    &\overset{(b)}{=} \mathbb{E}  \sup_{f \in \mathcal{H}} \frac{1}{n} \sum_{i=1}^n \Omega_i \left( f(\tilde{X}_i) - f(X_i) \right) \\ 
    &\leq \mathbb{E}  \left[ \sup_{f \in \mathcal{H}} \frac{1}{n} \sum_{i=1}^n \Omega_i f(\tilde{X}_i) + \sup_{f \in \mathcal{H}} \frac{1}{n} \sum_{i=1}^n (-\Omega_i) f(X_i) \right] \\
    &= 2 \mathbb{E}  \sup_{f \in \mathcal{H}} \frac{1}{n} \sum_{i=1}^n \Omega_i f(X_i) \\
    &= 2 \mathbb{E} Rad(\mathcal{H} \circ \{X_1, ..., X_n\}). 
\end{align*}
Here $(a)$ follows from Jensen's inequality for convex function $sup$ and $(b)$ follows from $\left(f(\tilde{X}_i) - f(X_i) \right)_{1 \leq i \leq n}$ has the same distribution as $\left(\Omega_i (f(\tilde{X}_i) - f(X_i)) \right)_{1 \leq i \leq n}$. 

In addition, we have 
\begin{align*}
    \mathbb{E} Rad(\mathcal{H} \circ \{X_1, ..., X_n\}) &= \mathbb{E}  \sup_{f \in \mathcal{H}} \frac{1}{n} \sum_{i=1}^n \Omega_i f(X_i) \\ &= \mathbb{E} \sup_{f \in \mathcal{H}} \bigg \langle f, \frac{1}{n} \sum_{i=1}^n \Omega_i k(X_i, \cdot) \bigg \rangle_{\mathcal{F}} \\ 
    &\leq \mathbb{E} \sup_{f \in \mathcal{H}} \|f\|_{\mathcal{F}} \times \bigg\|  \frac{1}{n} \sum_{i=1}^n \Omega_i k(X_i, \cdot) \bigg \|_{\mathcal{F}} \text{(Cauchy-Schwartz inequality)}\\
\end{align*}
\begin{align*}
    &\leq \mathbb{E} \bigg\|  \frac{1}{n} \sum_{i=1}^n \Omega_i k(X_i, \cdot) \bigg \|_{\mathcal{F}} \\ 
    &= \frac{1}{n} \mathbb{E} \sqrt{  \sum_{i,j} \Omega_i \Omega_j k(X_i, X_j)  } \\
    &\leq \frac{1}{n} \sqrt{ \mathbb{E} \sum_{i,j} \Omega_i \Omega_j k(X_i, X_j)  } \text{ (Jensen's inequality)}\\ 
    &= \frac{1}{n} \sqrt{ \mathbb{E} \sum_{i=1}^n k(X_i,X_i) } \leq \frac{1}{n} \sqrt{ nB} = \sqrt{\frac{B}{n}}.
\end{align*}
The last equality is due to that $\Omega_i^2 = 1, \forall i$ and $\mathbb{E} \left[ \Omega_i \Omega_j \right] = 0, \forall i \neq j$.
\end{proof}

\begin{proof}[Proof of Proposition \ref{prop:mmd_final_bound}]
Now, we are ready to prove Proposition \ref{prop:mmd_final_bound}. 
For any $\{x_i\}_{i=1}^n \in \mathcal{X}^n$, we have 
\begin{align*}
    \delta_i \xi(x_1, ..., x_n) 
    &\leq \sup_{z} \bigg \| \frac{1}{n} \sum_{j=1, j \neq i}^n k(x_j, \cdot) + \frac{k(z, \cdot)}{n} - \int k(x, \cdot) dP(x) \bigg \|_{\mathcal{F}} \\ 
    &- \inf_{z} \bigg \| \frac{1}{n} \sum_{j=1, j \neq i}^n k(x_j, \cdot) +  \frac{k(z, \cdot)}{n} - \int k(x, \cdot) dP(x) \bigg \|_{\mathcal{F}} \\ 
    &\leq \sup_{z,z'} \bigg \| \frac{k(z, \cdot)}{n} - \frac{k(z', \cdot)}{n} \bigg\|_{\mathcal{F}} \leq \frac{2\sqrt{B}}{n}. 
\end{align*}
Thus, it follows from McDiarmid's inequality that 
\begin{align}
    P( \xi(X_1, ...,X_n) - \mathbb{E} \xi(X_1,...,X_n) \geq \epsilon) \leq \exp \left( - \frac{2 n \epsilon^2}{B} \right). 
    \label{eq:bound_in_probability}
\end{align}
Eq. (\ref{eq:bound_in_probability}) and Lemma \ref{lemma:bound_in_expectation} immediately yield Proposition \ref{prop:mmd_final_bound}.
\end{proof}

\section{Appendix B. Algorithm details}
In this appendix, we present the algorithm details for the tabular and Atari experiments in the main text. 

\subsection{Tabular experiment}
In tabular case, the particles $Z_{\theta}(s,a)$ reduces to tabular representation $\{ \theta_i(s,a)\}_{i=1}^N$, so a return distribution is represented as a mixture of Diracs
\begin{align*}
    \mu(s,a) = \frac{1}{N} \sum_{i=1}^N \delta_{\theta_i(s,a)}. 
\end{align*}
The algorithm details used for tabular policy evaluation in the main text is presented in Algorithm \ref{alg:tabular_policy_evaluation}. In both tabular MMD-DRL and QRDRL cases, the TD gradients have a closed-form expression which we explicitly used in our tabular experiment. The detailed values of each (hyper-)parameters of the algorithms used in our experiment are reported in Table \ref{tab:tabular_policy_evaluation_params}. 
\begin{table}[h!]
    \centering
    \begin{tabular}{l|c} 
        \textbf{(Hyper-)Parameters} & \textbf{Values} \\
        \hline
        Learning rate schedule & $\alpha_t = \frac{1}{t^{0.2}}$  \\
        Particle initialization & $\mathcal{N}(-1, 0.08)$ \\ 
        Number of episodes per iteration & $100$ \\ 
        Number of iterations & $15$ \\ 
        Number of particles $N$ & $30$ \\ 
        Number of MC rollouts & $10,000$ \\ 
        Kernel bandwidth $h$ (MMDRL only) & $\{8,10,12\}$ \\ 
        Quantiles (QRDRL only) & $\{ \frac{2i-1}{2N}: 1 \leq i \leq N \}$
    \end{tabular}
    \caption{The (hyper-)parameters of the algorithms used in our tabular policy evaluation experiment with the Chain MDP.}
    \label{tab:tabular_policy_evaluation_params}
\end{table}

\begin{algorithm}
\DontPrintSemicolon
\KwRequire{Number of particles $N$, kernel $k$, discount factor $\gamma \in [0,1]$, evaluation policy $\pi$, learning rate $\alpha_t$, tabular particles $\{\theta_i(s,a)\}_{i=1}^N$}

\KwInitialization{initial particles $\theta$, initial copy particles $\theta^{-} \leftarrow \theta$, and initial state $s_0$}

\For{t=1,2,...}{
Take action $a_t = \pi(s_t)$ and observe $s_{t+1} \sim P(\cdot|s_t, a_t)$ and $r_t \sim \mathcal{R}(s_t, a_t)$ 

Compute Bellman target particles 
\begin{align*}
    \hat{T} \theta_i^{-} \leftarrow r_t + \gamma \theta^{-}_i(s_{t+1}, \pi(s_{t+1})), \forall i \in \{1,...,N\} 
\end{align*}

Compute TD gradient 
\begin{align*}
    &\text{\textbf{MMDRL Case}: } g_i \leftarrow \frac{\partial}{ \partial \theta_i(s_t,a_t)} \text{MMD}_b^2( \{\theta_i(s_t,a_t)\}_{i=1}^N,  \{\hat{T} \theta_i^{-}\}_{i=1}^N; k), \forall i \in \{1,...,N\} \\ 
    &\text{\textbf{QRDRL Case}: }   g_i \leftarrow \frac{\partial}{ \partial \theta_i(s_t,a_t)} \frac{1}{N} \sum_{j=1}^N \left( \hat{T} \theta_j^{-} - \theta_i(s_t,a_t) \right) \left( \frac{2i -1}{2N} - 1_{\{ \hat{T}\theta_j^{-} <  \theta_i(s_t,a_t)\}}  \right), \forall i \in \{1,...,N\}
\end{align*}

Update 
\begin{align*}
    \theta_i(s_t, a_t) &\leftarrow \theta_i(s_t, a_t) - \alpha_t g_i,\forall i \in \{1,...,N\}\\ 
    \theta^{-} &\leftarrow \theta 
\end{align*}

}
\KwOutput{ Approximate distribution $\mu(s,a) = \frac{1}{N} \sum_{i=1}^N \delta_{\theta_i(s,a)}$
}
\caption{\textbf{Tabular policy evaluation using TD-style update}}
\label{alg:tabular_policy_evaluation}
\end{algorithm}

\subsection{Atari game experiment}

We extend MMDRL to DQN-like architecture to create a novel deep RL, namely MMDQN. We use the same architecture of DQN except that we change the last layer to the size of $N \times |\mathcal{A}|$, instead of the size $|\mathcal{A}|$. In addition, we replace the squared loss in DQN by the empirical MMD loss. The full details for MMDQN are provided in Algorithm \ref{alg:mmd-dqn}. We expect that our framework would also benefit from recent orthogonal improvements to DQN such as double-DQN \citep{DBLP:conf/aaai/HasseltGS16}, the dueling architecture \citep{DBLP:conf/icml/WangSHHLF16} and prioritized replay \citep{DBLP:journals/corr/SchaulQAS15} but did not include these for simplicity. In Table \ref{tab:hyperparameters}, we provide the hyperparameter details of QR-DQN and MMDQN used in the Atari games. The hyperparameters in MMDQN that share with QR-DQN are intentionally set the same to allow for fair comparison.

\begin{table}[h]
    \centering
    \setlength{\extrarowheight}{4pt}
    \begin{adjustbox}{width=0.48\textwidth}
    \begin{tabular}{l|c|c}
    \textbf{Hyperparameters}     &  \textbf{QR-DQN} & \textbf{MMDQN} \\
    \hline 
    Learning rate       & $0.00005$ & $0.00005$ \\ 
    Optimizer & Adam & Adam \\ 
    $\epsilon_{ADAM}$ & $0.0003125$ & $0.0003125$ \\ 
    $N$ & $200$ & $200$ \\ 
    Quantiles & $\{ \frac{2i-1}{2N}: 1 \leq i \leq N \}$ & N/A \\
    Kernel bandwidth & N/A & $\{1,2,...,9,10\}$ \\
    \end{tabular}
    \end{adjustbox}
    \caption{The MMDQN hyperparameters as compared to those of QR-DQN.}
    \label{tab:hyperparameters}
\end{table}

\begin{algorithm}[h!]
\DontPrintSemicolon
\KwRequire{Number of particles $N$, kernel $k$ (e.g., Gaussian kernel), discount factor $\gamma \in [0,1]$, learning rate $\alpha$, replay buffer $\mathcal{M}$, main network $Z_{\theta}$, target network $Z_{\theta^{-}}$, and a policy $\pi$ (e.g., $\epsilon$-greedy policy w.r.t. $Q_{\theta}(s,a) =  \frac{1}{N}\sum_{i=1}^N Z_{\theta}(s, a)_i, \forall s,a$)}

Initialize $\theta$ and $\theta^{-} \leftarrow \theta$  \; 

\For{t = 1,2,...}{

Take action $a_t \sim \pi(\cdot | s_t; \theta)$, receive reward $r_t \sim \mathcal{R}(\cdot| s_t, a_t)$, and observe $s_{t+1} \sim P(\cdot| s_t, a_t)$  \;

Store $(s_t, a_t, r_t, s_{t+1})$ to the replay buffer $\mathcal{M}$ \; 

Randomly draw a batch of transition samples $(s,a,r,s')$ from the replay buffer $\mathcal{M}$\; 

Compute a greedy action 
\begin{align*}
    a^* \leftarrow \argmax_{a' \in \mathcal{A}}  \frac{1}{N}\sum_{i=1}^N Z_{\theta^{-}}(s', a')_i
\end{align*} 

Compute the empirical Bellman target measure 
\begin{align*}
    \hat{T} Z_i^{-} \leftarrow r + \gamma Z_{\theta^{-}}(s', a^*)_i, \forall i \in \{1,...,N\}
\end{align*}

Update the main network 
\begin{align*}
    \theta \leftarrow \theta - \alpha \Delta_{\theta} \text{MMD}_b \left(\{Z_{\theta}(s,a)_i\}_{i=1}^N, \{\hat{T} Z_i^{-}\}_{i=1}^N; k \right)
\end{align*}
\; 

Periodically update the target network $\theta^{-} \leftarrow \theta$ 

}

\caption{\textbf{MMDQN}}
\label{alg:mmd-dqn}
\end{algorithm}

 We evaluated our algorithm on 55 Atari 2600 games \citep{Bellemare_2013} following the standard training and evaluation procedures \citep{mnih2015human,DBLP:conf/aaai/HasseltGS16}. For every 1M training steps in the environment, we computed the average scores of the agent by freezing the learning and evaluating the latest agent for 500K frames. We truncated episodes at 108K frames (equivalent to 30 minutes of game playing). We used the 30 no-op evaluation settings where we play a random number (up to 30) of no-op actions at the beginning of each episode during evaluation. We report the best score for a game by an algorithm which is the algorithm’s highest evaluation score in that game across all evaluation iterations during the training course (given the same hyperparameters are shared for all games). 

The human normalized scores of an agent per game is the agent's normalized scores such that 0\% corresponds to a random agent and 100\% corresponds to the average score of a human expert. The human-normalized scores used in the paper are explicitly defined by 
\begin{align*}
    score = \frac{agent - random}{human - random},
\end{align*}
where $agent, human, random$ denotes the raw scores (undiscounted returns) for the given agent, the reference human player and the random player \cite{mnih2015human}, resp.,  in each game. 

The per-game percentage improvement of MMDQN over QR-DQN is computed as follow 
\begin{align*}
    percentage\_improvement = \frac{ score_{MMDQN} - score_{QR-DQN}  }{score_{QR-DQN}} \times 100 \%,
\end{align*}
where $ score_{MMDQN}$ and $score_{QR-DQN}$ are the best raw evaluation score of MMDQN and QR-DQN in the considered game. The log-scaled percentage improvement is computed as 
\begin{align*}
    log\_percentage\_improvement = 1_{\{ percentage\_improvement \geq 0 \}} \log(|percentage\_improvement| + 1).
\end{align*}

\section{Appendix C. Further experimental results} 
In Figure \ref{fig:vis_mmd_dqn_breakout} we visualize the behaviour of our MMDQN in the Breakout game. Three rows correspond to  3 consecutive frames of the Breakout game accompanied by the approximate return distributions learnt by MMDQN. Since the particles learnt by MMDQN represent empirical samples of the return distributions, we can visualize the return distributions via the learnt particles by plotting the histogram (with $17$ bins in this example) of these particles. The learnt particles in MMDQN can maintain diversity in approximating the return distributions even though there is no order statistics in MMDQN as in the existing distributional RL methods such as QR-DQN. The 3 consecutive frames illustrate that the ball is moving away from the left to the right. In response, MMDQN also moves the paddle away from the left by gradually placing the probability mass of the return for the LEFT action towards smaller values. In particular, in the first frame where the ball is still far away from the ground, the MMDQN agent does not make a significant difference between actions. As the ball is moving closer the ground from the left (the second and third frame), the agent becomes clearer that the LEFT action is not beneficial, thus placing the action's probability mass to smaller values.  

\begin{figure}
\centering
\subfloat{%
  \includegraphics[scale=0.8]{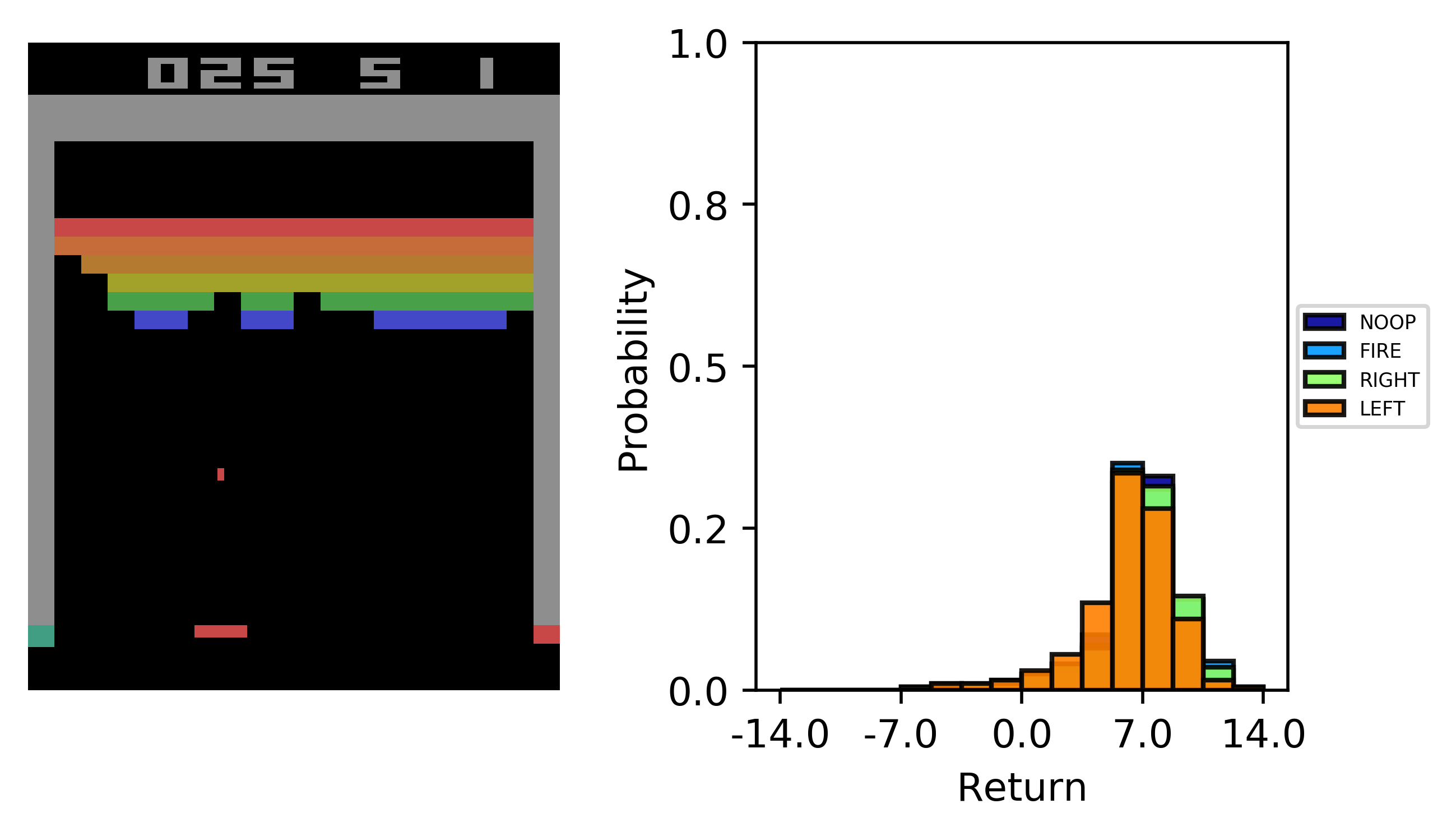}%
}

\subfloat{%
  \includegraphics[scale=0.8]{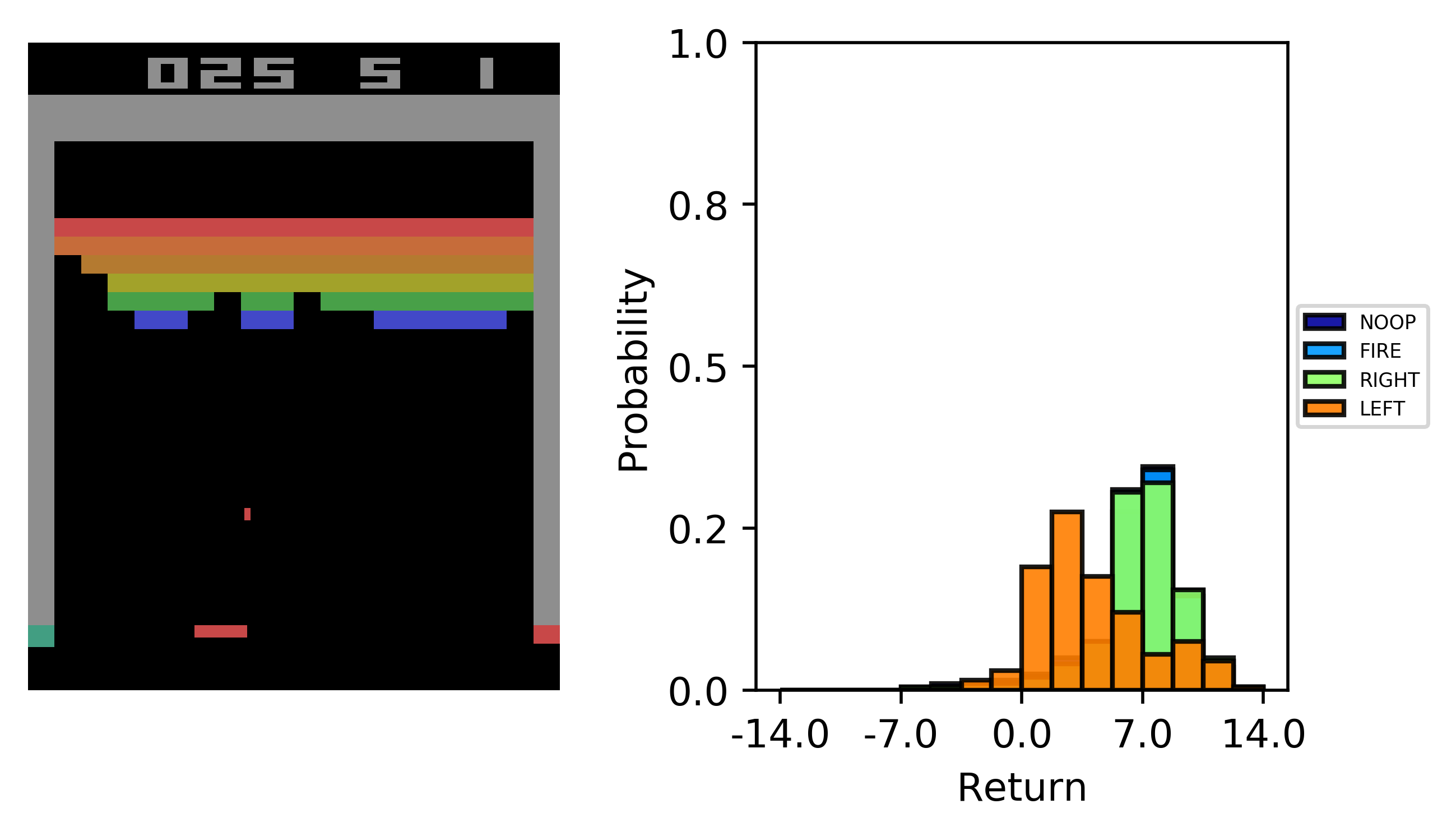}%
}

\subfloat{%
  \includegraphics[scale=0.8]{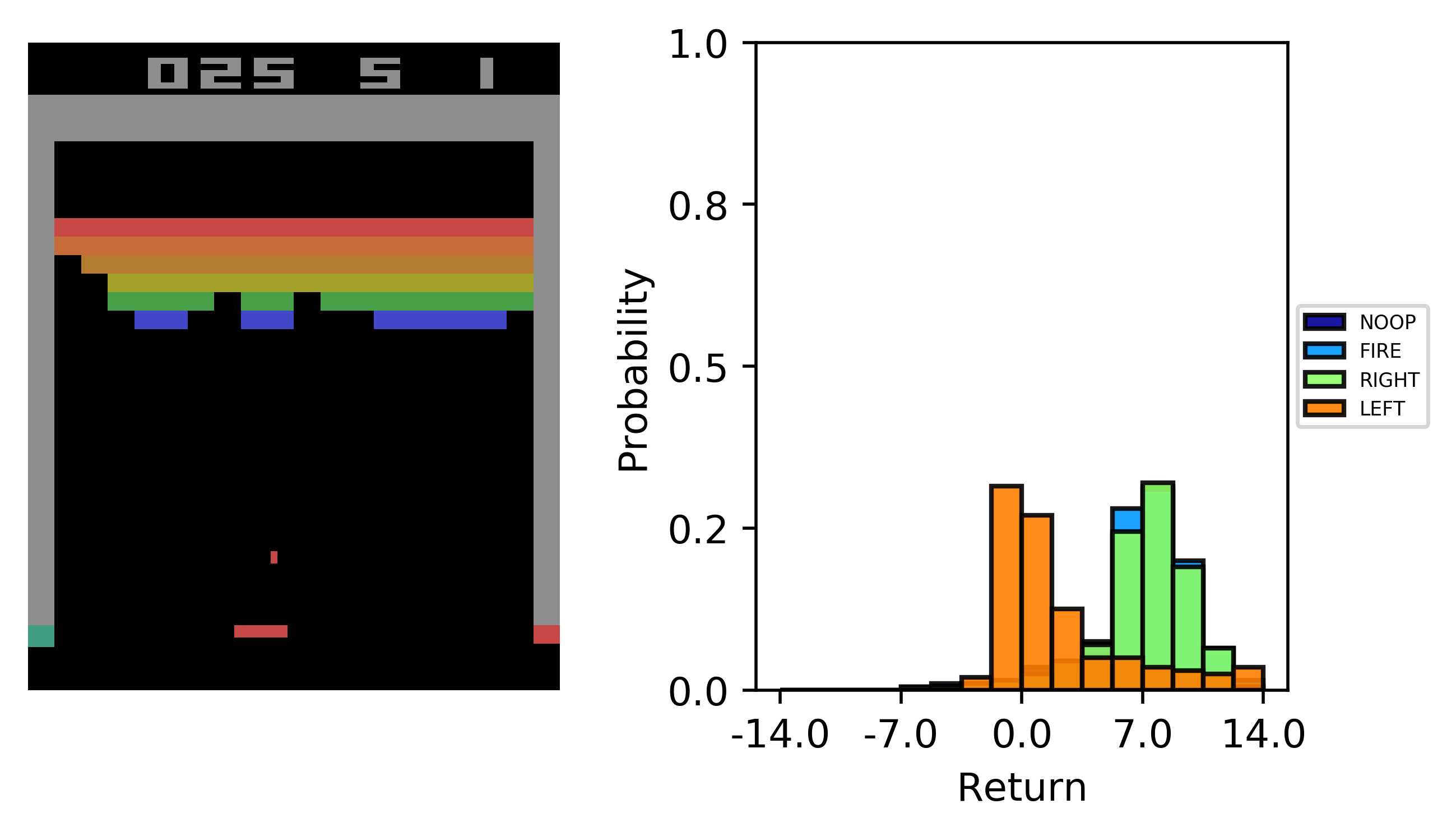}%
}

\caption{This example shows 3 consecutive frames and the approximate return distributions for all the actions in the Breakout game played by our MMDQN. The approximate return distributions plotted here are the histograms with $17$ bins constructed from the learnt particles by MMDQN.}
\label{fig:vis_mmd_dqn_breakout}
\end{figure}

\begin{figure}
    \centering
    \includegraphics[scale=0.8]{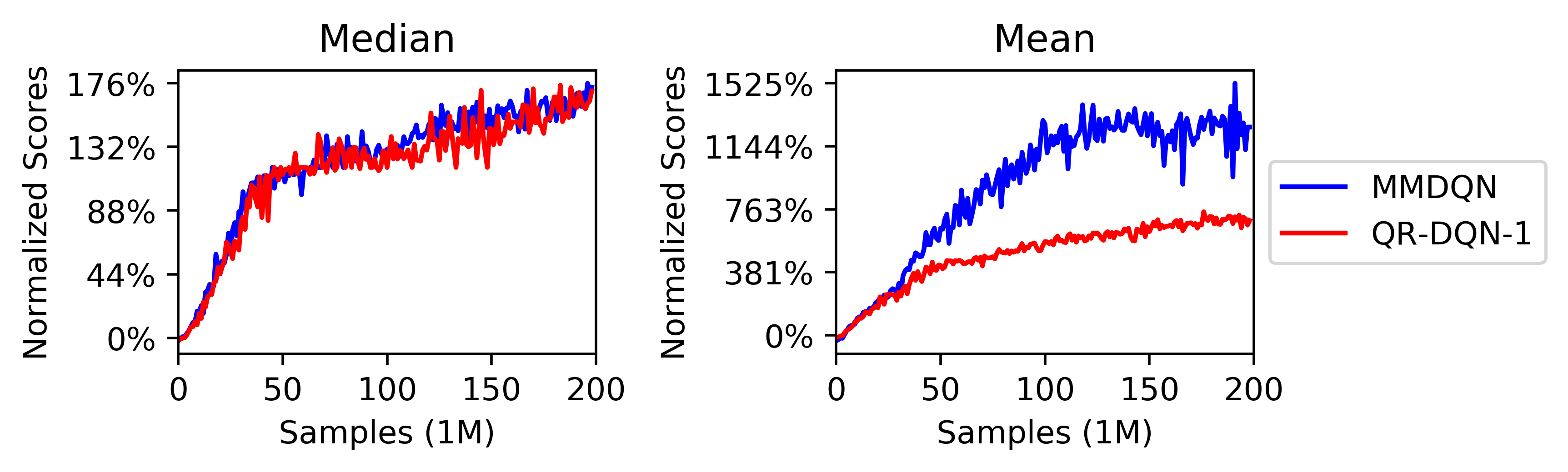}
    \caption{Median and mean of the test human-normalized scores across 55 Atari games for MMDQN (averaged over 3 seeds) and QR-DQN-1 (averaged over 2 seeds). }
    \label{fig:med_mean_55_games}
\end{figure}

We also include videos of the moves and approximate return distributions learnt by MMDQN in the supplementary. The exact video addresses in the supplementary are shown in Table \ref{tab:vid_url}. 

\begin{table}[h]
    \centering
    \setlength{\extrarowheight}{4pt}
    \begin{tabular}{l|l}
         \textbf{Games} & \textbf{Video address}   \\
         \hline 
         Breakout & \url{https://youtu.be/7P4oeJWJ6oE } \\ 
         BeamRider & \url{https://youtu.be/e6VQTynnbR8} \\ 
         BattleZone & \url{https://youtu.be/eXLs2pZJPCk} \\ 
         Qbert &  \url{https://youtu.be/64uHpoAPIvM} \\ 
         Pong & \url{https://youtu.be/NX5kXT59oJ4} \\ 
    \end{tabular}
    \caption{Addresses to videos of the moves and approximate return distributions learnt by MMDQN.}
    \label{tab:vid_url}
\end{table}


We show the median and mean of the test human-normalized scores across all $55$ Atari games in Figure \ref{fig:med_mean_55_games}, the online learning curves in all the 55 Atari games in Figure \ref{fig:learing_curve_55_games} and provide the full raw scores of MMDQN in Table \ref{tab:raw_score_full}.

\begin{figure*}[htp]
    \centering
    \includegraphics[scale=0.35]{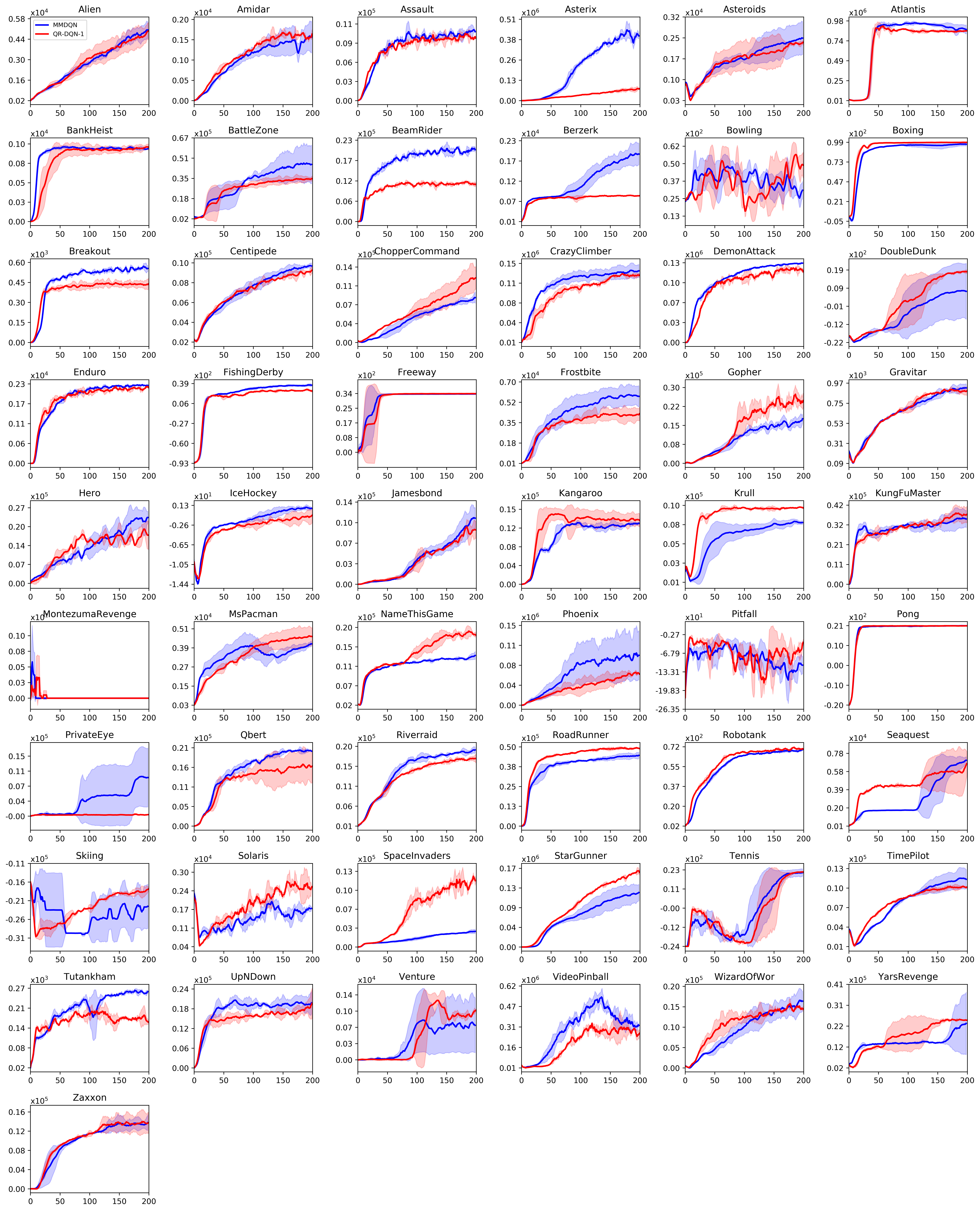}
    \caption{Online training curves for MMDQN (3 seeds) and QR-DQN-1 (2 seeds) on all 55 Atari 2600 games. Curves are averaged over the seeds and smoothed over a sliding window of 5 iterations. 95\% C.I. Reference values are from \cite{DBLP:conf/aaai/DabneyRBM18}.}
    \label{fig:learing_curve_55_games}
\end{figure*}

\begin{table*}[htp]
    \centering
    \setlength{\extrarowheight}{2.5pt}
    \begin{adjustbox}{width=0.9\textwidth}
    \begin{tabular}{l|r|r|r|r|r|r|r}
    \textbf{GAMES} & 
    \textbf{RANDOM} & 
    \textbf{HUMAN} & 
    \textbf{DQN} & 
    \textbf{PRIOR. DUEL.} & 
    \textbf{C51} & 
    \textbf{QR-DQN-1} & 
    \textbf{MMDQN} \\
    \hline
        Alien & 227.8 & 7,127.7 & 1,620.0 & 3,941.0 & 3,166 & 4,871 & \textbf{6,918.8}\\
        Amidar & 5.8 & 1,719.5 & 978.0 & 2,296.8 & 1,735 & 1,641 & \textbf{2,370.1}\\
        Assault & 222.4 & 742.0 & 4,280.4 & 11,477.0 & 7,203 & \textbf{22,012} & 19,804.7\\
        Asterix & 210.0 & 8,503.3 & 4,359.0 & 375,080.0 & 406,211 & 261,025 & \textbf{775,250.9}\\
        Asteroids & 719.1 & 47,388.7 & 1.364.5 & 1,192.7 & 1,516 & \textbf{4,226} & 3,321.3 \\ 
        Atlantis & 12,850.0 & 29,028.1 & 279,987.0 & 841,075 & 395,762.0 & 971,850 & \textbf{1,017,813.3}\\ 
        BlankHeist & 14.2 & 753.1 & 455.0 & \textbf{1,503.1} & 976 & 1,249 & 1,326.6\\ 
        BattleZone & 2,360.0 & 37,187.5 & 29,900.0 & 35,520.0 & 28,742 & 39,268 & \textbf{64,839.8}\\ 
        BeamRider & 363.9 & 16,926.5 & 8,627.5 & 30,276.5 & 14,074 & \textbf{34,821} & 34,396.2 \\
        Berzerk &  123.7 & 2,630.4 & 585.6 & \textbf{3,409.0} & 1,645 & 3,117 & 2,946.1\\ 
        Bowling & 23.1 & 160.7 & 50.4 & 46.7 & \textbf{81.8} & 77.2 & 65.8\\ 
        Boxing & 0.1 & 12.1 & 88.0 & 98.9 & 97.8 & \textbf{99.9} & 99.2 \\
        Breakout & 1.7 & 30.5 & 385.5 & 366.0 & 748 & 742 & \textbf{823.1} \\
        Centipede & 2,090.9 & 12,017.0 & 4,657.7 & 7,687.5 & 9,646 & 12,447 & \textbf{13,180.9}\\
        ChopperCommand & 811.0 & 7,387.8 & 6,126.0 & 13.185.0 & {15,600} & 14,667 & \textbf{15,687.9} \\ 
        CrazyClimber & 10,780.5 & 35,829.4 & 110,763.0 & 162,224.0 & \textbf{179,877} & 161,196 & 169,462.0 \\ 
        DemonAttack & 152.1 & 1,971.0 & 12,149.4 & 72,878.6 & 130,955 & 121,551 & \textbf{135,588.7} \\
        DoubleDunk & -18.6 & -16.4 & -6.6 & -12.5 & 2.5 & \textbf{21.9} & {12.6} \\
        Enduro & 0.0 & 860.5 & 729.0 & 2,306.4 & \textbf{3,454} & 2,355 & {2,358.5} \\
        FishingDerby & -91.7 & -38.7 & -4.9 & 41.3 & 8.9 & 39.7 & \textbf{49.6} \\ 
        Freeway & 0.0 & 29.6 & 30.8 & 33.0 & 33.9 & \textbf{34} & 33.7 \\
        Frostbite & 65.2 & 4,334.7 & 797.4 & {7,413.0} & 3,965 & 4,384 & \textbf{8,251.4} \\ 
        Gopher & 257.6 & 2,412.5 & 8,777.4 & 104,368.2 & 33,641 & \textbf{113,585} & 38,448.1\\
        Gravitar & 173.0 & 3,351.4 & 473.0 & 238.0 & 440 & 995 & \textbf{1,092.5}\\
        Hero & 1,027.0 & 30,826.4 & 20,437.8 & 21,036.5 & \textbf{38,874} & 21,395 & 28,830.7\\
        IceHockey & -11.2 & 0.9 & -1.9 & -0.4 & -3.5 & -1.7 & \textbf{3.3} \\
        JamesBond & 29.0 & 302.8 & 768.5 & 812.0 & 1,909 & 4,703 & \textbf{16,028.9} \\
        Kangaroo &  52.0 & 3,035.0 & 7,259.0 & 1,792.0 & 12,853 & \textbf{15,356} & 15,154.2 \\
        Krull & 1,598.0 & 2,665.5 & 8,422.3 & 10,374.4 & 9,735 & \textbf{11,447} & 9,447.0\\
        KungFuMaster & 258.5 & 22,736.3 & 26,059.0 & 48,375.0 & 48,192 & \textbf{76,642} & 51,011.3\\
        MontezumaRevenge & 0.0 & 4,753.3 & 0.0 & 0.0 & 0.0 & 0.0 & 0.0 \\
        MsPacman &  307.3 & 6,951.6 & 3,085.6 & 3,327.3 & 3,415 & 5,821 & \textbf{6,762.8}\\
        NameThisGame & 2,292.3 & 8,049.0 & 8,207.8 & 15,572.5 & 12,542 & \textbf{21,890} & 15,221.2  \\
        Phoenix & 761.4 & 7,242.6 & 8,485.2 & 70,324.3 & 17,490 & 16,585 & \textbf{325,395.5} \\
        Pitfall & -229.4 & 6,463.7 & -286.1 & 0.0 & 0.0 & 0.0 & 0.0 \\
        Pong & -20.7 & 14.6 & 19.5 & 20.9 & 20.9 & 21.0 & 21.0 \\
        PrivateEye & 24.9 & 69,571.3 & 146.7 & 206.0 & \textbf{15,095} & 350 & {11,366.4}\\
        QBert & 163.9 & 13,455.0 & 13,117.3 & 18,760.3 & 23,784 & \textbf{572,510} & 28,448.0 \\
        Riverraid & 1,338.5 & 17,118.0 & 7,377.6 & 20,607.6 & 17,322 & 17,571 & \textbf{23000.0}\\
        RoadRunner & 11.5 & 7,845.0 & 39,544.0 & 62,151.0 & 55,839 &\textbf{ 64,262} & 54,606.8\\
        Robotank & 2.2 & 11.9 & 63.9 & 27.5 & 52.3 & 59.4 & \textbf{74.8} \\
        Seaquest & 68.4 & 42,054.7 & 5,860.6 & 931.6 & 266,434 & \textbf{8,268} & {7,979.3} \\
        Skiing & -17,098.1 & -4,336.9 & -13,062.3 & -19,949.9 & -13,901 & \textbf{-9,324} & {-9,425.3} \\
        Solaris & 1,236.3 & 12,326.7 & 3,482.8 & 133.4 & \textbf{8,342 }& 6,740 & 4,416.5 \\
        SpaceInvaders & 148.0 & 1,668.7 & 1,692.3 & 15,311.5 & 5,747 & \textbf{20,972} & 4,387.6 \\
        StarGunner & 664.0 & 10,250.0 & 54,282.0 & 125,117.0 & 49,095 & 77,495 & \textbf{144,983.7} \\
        Tennis & -23.8 & -8.3 & 12.2 & 0.0 & 23.1 & \textbf{23.6} & 23.0 \\
        TimePilot & 3,568.0 & 5,229.2 & 4,870.0 & 7,553.0 & 8,329 & 10,345 & \textbf{14,925.3} \\
        Tutankham & 11.4 & 167.6 & 68.1 & 245.9 & 280 & 297 & \textbf{319.4} \\
        UpNDown & 533.4 & 11,693.2 & 9,989.9 & 33,879.1 & 15,612 & \textbf{71,260} & 55,309.9 \\
        Venture & 0.0 & 1,187.5 & 163.0 & 48.0 & \textbf{1,520} & 43.9 & {1,116.6} \\
        VideoPinball & 16,256.9 & 17,667.9 & 196,760.4 & 479,197.0 & \textbf{949,604} & 705,662 & 756,101.8 \\
        WizardOfWor & 563.5 & 4,756.5 & 2,704.0 & 12,352.0 & 9,300 & 25,061 & \textbf{31,446.9}\\
        YarsRevenge & 3,092.9 & 54,576.9 & 18,098.9 & \textbf{69,618.1} & 35,050 & 26,447 & 28,745.7\\
        Zaxxon & 32.5 & 9,173.3 & 5,363.0 & 13,886.0 & 10,513 & 13,112 & \textbf{17,237.9} \\
    \end{tabular}
    \end{adjustbox}
    \caption{Raw scores of MMDQN  (averaged over 3 seeds) across all 55 Atari games starting with 30 no-op actions. Reference values are from  \cite{DBLP:conf/aaai/DabneyRBM18}. }
    \label{tab:raw_score_full}
\end{table*}

\end{document}